\numberwithin{equation}{section} 
\newcommand{\heavy}{\theta} %{\hat \theta^{HB}}
\title{How many Neurons do we need? \\ A refined Analysis for Shallow Networks \\ trained with Gradient Descent}
\author{Mike Nguyen\\
TU Braunschweig \\ 
\texttt{mike.nguyen@tu-braunschweig.de}
\and 
Nicole M\"ucke \\
TU Braunschweig \\
\texttt{nicole.muecke@tu-braunschweig.de} 
}
\date{\today}
\begin{document}

\maketitle

\begin{abstract}

We analyze the generalization properties of two-layer neural networks in the 
neural tangent kernel (NTK) regime, trained with gradient descent (GD). 
For early stopped GD  
we derive fast rates of convergence that are known to be minimax optimal 
in the framework of non-parametric regression in reproducing kernel Hilbert spaces.  
On our way, we precisely keep track of the number of hidden neurons 
required for generalization and improve over existing results. 
We further show that the weights during training 
remain in a vicinity around initialization, the radius being dependent on structural assumptions such as degree of 
smoothness of the regression function and eigenvalue decay of the integral operator associated to the NTK.       
\end{abstract}

{\bf Keywords:} Neural Tangent Kernel $\bullet$ Early Stopping  $\bullet$ Gradient Descent

%%%%%%%%%%%%%%%%%%%%%%%%%%%%%%%%%%%%%%%%%%%%%%%%%%%%%%%%%%%%%%%%%%%%%%%%%%%%%%%%%%%%%%%
%%%%%%%% SECTION 1
%%%%%%%%%%%%%%%%%%%%%%%%%%%%%%%%%%%%%%%%%%%%%%%%%%%%%%%%%%%%%%%%%%%%%%%%%%%%%%%%%%%%%%%

\section{Introduction}

The rapid advancement of artificial intelligence in recent years has been largely propelled 
by the remarkable capabilities of neural networks. These computational models have revolutionized 
numerous domains, including image recognition, natural language processing, and autonomous systems. 
The effectiveness of neural networks in solving complex tasks has led to their widespread adoption 
across academia and industry. 
%However, the profound understanding of the fundamental principles underlying neural network learning 
%remains an ongoing challenge. 
\\
Understanding why standard optimization algorithms often
find globally optimal solutions despite the intricate non convexity of training loss functions has
become a focal point of research. Furthermore, deep neural networks, despite their
vast number of parameters, tend to exhibit impressive generalization capabilities, achieving
high accuracy on unseen data \cite{neyshabur2019towards}. These optimization and generalization phenomena lie at
the heart of deep learning theory, presenting fundamental challenges.
%In the framework of non-parametric regression with least-squares loss, 
%\cite{kohler2021rate, langer2021analysis, kohler2016nonparametric, imaizumi2019deep, 
%bauer2019deep, schmidt2020nonparametric} derive rates of convergence for different activation functions. 

%{\bf NTK Regime.} 

In this paper, we explore the learning properties of shallow neural networks in the NTK regime, when trained 
with gradient descent. %We pay special attention to the number of neurons required to obtain minimax optimal 
%rates of convergence. 
It is studied in \cite{lee2019wide}, how wide neural networks behave during gradient descent training. 
In the limit of infinite width, these networks can be approximated as linear models via the first-order 
Taylor expansion around their initial parameters. Moreover, \cite{jacot2018neural} established that 
training a neural network with a specific parameterization is equivalent to employing a kernel method 
as the network's width approaches infinity. It is shown that gradient flow (GF) on the 
parameter space becomes kernel GF on the function space. 
This explains why local minima of the train error become global minima in the infinite width limit, 
see also \cite{venturi2019spurious, jentzen2021existence}. 
\\
A line of research 
\cite{daniely2016toward, belkin2018understand, wei2019regularization,arora2019fine, arora2019exact, 
lee2022neural, chen2020generalized} 
explored this kernel method analogy and 
demonstrated that, with adequate over-parameterization, a certain initialization scale, and an appropriate 
learning rate schedule, gradient descent effectively learns a linear classifier 
on top of the initial random features. 
\\
The works \cite{du2019gradient, allen2019convergence, allen2019convergence, zou2020gradient} 
investigate gradient descent convergence to global minima. They demonstrate that for i.i.d. 
Gaussian initialization, wide networks experience minimal parameter changes during training. 
This is key to the phenomenon where wide neural networks exhibit linear behavior in terms 
of their parameters during training. For a survey we also refer to \cite{golikov2022neural}.

%{\bf Number of Neurons.} 
Of particular interest is establishing optimal bounds for the generalization error with a 
minimal number of neurons required. 
Compared to the number of results for the train error, 
only a few investigations can be found for generalization,  
see e.g.  \cite{song2019quadratic, lai2023generalization2, arora2019fine} for 
shallow neural networks and \cite{cao2019generalization,zhang2020type} for deep neural networks. 
\\
%The most related works are \cite{E_2020} and \cite{nitanda2020optimal}. 
In \cite{E_2020}, the authors prove for GF that $O(n)$ many neurons are sufficient to obtain an 
optimal generalization bound of order $O(\sqrt{n})$. However they only trained the outer layer and had restrictive 
assumptions on the target function. The authors in \cite{nitanda2020optimal} prove fast rates of convergence 
for SGD in the case that the regression function belongs to the reproducing kernel Hilbert space (RKHS) 
associated to the NTK. However, exponentially many neurons are needed to obtain this result. 
Also closely related to our work is 
the work \cite{braun2021smoking}, analyzing the $L^2$-error of neural network regression estimates  
with one hidden layer with a logistic squasher, trained with gradient descent, under a smoothness assumption 
on the Fourier transform of the regression function. 
The authors show a rate of convergence of $\sqrt{n}$, 
up to a polylogarithmic factor after $n^{7/4}$ iterations (up to a polylog factor). 
In order to achieve this, the number of hidden neurons increases as $\sqrt{n}$. 
We refer to Table \ref{table3} for a comparison.

%%%%%%%%%%%%%%%%%%%%%%%%%%%%%%%%%%%%%%%%%%%%%%%%%%%%%%%%%%%%%%%%%%%%%%%%%%%%%%%%%%%%%%%%%%%%%%%%%

%%%%%%%%%%%%%%%%%%%%%%%%%%%%%%%%%%%%%%%%%%%%%%%%%%%%%%%%%%%%%%%%%%%%%%%%%%%%%%%%%%%%%%%
%%%%%%%%%%%%%%%%%%%%%%%%  Our contribution 
%%%%%%%%%%%%%%%%%%%%%%%%%%%%%%%%%%%%%%%%%%%%%%%%%%%%%%%%%%%%%%%%%%%%%%%%%%%%%%%%%%%%%%%

{\bf Our contribution.}
We improve the above results 
%of  \cite{nitanda2020optimal} 
in different directions: 
\begin{itemize}
\item We derive an early stopping time $T_n=\cO\left(n^{\frac{1}{2r+b}}\right)$ 
that leads to minimax optimal rates of convergence. This depends  
on the smoothness $r>0$ of the regression function and the (polynomial) decay rate 
$b \in (0,1]$ of the eigenvalues of the kernel integral operator associated to the NTK. Our results hold for 
so called {\it easy learning} problems, where $2r+b > 1$.  
 
\item We present a refined number of neurons that are needed for optimality of order 
$M_n \geq O\left( n^{\frac{2r}{2r+b}}\right)=  O(T^{2r}_n)$ for $r\geq\frac{1}{2}$, i.e. for well-specified cases where 
the regression function belongs to the RKHS and  
$M_n \geq O\left( n^{\frac{3-4r}{2r+b}}\right)=  O(T^{3-4r}_n)$ for $r \in (0,\frac{1}{2})$. 
The latter includes the case 
where the regression does not necessarily belongs to the RKHS associated to the NTK. 

\item We also overcome the saturation effect appearing in \cite{nitanda2020optimal} by providing 
fast rates of convergence for smooth objectives, i.e. for $r>1$. 

\item 
Furthermore, we prove that during GD with constant step size, in the well-specified case, the weights stay bounded 
if $M\geq O(T^{2r}_n)$ (for $r \geq 1/2$), up to a logarithmic factor. If $r \leq 1/2$, 
the weights are in a ball of radius $O(T^{1/2-r}_n)$, up to a logarithmic factor. 
To the best of our knowledge, previous work only had been able to bound the weights with an decaying step size or 
with exponentially many neurons. 

\item Notably, the number of hidden neurons that are sufficient to establish our results is comparable to the number 
of random features for learning in RKHSs, 
see e.g. \cite{rudi2017generalization, carratino2018learning, nguyen2023random}. 
\end{itemize}

\begin{table}[h]
\begin{center}
\begin{tabular}{|l|l|l|l|}
\hline References & Width $M$ & Iterations $T$ & Method \\
\hline \hline \cite{E_2020} & $O\left(n\right)$ & $O\left(\sqrt{n}\right)$ & GF \\
\hline \cite{nitanda2020optimal} & $O\left(\exp(n)\right)$ & $O\left(n\right)$& SGD  \\
\hline \cite{braun2021smoking} & $O\left(\sqrt n\right)$ & $O\left(n^{7/4}\right)$& GD  \\
\hline 
\hline Our work & $O\left(\sqrt{n}\right)$ & $O\left(\sqrt{n}\right)$& GD  \\
\hline 
\end{tabular}
\caption{\label{table3} Number of neurons and iterations needed to provide a generalization 
bound of order $O(n^{-\frac{1}{2}})$. }
\end{center}
\end{table}

%%%%%%%%%%%%%%%%%%%%%%%%%%%%%%%%%%%%%%%%%%%%%%%%%%%%%%%%%%%%%%%%%%%%%%%%%%%%%%%%%%%%%%%
%%%%%%%%%%%%%%%%%%%%%%%%  Organization 
%%%%%%%%%%%%%%%%%%%%%%%%%%%%%%%%%%%%%%%%%%%%%%%%%%%%%%%%%%%%%%%%%%%%%%%%%%%%%%%%%%%%%%%

{\bf Organization.} 
In Section \ref{sec:setting} we define the mathematical framework needed to present our main results in  
Section \ref{sec:main-results}. 
%Section \ref{sec:discussion} is devoted to a discussion with a more detailed comparison to related work.  
%Some numerical illustrations can be found in Section \ref{sec:numerics} while 
We defer all proofs to the Appendices. 

%\noindent 
{\bf Notation.} 
For two Hilbert spaces $\cH_1, \cH_2$ 
and a linear operator $A: \cH_1 \to \cH_2$, we write $A^*: \cH_2 \to \cH_1$ to denote the adjoint operator.  
If $\theta \in \cH_1$ we 
write $\theta \otimes \theta := \langle \cdot, \theta \ra \theta$ to denote the tensor product. 
For $n \in \mbn$, we write $[n]=\{1,...,n\}$.  For two positive sequences $(a_n)_n$, $(b_n)_n$ 
we write $a_n \lesssim b_n$ if $a_n \leq c b_n$
%, $a_n \gtrsim b_n$ if $a_n \geq c b_n$ 
for some $c>0$ and $a_n \simeq b_n$ if both $a_n \lesssim b_n$ and $b_n \lesssim a_n$. 
If $\mu$ is a finite measure on some set $\cX$, we denote the $L^2(\cX , \mu)$-norm of a function 
$f: \cX \to \mbr$ by $||f||_{L^2}:=\left( \int_{\cX} |f(x)|^2 d\mu\right)^{1/2}$. For a finite set 
$\{x_1, ..., x_n\} \subset \cX$ we denote the empirical $L^2$-norm as 
 $||f||_n:= \left( \frac{1}{n}\sum_{j=1}^n |f(x_j)|^2  \right)^{\frac{1}{2}}$.

%%%%%%%%%%%%%%%%%%%%%%%%%%%%%%%%%%%%%%%%%%%%%%%%%%%%%%%%%%%%%%%%%%%%%%%%%%%%%%%%%%%%%%%
%%%%%%%% SECTION 2
%%%%%%%%%%%%%%%%%%%%%%%%%%%%%%%%%%%%%%%%%%%%%%%%%%%%%%%%%%%%%%%%%%%%%%%%%%%%%%%%%%%%%%%

\section{Setup}%Problem Setting
\label{sec:setting}

In this section we provide the mathematical framework for our analysis.

\subsection{Two-Layer Neural Networks}

%We adopt the setting from \cite{nitanda2020optimal}. 

We let $\cX \subset \mbr^d$ be the input space and $\cY\subseteq [-C_Y,C_Y]$, $C_Y > 0$, be the output space. 
The unknown data distribution on the data space $\cZ=\cX \times \cY$ is denoted by $\rho$ 
while the marginal distribution on $\cX$ is denoted as 
$\rho_X$ and the regular conditional distribution on $\cY$ given $x \in \cX$ is 
denoted by $\rho(\cdot | x)$, see e.g. \cite{Shao_2003_book}.

Given a measurable function $g: \cX \to \mbr$ we further define the expected risk as 
\begin{equation}
\label{eq:expected-risk}
\cE(g) := \mbe[ \ell (g(X), Y) ]\;,
\end{equation}  
where the expectation is taken w.r.t. the distribution $\rho$ and $\ell: \mbr \times \cY \to \mbr_+$ is the least-squares loss 
$\ell(t, y)=\frac{1}{2}(t-y)^2$. It is known that the global minimizer  of $\cE$ over the set of all measurable functions is 
given by the regression function $g_\rho(x)= \int_{\cY} y \rho(dy|x)$.

The hypothesis class considered in this paper is given by the following set of two-layer neural networks: 
Let $M \in \mbn$ be the 
network width, i.e. the number of hidden neurons. The network parameters are denoted 
by $a=(a_1, ..., a_M)^T \in \mbr^M$, the parameter of the 
input layer are denoted by $B=(b_1 ,..., b_M) \in \mbr^{d \times M}$ and $c=(c_1, ..., c_M)^T \in \mbr^M$ is the bias. 
We condense 
all parameters in $\theta = (a, B, c) \in \Theta$, with 
$\Theta = \mbr^M \times \mbr^{d \times M} \times \mbr^M$ being the parameter space with the euclidean 
vector norm 
\[
||\theta ||_\Theta^2 =   ||a||_2^2 + \sum_{m=1}^M ||b_m||_2^2 + ||c||_2^2 =  ||a||_2^2 + ||B||_F^2 + ||c||_2^2,   
\]
for any $\theta \in \Theta$.

Given an activation function $\sigma: \mbr \to \mbr$, we finally consider the class 
\begin{eqnarray*}
\cF_{M} & := \left\{ g_\theta :\cX \to \mbr \; : \; g_\theta(x) =  
\frac{1}{\sqrt M} \sum_{m=1}^M a_m \sigma( \inner{b_m , x} + \gamma c_m) \;, \right. \\
& \quad \left. \theta =(a, B, c) \in \mbr^M \times \mbr^{d \times M} \times \mbr^M \;, \gamma \in [0,1] \right\}\;. 
\end{eqnarray*}

The activation $\sigma$ is supposed to satisfy the following assumption:

\begin{assumption}[Activation Function]
\label{ass:neurons} 
the activation $\sigma: \mbr \to \mbr$ is two times differentiable with Lipschitz continuous second derivative 
and there exists a constant $C_\sigma <\infty$, 
such that 
 $\|\sigma^{\prime}\|_\infty \leq C_\sigma$, $\|\sigma^{\prime\prime}\|_\infty \leq  C_\sigma$.
\end{assumption}

Our goal is to minimize the expected risk \eqref{eq:expected-risk} over the set $\cF_M$, i.e. 
\begin{equation*}
\min_{g \in \cF_M} \cE(g)\;.
\end{equation*}
Here,  the distribution $\rho$ is known only through an i.i.d. sample $((x_1, y_1), ..., (x_n, y_n)) \in (\cX \times \cY)^n$. 
Hence, we seek for a solution for the empirical risk minimization problem 
\[  \min_{g \in \cF_M} \hat \cE(g) \;, \quad \hat \cE(g) = \frac{1}{n}\sum_{j=1}^n \ell(g(x_j, y_j)) \;. \] 

We aim at analyzing the generalization properties of gradient descent, whose basic iterations are given by   

\begin{align*}
\heavy_{t+1} &= \heavy_t - \alpha \nabla_\theta \hat \cE(g_{\theta_t})  \\
&= \heavy_t - \frac{\alpha}{n}\sum_{j=1}^n \ell'(g_{\theta_t}(x_j ) , y_j)  \nabla g_{\theta_t}(x_j)\;,  
\end{align*}
with $\alpha > 0$ being the stepsize and for some initialization $\theta_0 \in \Theta$.

{\bf Initialization.} 
Similar as in \cite{nitanda2020optimal} we assume a symmetric initialization. 
The parameters for the output layer are initialized as 
$a_{m}^{(0)}=\tau$ for $m \in\left\{1, \ldots, {M}/{2}\right\}$ and 
$a_{m}^{(0)}=-\tau$ for $m \in\{{M}/{2}+1, \ldots, M\}$, for some $\tau >0$. 
Let $\mu_{0}$ be a uniform distribution on the sphere 
$\mathbb{S}^{d-1}_1=\{b \in \mathbb{R}^{d} \mid\|b\|_{2}=1\} \subset \mathbb{R}^{d}$. 
The parameters for the input layer are initialized as $b_{m}^{(0)}=b_{m+{M}/{2}}^{(0)}$ 
for $m \in\{1, \ldots, {M}/{2}\}$, where $(b_{m}^{(0)})_{m=1}^{{M}/{2}}$
are independently drawn from the distribution $\mu_{0}$. 
The bias parameters are initialized as $c_{m}^{(0)}=0$ for 
$m \in\{1, \ldots, M\}$. The aim of the symmetric initialization is to make 
an initial function $g_{\theta_0}=0$, 
where $\theta_{0}=\left(a^{(0)}, B^{(0)}, c^{(0)}\right)$. 
Note that this symmetric trick does not have an impact on the limiting NTK, see \cite{zhang2020type},  
and is just for theoretical simplicity. Indeed, we can relax the 
symmetric initialization by considering an additional error stemming from 
the nonzero initialization in the function space.

%%%%%%%%%%%%%%%%%%%%%%%%%%%%%%%%%%%%%%%%%%%%%%%%%%%%%%%%%%%%%%%%%%%%%%%%%%%%%%%%%%%%%
%%%%%%%%%%%%%%%%%%%%%%%%%%%%%%%%%%%%%%%%%%%%%%%%%%%%%%%%%%%%%%%%%%%%%%%%%%%%%%%%%%%%%

\subsection{Neural Tangent Kernel}

The connection between kernel methods and neural networks is established via the NTK, 
see \cite{jacot2018neural, lee2019wide}. 
The gradient of $g_\theta$ w.r.t. $\theta$  at initialization $\theta_0 \in \Theta $ defines a feature 
map $\Phi_M: \cX \to \Theta$ by 
\[  \Phi_M(x) = \nabla_\theta g_\theta(x) \mid _{\theta=\heavy_0} \;. \]
This defines a kernel via 
\begin{align*}
 K_M(x, x') &= \inner{ \Phi_M(x) , \Phi_M(x') }_\Theta \;\\
&=\frac{1}{M} \sum_{r=1}^{M} \sigma\left(b_{r}^{(0) \top} x\right) \sigma\left(b_{r}^{(0) \top} x^{\prime}\right)+\frac{\left(x^{\top} x^{\prime}
+\gamma^{2}\right)}{M} \sum_{r=1}^{M} (a_r^{(0)})^2\sigma^{\prime}\left(b_{r}^{(0) \top} x\right) \sigma^{\prime}\left(b_{r}^{(0) \top} x^{\prime}\right) \;, 
\end{align*}
for any $x , x' \in \cX$. By \cite[Theorem 4.21]{steinwart2008support}, this kernel 
defines a unique RKHS $\cH_{M}$, given by 
\[ 
\cH_M =\{h:\mathcal{X}\rightarrow\mathbb{R}| \,\,\exists \,\theta\in\Theta \,\,\,s.t.\,\,\, 
h(x)= \left\langle\nabla g_{\theta_{0}}(x), \theta\right\rangle_{\Theta}\} \;,
\] 

Note that by the assumptions \ref{ass:neurons} and \ref{ass:input}, 
we see $K_M(x,x')\leq 4+2c_\sigma^2\tau^2 \eqqcolon \kappa^2$ 
for any $x, x' \in \operatorname{supp}(\rho_X)$.

We can consider $K_M$ as a random approximation of a kernel $K_\infty$, the neural tangent kernel (NTK). 
This is defined as a proper limit as $M\to \infty$:
\[ K_{\infty}\left(x, x^{\prime}\right) \coloneqq \mathbb{E}_{b^{(0)}}\left[\sigma\left(b^{(0) \top} x\right) \sigma\left(b^{(0) \top} x^{\prime}\right)\right]+\tau^{2}\left(x^{\top} x^{\prime}+\gamma^{2}\right) \mathbb{E}_{b^{(0)}}\left[\sigma^{\prime}\left(b^{(0) \top} x\right) \sigma^{\prime}\left(b^{(0) \top} x^{\prime}\right)\right],\]
see  \cite{jacot2018neural} for more information. 
Again, this kernel defines a unique RKHS $\cH_\infty$.

%%%%%%%%%%%%%%%%%%%%%%%%%%%%%%%%%%%%%%%%%%%%%%%%%%%%%%%%%%%%%%%%%%%%%%%%%%%%%%%%%%%%%%%
%%%%%%%% SECTION 3
%%%%%%%%%%%%%%%%%%%%%%%%%%%%%%%%%%%%%%%%%%%%%%%%%%%%%%%%%%%%%%%%%%%%%%%%%%%%%%%%%%%%%%%

\section{Main Results}
\label{sec:main-results}

\subsection{Assumptions and Main Results}

In this section we formulate our assumptions and state our main results.

\begin{assumption}[Data Distribution]
\label{ass:input}
We assume that $|Y| \leq C_Y$ almost surely, for some $C_Y < \infty$. 
\end{assumption}

\vspace{0.2cm}

We let $\cL_\infty: L^2(\cX , \rho_X) \to L^2(\cX , \rho_X)$ denote the kernel integral operator 
associated to the NTK $K_\infty$. Note  that $\cL_\infty$ is bounded and  self-adjoint. Moreover, it is compact 
and thus has discrete spectrum $\{ \mu_j\}_j$, with $\mu_j \to 0$ as $j \to \infty$. By our assumptions, 
$\cL_\infty$ is of trace-class, i.e., has summable eigenvalues.

%The numbers $\mu_{j}$ are the positive eigenvalues of $\Sigma_\infty$ 
%satisfying $0<\mu_{j+1} \leq \mu_{j}$ for all $j>0$ and $\mu_{j} \searrow 0$.

\vspace{0.2cm}

\begin{assumption}[Source Condition]
\label{ass:source}
Let $R>0$, $r\geq 0$. 
We assume 
\begin{align}
g_\rho  = \mathcal{L}_\infty^r h_\rho \;, \label{hsource}
\end{align}
for some $h \in L^2(\cX , \rho_X)$, satisfying $||h_\rho||_{L^2} \leq R$. 
\end{assumption} 

This assumption characterizes the hypothesis space and relates to the regularity of the regression function $g_\rho$. 
The bigger $r$ is, the smaller the hypothesis space is, the stronger the assumption is, and the easier the learning 
problem is, as 
$\cL_\infty^{r_{1}}\left(L^{2}\right) \subseteq \mathcal{L}_\infty^{r_{2}}\left(L^{2}\right)$ 
if $r_{1} \geq r_{2}$. Note that $g_\rho \in \cH_\infty$ holds for all $r\geq \frac{1}{2}$.

The next assumption relates to the capacity of the hypothesis space.

\vspace{0.2cm}

\begin{assumption}[Effective Dimension]
\label{ass:dim}
For any $\lambda >0$ we assume 
\begin{align}
\cN_{\cL_\infty}(\lam):= \operatorname{tr}\left(\cL_\infty(\cL_\infty+\lambda I)^{-1}\right) \leq c_{b} \lambda^{-b}, \label {effecDim}
\end{align}
for some $b \in[0,1]$ and $c_{b}>0$. 
\end{assumption}

The number $\cN_{\cL_\infty}(\lam)$ is called {\it effective dimension} or {\it degrees of 
freedom} \cite{Caponetto}. It is related to covering/entropy number conditions, 
see \cite{steinwart2008support}. The condition (\ref{effecDim}) is naturally satisfied with $b=1$, 
since $\cL_\infty$ is a trace class operator which implies that its eigenvalues $\left\{\mu_{i}\right\}_{i}$ 
satisfy $\mu_{i} \lesssim i^{-1}$. Moreover, if the eigenvalues of $\cL_\infty$ satisfy a polynomial 
decaying condition $\mu_{i} \sim i^{-c}$ for some $c>1$, or if $\cL_\infty$ is of finite rank, then the 
condition (\ref{effecDim}) holds with $b=1 / c$, or with $b = 0$. The case $b = 1$ is referred to as the 
capacity independent case. A smaller $b$ allows deriving faster convergence rates for the studied algorithms. 

\vspace{0.3cm }
{\bf Analysis of NTK Spectrum.}
It is known that a certain eigenvalue decay of the kernel integral operator $\cL_\infty$ implies 
a bound on the effective dimension. Thus, assumptions on the decay of the effective dimension directly 
relate to the approximation ability of the underlying RKHS, induced by the NTK. 
\\    
So far, only a few results are known that shed light on the RKHSs that are induced by specific NTKs and 
activation functions. The authors in \cite{bietti2019inductive} analyze the inductive bias of learning in the 
NTK regime by analyzing the NTK and the associated function space. They characterize the RKHS of the NTK for 
two-layer ReLU networks by providing a spectral decomposition of the kernel. 
This decomposition is based on a Mercer decomposition in the basis of spherical harmonics. 
Their analysis reveals a polynomial decay of eigenvalues, which leads to improved approximation 
properties compared to other function classes based on the ReLU activation. 
\\
The authors in \cite{geifman2020similarity}, \cite{chen2020deep} show that the NTK for fully connected networks with ReLU activation
is closely related to the standard Laplace kernel. For
normalized data on the hypersphere both kernels have the same eigenfunctions and
their eigenvalues decay polynomially at the same rate, implying that their RKHSs include the same sets of functions. 
Finally, \cite{bietti2020deep} show that for ReLU activations, the kernels derived from deep
fully-connected networks have essentially the same approximation properties as their shallow two-layer counterpart, 
namely the same eigenvalue decay for the corresponding integral operator.
\\
Little is known for other activations beyond ReLU. A notable exception is \cite{fan2020spectra} that 
studies the eigenvalue distributions of the finite-width Conjugate Kernel and of the finite-width NTK 
associated to multi-layer feedforward neural networks for twice differentiable activations. In an asymptotic
regime, where network width is increasing linearly in sample size, under random initialization of the weights, 
and for input samples satisfying a notion of approximate pairwise orthogonality, they show that the 
eigenvalue distributions of the CK and NTK converge to deterministic limits.

%\note{
%check \cite{montufar2023characterizing}} for $L^2$-activations

\vspace{0.4cm}

{\bf Rates of convergence.}
Our first general result establishes an upper bound for the excess risk in terms of the 
stopping time $T$ and the number of neurons $M$, under the assumption that the weights remain in a vicinity of the initialization. 
The proof is outlined in Section \ref{outline-of-proof} and further detailed in Section \ref{app:proof-error-bounds}.

\vspace{0.2cm}

\begin{theorem}%[Informal]
\label{theo}
Suppose Assumptions \ref{ass:neurons}, \ref{ass:input},  \ref{ass:source} and \ref{ass:dim} are satisfied. 
Assume further that $\alpha\in (0,\kappa^{-2})$. Let $(\eps_T)_{T \geq 2}$ be a decreasing sequence of positive real numbers. 
Assume that for all $M\geq \widetilde{M}_0(\delta , T)$, with probability at least $1-\delta$
\begin{equation}
\label{eq:ball}
   \forall \;\; t \in [T]\;:\; \;\; \| \theta_t -\theta_0 \|_\Theta \leq B_\tau (\delta ,T) \;.  
\end{equation}   
There exist an $M_0:= M_0(\delta , \eps_T, d) > 0$  
and $n_0(\delta) \in \mbn$, such that for all $n \geq n_0$ and 
$M \geq M_0 $, with probability at least $1-\delta$ we have
\begin{align}
\label{eq:final-bound}
\| g_{\theta_T}- g_\rho\|_{L_2(\rho_x)}
&\leq \frac{C_\sigma B^3_\tau (\delta ,T)}{\sqrt{M}}  + \eps_T + C\cdot \log^3(6/\delta )\; T^{-r} \;,
\end{align}
with $C < \infty $, $C_\sigma < \infty$ independent of $n, M, T$.
\end{theorem}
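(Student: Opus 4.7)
The strategy is a three-way decomposition that separates the purely neural-network contribution from the kernel analysis. Set $\Phi_M(x) := \nabla_\theta g_\theta(x)|_{\theta=\theta_0}$ and define the NTK-linearized predictor $f_\theta(x) := g_{\theta_0}(x) + \langle \Phi_M(x),\theta-\theta_0\rangle_\Theta$. Let $(\theta_t^{lin})_t$ be the iterates of GD on the empirical risk $\hat\cE(f_\theta)$, initialized at $\theta_0$ with the same stepsize $\alpha$, and let $f_T^\infty$ denote the analogous kernel GD iterate in the population RKHS $\cH_\infty$. Then
\[
\| g_{\theta_T}-g_\rho\|_{L^2}
\;\leq\; \underbrace{\| g_{\theta_T}-f_{\theta_T^{lin}}\|_{L^2}}_{\text{(I) linearization}}
\;+\; \underbrace{\| f_{\theta_T^{lin}}-f_T^\infty\|_{L^2}}_{\text{(II) random features}}
\;+\; \underbrace{\| f_T^\infty-g_\rho\|_{L^2}}_{\text{(III) kernel GD bias/variance}}.
\]
My goal is to show these terms are controlled by the three summands of \eqref{eq:final-bound} respectively.

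\textbf{Term (I).} A direct computation on the two-layer architecture, using Assumption~\ref{ass:neurons}, yields $\|\nabla_\theta g_\theta(x) - \Phi_M(x)\|_\Theta \lesssim C_\sigma \|\theta-\theta_0\|_\Theta/\sqrt{M}$ and a quadratic Taylor remainder $|g_\theta(x) - f_\theta(x)|\lesssim C_\sigma\|\theta-\theta_0\|_\Theta^2/\sqrt{M}$. Under the ball assumption \eqref{eq:ball}, comparing the two GD recursions step by step, the per-iteration discrepancy is $O(\alpha\,C_\sigma B_\tau^2(\delta,T)/\sqrt{M})$. A Gronwall-type unrolling over $T$ iterations, using $\alpha T \lesssim B_\tau(\delta,T)$ (a consequence of \eqref{eq:ball} applied to the iterate drift), yields the cubic bound $C_\sigma B_\tau^3(\delta,T)/\sqrt{M}$.

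\textbf{Terms (II) and (III).} For (II), $K_M$ is a Monte-Carlo approximation of $K_\infty$ with $M$ i.i.d.\ directions drawn from $\mu_0$. Standard operator-Bernstein concentration (in the spirit of \cite{rudi2017generalization,nguyen2023random}) shows that, for $M\geq M_0(\delta,\eps_T,d)$, the covariance operators of the two feature maps are close in a sense strong enough to propagate through $T$ GD steps and produce an $L^2$-discrepancy at most $\eps_T$; this is where the free parameter $\eps_T$ and the separate lower bound $M_0(\delta,\eps_T,d)$ enter. For (III), $f_T^\infty$ is a classical kernel GD estimate in $\cH_\infty$, and a spectral-filtering bias/variance analysis under Assumptions~\ref{ass:source} and \ref{ass:dim} (combined with Bernstein-type concentration of the empirical integral operator, the noise, and the empirical effective dimension — which account for the $\log^3(6/\delta)$ factor) gives $\|f_T^\infty-g_\rho\|_{L^2}\lesssim \log^3(6/\delta)\,T^{-r}$ in the regime $n\geq n_0(\delta)$ relevant for the stopping time $T$ of Theorem~\ref{theo}.

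\textbf{Main obstacle.} The delicate step is (I): showing that the neural-network trajectory tracks its NTK linearization \emph{uniformly} over all $T$ iterations without losing an extra factor of $T$. The subtlety is that the feature map $\nabla_\theta g_\theta(x)$ itself drifts by $O(B_\tau/\sqrt{M})$, so naively summing the quadratic Taylor remainder would give a $T \cdot B_\tau^2/\sqrt{M}$ bound. The refined cubic bound $B_\tau^3/\sqrt{M}$ reflects the fact that the linearized dynamics behave as a bounded-spectrum operator on $\Theta$ and that both trajectories are confined to the common ball of radius $B_\tau(\delta,T)$ by \eqref{eq:ball}; this confinement lets one absorb the extra $T$ into the ball radius via $\alpha T \lesssim B_\tau$, rather than paying for it outright. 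Getting this accounting clean — and in particular isolating all dependence on the width $M$ in the single prefactor $C_\sigma B_\tau^3/\sqrt{M}$ — is the core technical work of the proof.
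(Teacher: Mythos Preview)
Your decomposition is not the one the paper uses, and your handling of term (I) contains a genuine gap.

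The paper inserts a different intermediate object: it defines $h_T(x)=\langle\Phi_M(x),\theta_T-\theta_0\rangle_\Theta$, i.e.\ the linear model evaluated at the \emph{actual} neural-network parameters $\theta_T$, not at a separate linearized trajectory $\theta_T^{lin}$. With this choice the first term $\|g_{\theta_T}-\cS_M h_T\|_{L^2}$ is literally the Taylor remainder and is bounded by $C_\sigma B_\tau^3/\sqrt{M}$ in one line from the ball condition \eqref{eq:ball}. The second term $\|\cS_M(h_T-f_T^M)\|_{L^2}$, where $f_T^M$ is kernel GD in $\cH_M$ (your $f_{\theta_T^{lin}}$), is the hard one: this is where the $\eps_T$ in \eqref{eq:final-bound} comes from, and it is proved by a careful induction over $T$ (Theorem~\ref{prop:second-term}) that requires $M\geq M_0(\delta,\eps_T,d)$. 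The third term is $\|\cS_M f_T^M-g_\rho\|_{L^2}$, handled directly by random-feature GD results from \cite{nguyen2023random}; the paper never passes through $\cH_\infty$ for the iterates.

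Your term (I) bundles the paper's first two terms together and tries to get $B_\tau^3/\sqrt{M}$ for the whole thing via a Gronwall argument. The step that fails is your claim ``$\alpha T\lesssim B_\tau$ (a consequence of \eqref{eq:ball} applied to the iterate drift)''. This is false: \eqref{eq:ball} says the parameters stay in a ball of radius $B_\tau$, but it gives no control on $\alpha T$. In the regime of Corollary~\ref{cor:weights} with $r\geq 1/2$ one has $B_\tau\asymp\log T$ while $\alpha T\asymp T$, so $\alpha T\lesssim B_\tau$ is badly violated. A naive unrolling of $\|\theta_t-\theta_t^{lin}\|$ picks up a per-step error of order $\alpha B_\tau^2/\sqrt{M}$ and sums to $\alpha T\,B_\tau^2/\sqrt{M}$, not $B_\tau^3/\sqrt{M}$. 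Absorbing the extra factor of $T$ is exactly the content of the paper's induction in Theorem~\ref{prop:second-term}, and it is paid for by allowing the free sequence $\eps_T$ and letting $M_0$ depend on $\eps_T$ --- not by a bound on $\alpha T$. So your assignment of the three summands of \eqref{eq:final-bound} to your three terms is off by one: $\eps_T$ belongs to the $h_T$ vs.\ $f_T^M$ comparison (inside your (I)), not to an $\cH_M$ vs.\ $\cH_\infty$ comparison.
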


\vspace{0.2cm}

We immediately can derive the rates of convergence by balancing the terms on the right 
hand side in \eqref{eq:final-bound}. 

\vspace{0.2cm}

\begin{corollary}[Rate of Convergence]
\label{cor:rates}
Let the assumptions of Theorem \ref{theo} be satisfied and choose $\eps_T = T^{-r}$, $T_n = n^{\frac{1}{2r+b}}$ 
and $2r + b >1$.  
There exist an  $n_0 \in \mbn$, depending on $\delta, r, b$, such that for all $n \geq n_0$, 
with probability at least $1-\delta$ we have 
\begin{align*}
\| g_{\theta_{T_n}}- g_\rho\|_{L_2(\rho_x)}
&\leq  C\cdot \log^3(6/\delta )\; n^{-\frac{r}{2r+b}} \;,
\end{align*}
provided that 
\begin{align*}
M\geq d^{\frac{5}{2}} \tilde{C} \cdot  \log^6(T_n) \cdot 
\begin{cases}
   \log^{10}(96/\delta) \; T_n^{3-4r}    &: r\in (0,\frac{1}{2}) \\
    T_n^{2r}   &: r \in [\frac{1}{2},\infty) \;. 
\end{cases}
\end{align*} 
Here, the constants $C< \infty$, $\tilde C < 0$ depend on $\kappa, \alpha, r, b$, but not on $n$. 
\end{corollary}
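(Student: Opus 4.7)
The plan is to obtain Corollary~\ref{cor:rates} as a direct consequence of Theorem~\ref{theo} by plugging in $\eps_T = T^{-r}$ and $T_n = n^{1/(2r+b)}$ into the error decomposition~(\ref{eq:final-bound}), and then choosing $M$ large enough so that the width-dependent term $C_\sigma B_\tau^3(\delta,T)/\sqrt{M}$ is dominated by the deterministic bias term of order $T_n^{-r}$.

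First, I would substitute the prescribed choices into the right-hand side of~(\ref{eq:final-bound}) to obtain
\[
\| g_{\theta_{T_n}}- g_\rho\|_{L_2(\rho_x)}
\leq \frac{C_\sigma B^3_\tau (\delta ,T_n)}{\sqrt{M}} + T_n^{-r} + C \log^3(6/\delta)\, T_n^{-r}.
\]
Since $T_n^{-r} = n^{-r/(2r+b)}$, the last two summands already combine to the target rate, up to an additive constant absorbed in $C$.

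Next, I would use the weight-stability radius $B_\tau(\delta,T_n)$ furnished by the companion result behind hypothesis~(\ref{eq:ball}) of Theorem~\ref{theo}. As advertised in the introduction, the radius behaves as $B_\tau(\delta,T_n) \lesssim \mathrm{polylog}(T_n,1/\delta)$ when $r \geq 1/2$, and as $B_\tau(\delta,T_n) \lesssim T_n^{1/2-r}\,\mathrm{polylog}(T_n,1/\delta)$ when $r \in (0,1/2)$. Requiring $B_\tau^3(\delta,T_n)/\sqrt{M} \lesssim T_n^{-r}$ then leads, after a short algebraic rearrangement, to
\[
\sqrt{M} \;\gtrsim\; B_\tau^3(\delta,T_n)\, T_n^{r}
\;\lesssim\;
\begin{cases}
 T_n^{r} \cdot \mathrm{polylog} & r \geq 1/2, \\[2pt]
 T_n^{3/2 - 2r} \cdot \mathrm{polylog} & r \in (0,1/2),
\end{cases}
\]
so that $M \gtrsim T_n^{2r}$ in the well-specified regime and $M \gtrsim T_n^{3-4r}$ in the misspecified regime, matching the two cases in the corollary.

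Finally, I would verify the admissibility conditions of Theorem~\ref{theo}. The requirements $M \geq \widetilde{M}_0(\delta,T_n)$ and $M \geq M_0(\delta,\eps_{T_n},d)$ are automatically satisfied once $n \geq n_0(\delta,r,b)$ is large enough; this is also where the $d^{5/2}$ factor from Theorem~\ref{theo} is inherited. The assumption $2r+b > 1$ ensures we land in the easy-learning regime where the exponent $r/(2r+b)$ is non-trivial. The main obstacle is not conceptual but bookkeeping: one must carefully consolidate the polylogarithmic factors coming from (i) the weight-stability radius $B_\tau$, (ii) the probability bounds behind $M_0$ and $\widetilde{M}_0$, and (iii) the balancing step, into the single compact bound $\log^6(T_n)\log^{10}(96/\delta)$ stated in the corollary. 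Aside from this careful accounting, the result follows by purely algebraic balancing of the three terms in~(\ref{eq:final-bound}).
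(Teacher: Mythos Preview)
Your proposal is correct and follows exactly the approach the paper indicates: the paper simply states that the corollary follows ``immediately'' by balancing the three terms in~\eqref{eq:final-bound}, and you have spelled out precisely this balancing, including the case split on $r$ via the weight-radius bounds of Corollary~\ref{cor:weights}. The only cosmetic issue is the chained inequality $\sqrt{M} \gtrsim B_\tau^3 T_n^r \lesssim \ldots$, which reads awkwardly; it would be clearer to first bound $B_\tau^3 T_n^r$ and then state the resulting sufficient condition on $M$.
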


\vspace{0.3cm}

Up to a logarithmic factor, the rate of convergence in Corollary \ref{cor:rates} is known to be minimax optimal 
in the RKHS framework \cite{Caponetto,Muecke2017op.rates}. Compared to \cite{nitanda2020optimal}, who 
establish rates of convergence in the same setting for SGD, we are able to circumvent the saturation observed there, i.e. our result holds 
for any $r>0$, satisfying the constraint $2r+b > 1$ (the {\it easy learning regime}). In contrast, 
the rates in  \cite{nitanda2020optimal} are optimal only in the case where $r \in [1/2, 1]$. 

Notably, the number of hidden neurons that are sufficient to establish this rate is comparable to the number of random features 
for learning in RKHSs, see e.g. \cite{rudi2017generalization, carratino2018learning, nguyen2023random}.

%%%%%%%%%%%%%%%%%%%%%%%%%%%%%%%%%%%%%%%%%%%%%%%%%%%%%%%%%%%%%%%%%%%%%%%%%

\vspace{0.4cm}

{\bf The weights barely move.}
Our next result shows that the Assumption \eqref{eq:ball} is indeed satisfied and the weights remain in a vicinity of the initialization 
$\theta_0$. The proof is provided in Appendix \ref{app:weight-bounds}.

\vspace{0.2cm}

\begin{theorem}[Bound for the Weights]
\label{theo:weights-not-moving}
Let $\delta \in (0,1]$ and $T \geq 3$. 
There exists an $\widetilde{ M}_0(\delta , T) \in \mbn$, defined in \eqref{eq:M-bounded}, 
such that for all $M \geq \widetilde{ M}_0(\delta , T)$, 
with $\rho^{\otimes n}$-probability at least $1-\delta$ it holds 
\[
 \forall \;\; t \in [T]\;:\; \;\; \| \theta_t -\theta_0 \|_\Theta \leq B_\tau\;,  
\] 
where 
\[ B_\tau := B_\tau(\delta , T) := 80\cdot \log(T)\cdot \cB_\delta(1/T)\;,  \]
with % $\cB_\delta(\lam )$ defined in \eqref{eq:def-Bdelta}, 
\[ \cB_\delta(\lam ) :=  \frac{3}{2} +  14\kappa\log\left( \frac{60}{\delta}\right)
\sqrt{\frac{ \mathcal{N}_{\mathcal{L}_{\infty}}(\lambda)\log(60/\delta) }{ \lambda n}}  \]
%for some constant $B_\tau \geq 1+ \tau$, 
and for any $n \geq \tilde n_0$, given in \eqref{eq:n-final-weights}.
\end{theorem}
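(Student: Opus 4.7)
The plan is to compare the nonlinear GD iterates $\theta_t$ with those of GD on the \emph{linearization} of $g_\theta$ around $\theta_0$. Thanks to the symmetric initialization one has $g_{\theta_0}\equiv 0$, and the feature map $\Phi_M(x)=\nabla_\theta g_\theta(x)|_{\theta_0}$ defines the empirical NTK $K_M$ as well as the empirical covariance operator $\widehat T_M = \tfrac{1}{n}\sum_{j=1}^n \Phi_M(x_j)\otimes \Phi_M(x_j)$ on $\Theta$. Let $\tilde\theta_t$ denote the GD iterates on the linearized empirical risk, i.e.\ using the surrogate $f_\theta(x)=\langle \Phi_M(x),\theta-\theta_0\rangle_\Theta$ in place of $g_\theta$, starting at $\tilde\theta_0=\theta_0$. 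Since $f_\theta$ is linear in $\theta$, the $\tilde\theta_t$ admit a closed-form spectral expression in terms of $(I-\alpha\widehat T_M)^t$, which reduces the first half of the problem to standard kernel GD analysis.

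For the linearized iterates I would bound $\|\tilde\theta_t-\theta_0\|_\Theta$ by spectral calculus with effective regularization $\lambda=1/(\alpha T)$. Splitting $\tilde\theta_t-\theta_0$ into an approximation and a sample-noise term, the former is controlled by Assumption~\ref{ass:input} (this is the constant $3/2$ appearing in $\mathcal{B}_\delta$), while the latter is handled by Bernstein concentration in Hilbert space on the noise functional $\frac{1}{n}\sum_j(y_j-g_\rho(x_j))\Phi_M(x_j)$, producing the $\kappa\log(1/\delta)\sqrt{\mathcal{N}_{\mathcal{L}_\infty}(\lambda)/(\lambda n)}$ term. The replacement of $\mathcal{N}_{\widehat T_M}(\lambda)$ by $\mathcal{N}_{\mathcal{L}_\infty}(\lambda)$ uses operator-norm concentration of $\widehat T_M$ around $\mathcal{L}_\infty$, which imposes the first requirement $M\gtrsim \mathrm{poly}(T,\log(1/\delta))$ via Bernstein applied to the finite-width rank-one random tensors $\Phi_M(x_j)\otimes \Phi_M(x_j)$. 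Together this yields, on a good event of probability at least $1-\delta/2$, the bound $\|\tilde\theta_t-\theta_0\|_\Theta \lesssim \log(t)\,\mathcal{B}_\delta(1/T)$ uniformly in $t\le T$.

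For the nonlinear-to-linear transfer, Assumption~\ref{ass:neurons} combined with the $1/\sqrt M$ prefactor in $g_\theta$ yields the pointwise Taylor remainders
\[
|g_\theta(x)-\langle \Phi_M(x),\theta-\theta_0\rangle_\Theta|\lesssim M^{-1/2}\|\theta-\theta_0\|_\Theta^2,\qquad \|\nabla g_\theta(x)-\Phi_M(x)\|_\Theta \lesssim M^{-1/2}\|\theta-\theta_0\|_\Theta,
\]
uniformly on $\operatorname{supp}(\rho_X)$. Subtracting the nonlinear and linear recursions and plugging in these estimates, the difference $\Delta_t := \theta_t-\tilde\theta_t$ satisfies a perturbed kernel-GD recursion whose driving term is of order $M^{-1/2}$ times polynomials in $\|\theta_t-\theta_0\|_\Theta$. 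A bootstrap over the stopping set $\mathcal{T}=\{t\le T:\|\theta_s-\theta_0\|_\Theta\le B_\tau \text{ for all } s\le t\}$ then closes the induction provided $\widetilde M_0(\delta,T)$ is large enough that the $M^{-1/2}$ term is absorbed into $\log(T)\,\mathcal{B}_\delta(1/T)$.

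The hard part is this final bootstrap. Carried out naively via Gronwall, the compounded linearization error grows as $(1+\alpha\kappa^2)^T$ and forces $M$ to be exponential in $T$, which is precisely the defect of earlier works. The trick is to unroll the $\Delta_t$-recursion through the kernel-GD filter $(I-\alpha\widehat T_M)^{t-s}$, whose operator norm is uniformly bounded by $1$ for $\alpha\in(0,\kappa^{-2})$, and to absorb the $\|\theta_s-\theta_0\|_\Theta$ factors using the standing induction hypothesis $\|\theta_s-\theta_0\|_\Theta\le B_\tau$. This yields only a polynomial blow-up in $T$, identifies the required $\widetilde M_0(\delta,T)$, and closes the bootstrap with the claimed radius $80\log(T)\,\mathcal{B}_\delta(1/T)$ on an event of probability at least $1-\delta$.
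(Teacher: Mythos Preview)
Your overall strategy---unroll the iterates through the contractive filter $(I-\alpha\widehat{\cC}_M)^t$, split into a ``linearized GD'' part (signal plus noise) and a nonlinear remainder of order $M^{-1/2}$, and close a bootstrap over $t\in[T]$---is exactly the skeleton of the paper's proof. The paper does not introduce an auxiliary linearized iterate $\tilde\theta_t$ but instead writes the single recursion
\[
\theta_{T+1}-\theta_0=\alpha\sum_{t=0}^{T}(Id-\alpha\widehat{\cC}_M)^t\bigl(\zeta^{(1)}_{T-t}+\cdots+\zeta^{(5)}_{T-t}\bigr),
\]
where $\zeta^{(4)}=\widehat{\cZ}_M^*\by-\cZ_M^*g_\rho$ and $\zeta^{(5)}=\cZ_M^*g_\rho$ play the role of your noise and approximation terms, and $\zeta^{(1)},\zeta^{(2)},\zeta^{(3)}$ are the three Taylor-type corrections you pack into $\Delta_t$. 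So the decomposition is a refinement of yours, not a different idea.

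There is, however, a genuine gap in your handling of the approximation term. You claim it ``is controlled by Assumption~\ref{ass:input} (this is the constant $3/2$ appearing in $\cB_\delta$)''. Neither part of this is right. With only $|Y|\leq C_Y$, the best uniform bound on the signal part of the linearized iterates is
\[
\Bigl\|\alpha\sum_{s<t}(I-\alpha\widehat{\cC}_M)^s\widehat{\cZ}_M^*\bar g_\rho\Bigr\|_\Theta \lesssim C_Y\sqrt{\alpha t},
\]
obtained from $\tfrac{1-(1-\alpha\mu)^t}{\mu}\leq\alpha t$, which grows like $\sqrt T$ and would destroy the $\log(T)\,\cB_\delta(1/T)$ radius. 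The paper instead invokes the \emph{source condition} (Assumption~\ref{ass:source}) and a chain of operator comparisons $\|\widehat{\cC}_{M,\lambda}^{-1}\cC_{M,\lambda}\|\cdot\|\cC_{M,\lambda}^{-1}\cZ_M^*\cL_{M,\lambda}^{1/2}\|\cdot\|\cL_{M,\lambda}^{-1/2}\cL_{\infty,\lambda}^{1/2}\|\cdot\|\cL_{\infty,\lambda}^{-1/2}\cL_\infty^r h_\rho\|_{L^2}$ to get the $\log(T)\,\cB_\delta(\lambda)$ bound for $\zeta^{(5)}$. The constant $3/2$ you singled out is not the approximation term at all; it is the leading constant in the operator-concentration estimate $\|\widehat{\cC}_{M,\lambda}^{-1}\cC_{M,\lambda}\|\leq\tfrac32+\ldots$ (Proposition~\ref{Opbound5}), which multiplies \emph{both} the signal and noise contributions.

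Two further points where the paper is sharper than your sketch suggests. First, for the noise$\times$gradient-difference correction $\zeta^{(2)}$ the paper cannot use a plain Bernstein bound because the summands depend on $\theta_t$; it uses a uniform Rademacher-complexity argument over the ball $Q_{\theta_0}(B_\tau)$ (Proposition~\ref{prop:uniform-gen-bound} and Corollary~\ref{cor:Rademacher-Gclass}). Second, the correction $\zeta^{(1)}$ involves $\|g_{\theta_s}-g_\rho\|_n$, which the paper does not crude-bound by $\kappa B_\tau+C_Y$ but controls via Proposition~\ref{prop:emp-ltwo}; this in turn requires the bound $\hat U_t(T)\leq\eps_T/2$ from Theorem~\ref{prop:second-term}, so the weight bound and the error-decomposition bound~II are proved by a \emph{joint} induction in $T$. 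Your bootstrap would still close with the cruder estimate, but at the cost of a larger $\widetilde M_0(\delta,T)$ than the paper obtains.
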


\vspace{0.3cm}

\begin{corollary}[Refined Bounds]
\label{cor:weights}
Suppose the assumptions of Theorem \ref{theo:weights-not-moving} are satisfied. Let $\lam_n = T_n^{-1}$, with 
$ T_n = n^{\frac{1}{2r +b}}$, $2r+b>1$ and set $\eps_T = T^{-r}$. 

\begin{enumerate}
\item Let $r \geq \frac{1}{2}$ and $n \geq n_0$, for some $n_0 \in \mbn$ depending on $\delta, r, b$. 
With probability at least $1-\delta$
\begin{equation}
\label{eq:ball-1}
 \sup_{t \in [T]} || \theta_ t  - \theta_0||_\Theta  
\leq 160 \cdot \log( T_n ) = 160 \cdot \log( n^{\frac{1}{2r +b}} ) \;. 
\end{equation}
The number of neurons required\footnote{We can improve the factor of $d^5$ at the expense of increasing the sample complexity. } is 
\[ M \geq C_{\kappa,\sigma, \alpha}\; d^5 \log^4(T_n) T_n^{2r}  \;. \]

\item Let $r \leq \frac{1}{2}$. With probability at least $1-\delta$ 
\begin{align*}
 B_\tau(\delta , T_n)
&\leq 1200\cdot \kappa \log^{3/2}(60/\delta)\; \log(T_n)\;  T_n^{1/2-r}\\
&=  1200\cdot \kappa \log^{3/2}(60/\delta)\; \log\left(n^{\frac{1}{2r +b}}\right) 
 \; n^{\frac{1-2r}{2(2r +b)}} \;.  
\end{align*} 
This holds if we choose 
\[ M \geq d^5\log^4(T_n)T_n^{3-4r} \;.\]
\end{enumerate}
\end{corollary}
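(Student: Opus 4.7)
The plan is to specialize Theorem 3.4 to the choices $\lam_n = T_n^{-1}$ and $T_n = n^{1/(2r+b)}$, and then simplify the factor $\cB_\delta(1/T_n)$ using Assumption 3.4. The argument is essentially a direct computation, but the exponents must be tracked carefully to separate the two regimes $r \geq 1/2$ and $r \leq 1/2$.

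First I would invoke the effective-dimension bound $\cN_{\cL_\infty}(\lam) \leq c_b \lam^{-b}$ to control the key ratio inside $\cB_\delta$:
\[
\frac{\cN_{\cL_\infty}(1/T_n)}{(1/T_n)\,n} \;\leq\; c_b\,\frac{T_n^{b+1}}{n} \;=\; c_b\, n^{(1-2r)/(2r+b)} \;=\; c_b\, T_n^{1-2r}\;,
\]
where the last equality uses the defining relation $T_n = n^{1/(2r+b)}$. Taking the square root and plugging back into the definition of $\cB_\delta$ from Theorem \ref{theo:weights-not-moving} gives
\[
\cB_\delta(1/T_n) \;\leq\; \tfrac{3}{2} \;+\; 14\kappa \sqrt{c_b}\log^{3/2}(60/\delta)\; T_n^{(1-2r)/2}\;.
\]

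Next I would split according to the sign of $1-2r$. If $r \geq 1/2$, the power $T_n^{(1-2r)/2}$ is bounded by $1$ (and in fact tends to $0$ whenever $r > 1/2$). Hence for $n \geq n_0(\delta,r,b)$ sufficiently large, the second term above is at most $1/2$, so $\cB_\delta(1/T_n) \leq 2$. Multiplying by the prefactor $80\log(T_n)$ yields $B_\tau(\delta,T_n) \leq 160\log(T_n)$, establishing \eqref{eq:ball-1}. If instead $r \leq 1/2$, the factor $T_n^{(1-2r)/2} \geq 1$ dominates the constant $\tfrac{3}{2}$ for $n$ large, so $\cB_\delta(1/T_n)$ is of order $\log^{3/2}(60/\delta)\, T_n^{1/2-r}$; multiplying by $80\log(T_n)$ and bookkeeping the constants gives $B_\tau(\delta,T_n) \leq 1200\kappa \log^{3/2}(60/\delta) \log(T_n)\, T_n^{1/2-r}$, as claimed.

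It remains to verify that the stated $M$ satisfies the condition $M \geq \widetilde M_0(\delta, T_n)$ of Theorem \ref{theo:weights-not-moving}. The threshold $\widetilde M_0$ defined in \eqref{eq:M-bounded} is expected to scale polynomially in $B_\tau(\delta, T_n)$ (most plausibly as $B_\tau^4$), together with a $d^5$ factor and a linear factor in $T_n$. Substituting $B_\tau = O(\log T_n)$ in the well-specified regime $r \geq 1/2$ then reproduces $M \geq C_{\kappa,\sigma,\alpha}\,d^5\log^4(T_n)\, T_n^{2r}$, while substituting $B_\tau = O(\log(T_n)\, T_n^{1/2-r})$ in the regime $r \leq 1/2$ reproduces $M \geq d^5\log^4(T_n)\, T_n^{3-4r}$; reassuringly, the two exponents collapse to $1$ at $r = 1/2$. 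The main obstacle will be to extract the precise exponent of $B_\tau$ and $T_n$ from \eqref{eq:M-bounded} so that these two bounds come out exactly — once this is done, the corollary follows by the above substitutions and the case split on the sign of $1-2r$.
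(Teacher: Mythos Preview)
Your treatment of the $B_\tau$ bound is correct and matches the paper almost line for line: you plug $\cN_{\cL_\infty}(\lam)\le c_b\lam^{-b}$ into $\cB_\delta(1/T_n)$, obtain $\cB_\delta(1/T_n)\le \tfrac32+14\kappa\sqrt{c_b}\log^{3/2}(60/\delta)\,T_n^{(1-2r)/2}$, and then split on the sign of $1-2r$. That part needs no change.

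The gap is in your justification of the neuron threshold. Your heuristic ``$\widetilde M_0$ scales like $B_\tau^4\cdot d^5\cdot T_n$'' happens to give the right answer for $r\le\tfrac12$ (indeed $B_\tau^4\sim \log^4(T_n)T_n^{2-4r}$, times $T_n$, yields $T_n^{3-4r}$), but it is wrong for $r>\tfrac12$: with $B_\tau=O(\log T_n)$ you would only get $d^5\log^4(T_n)\,T_n$, not $d^5\log^4(T_n)\,T_n^{2r}$. The factor $T_n^{2r}$ does \emph{not} come from a power of $B_\tau$ at all. If you look at \eqref{eq:M-bounded}, $\widetilde M_0$ is the maximum of six pieces $M_0,\ldots,M_5$; only $M_3$ (from \eqref{eq:req-3}) behaves like $B_\tau^4\,T_n$. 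The dominant piece for $r\ge\tfrac12$ is $M_0$, defined in \eqref{eq:assumpt-M-n-2}, which encodes the neuron requirement for the random-feature approximation bound $\|\Sigma_M^{s}(f_t^M-f_t^*)\|\lesssim(\alpha t)^{-(r+s-1/2)}$ used inside Proposition~\ref{prop:emp-ltwo}; this scales as $T_n^{2r}$ (cf.\ \eqref{eq:neurons}) and is independent of $B_\tau$.

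The paper's proof therefore does not argue via a single $B_\tau^4 T_n$ ansatz: it substitutes $\lam=1/T_n$, $\eps_T=T^{-r}$ and $B_\tau$ into each of $M_0,\ldots,M_5$ separately, simplifies each one, and then takes the maximum. You should do the same; once you track $M_0$ explicitly you will see where $T_n^{2r}$ enters, and the two regimes will again agree at $r=\tfrac12$ as you noted.
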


%%%%%%%%%%%%%%%%%%%%%%%%%%%%%%%%%%%%%%%%%%%%%%%%%%%%%%%%%%%%%%%%%%%%%%%%%%%%%%%
%%%%%%%%%%%%%%%%%%%%%%%%%%%%%%%%%%%%%%%%%%%%%%%%%%%%%%%%%%%%%%%%%%%%%%%%%%%%%%%

\subsection{Outline of Proof} 
\label{outline-of-proof}

Our proof is based on a suitable error decomposition. To this end, 
we further introduce additional linearized iterates in $\cH_M$:

\begin{align}
\label{eq:def-recurions}
%f_{t+1} &= f_t - \frac{\alpha}{n}\sum_{j=1}^n \ell'(f_t(x_j ) , y_j)K_\infty(x_j , \cdot) \in \cH_\infty \;, \\
f_{t+1}^M &=  f_t^M - \frac{\alpha}{n}\sum_{j=1}^n \ell'(f_t^M(x_j ) , y_j)K_M(x_j , \cdot)   \;, \\ 
h_t &=  \left\langle\nabla g_{\theta_{0}}(x), \theta_{t}-\theta_{0}\right\rangle_{\Theta} \;,
\end{align}
with initialization $f^M_0=h_0=0$.

We may split 

\begin{align}
\label{errordecomp}
\|g_{\theta_T} - g_\rho\|_{L^2}
&\leq 
\|g_{\theta_T}  - \mathcal{S}_M h_T\|_{L^2} + 
\|\mathcal{S}_M (h_T - f_T^M )\|_{L^2}  + 
\|\mathcal{S}_Mf_T^M-  g_\rho\|_{L^2}  \;,
\end{align}
where $\mathcal{S}_M : \cH_M \hookrightarrow L^2(\cX , \rho_X)$ is the inclusion of $\cH_M$ into $L^2(\cX , \rho_X)$.

\vspace{0.3cm}

For the first error term in \eqref{errordecomp} we use a Taylor expansion in $\theta_{t}$ around the initialization 
$\theta_0$. For any $x \in \mathcal{X}$ and $t \in [T]$, we have
\begin{align}
\label{eq:taylor}
g_{\theta_{t}}(x)&= g_{\theta_{0}}(x) + \mathcal{S}_M\left\langle\nabla g_{\theta_{0}}(x), \theta_{t}-\theta_{0}\right\rangle_{\Theta} 
+ r_{(\theta_{t},\theta_{0})}(x) \nonumber \\
&= \mathcal{S}_M h_t(x)+r_{(\theta_{t},\theta_{0})}(x) \; .  
\end{align} 
Here, $r_{(\theta_{t},\theta_{0})}(x)$ denotes the  Taylor remainder and can 
be uniformly bounded by 
\[ \| r_{(\theta_{t},\theta_{0})} \|_\infty \lesssim 
B_\tau \; \frac{\|\theta_t-\theta_0\|_\Theta^2}{\sqrt{M}} \;, \] 
as Proposition \ref{prop6} shows. This requires the iterates $\{\theta_t\}_{t \in [T]}$ to stay close to the initialization 
$\theta_0$, i.e. 
\[ \sup_{t \in [T]} || \theta_t - \theta_0||_\Theta \leq B_\tau \;,\] 
with high probability, for some $B_\tau < \infty$. 
We show in Theorem \ref{theo:weights-not-moving} that this is satisfied for sufficiently many neurons.

\vspace{0.3cm}

The second error term in \eqref{errordecomp} can be made arbitrarily small, see Theorem \ref{prop:second-term}. 
More precisely, there exists a decreasing sequence $\{\eps_T\}_T$ of positive real numbers such that 
\[  \|\mathcal{S}_M (h_T - f_T^M )\|_{L^2}  \lesssim \eps_T \;,\] 
with high probability and for sufficiently many neurons, depending on $\eps_T$.

\vspace{0.3cm}

For the last error term in \eqref{errordecomp} we apply the results in \cite{nguyen2023random} and find 
that with high probability, 
\[ \|\mathcal{S}_Mf_T^M-  g_\rho\|_{L^2} \lesssim T^{-r} \;, \]
for sufficiently many neurons, see Proposition \ref{prop:random-feature-result}.

As a result, we arrive at an overall bound of Theorem \ref{theo}
\[ \| g_{\theta_T}- g_\rho\|_{L_2(\rho_x)}
\lesssim \frac{B^3_\tau (\delta ,T)}{\sqrt{M}}  + \eps_T +  T^{-r} \;. \]

\bibliographystyle{alpha}
\bibliography{bib_iteration}

%%%%%%%%%%%%%%%%%%%%%%%%%%%%%%%%%%%%%%%%%%%%%%%%%%%%%%%%%%%%%
%%%%%%%%%%%%%%%%%%%% APPENDIX %%%%%%%%%%%%%%%%%%%%%%%%%%%%%%%
%%%%%%%%%%%%%%%%%%%%%%%%%%%%%%%%%%%%%%%%%%%%%%%%%%%%%%%%%%%%%

\appendix

\section{Preliminaries}

For our analysis we need some further notation.  

We denote by $\mathcal{S}_M : \cH_M \hookrightarrow  L^2(\cX , \rho_X)$ the inclusion of $\cH_M$ into $L^2(\cX , \rho_X)$ for $M \in \mathbb{N}\cup \{\infty\}$.
The adjoint operator $\cS^{*}_M: L^{2}(\mathcal{X}, \rho_X) \longrightarrow \mathcal{H}_{M}$ is identified as
$$
\cS^{*}_M g=\int_{\mathcal{X}} g(x) K_{M,x} \rho_X(d x)\;,
$$
where $K_{M,x}$ denotes the element of $\mathcal{H}_{M}$ equal to the function $t \mapsto K_M(x, t)$. 
The covariance operator $\Sigma_M: \mathcal{H}_{M} \longrightarrow \mathcal{H}_{M}$ and the kernel 
integral operator $\mathcal{L}_M: L^2(\cX , \rho_X) \to L^2(\cX , \rho_X) $ are given by
\begin{align*}
   \Sigma_Mf&\coloneqq \cS^*_M\cS_M f = \int_{\mathcal{X}}\left\langle f, K_{M,x}\right\rangle_{\mathcal{H}_{M}} K_{M,x} \rho_X(d x) \;, \\ 
   \mathcal{L}_M f&\coloneqq \cS_M \cS^*_M f = \int_{\mathcal{X}} f(x) K_{M,x} \rho_X(d x)\;,
\end{align*}

which can be shown to be positive, self-adjoint, trace class (and hence is compact).
Here $K_{M,x}$ denotes the element of $\mathcal{H}_{M}$ equal to the function $x' \mapsto K_M(x, x')$. The
empirical versions of these operators are given by

\begin{center}
\begin{align*}
&\widehat{\cS}_{M}: \mathcal{H}_{M} \longrightarrow \mathbb{R}^{n},  &&\left(\widehat{\cS}_{M} f\right)_{j}=\left\langle f, K_{M,x_{j}}\right\rangle_{\mathcal{H}_{M}}, \\
&\widehat{\cS}_{M}^{*}: \mathbb{R}^{n} \longrightarrow \mathcal{H}_{M}, && \widehat{\cS}_{M}^{*} \mathbf{y}=\frac{1}{n} \sum_{j=1}^{n} y_{j} K_{M,x_{j}}, \\
&\widehat{\Sigma}_{M}:=\widehat{\cS}_{M}^{*} \widehat{\cS}_{M}: \mathcal{H}_{M} \longrightarrow \mathcal{H}_{M},&& \widehat{\Sigma}_{M}=\frac{1}{n} \sum_{j=1}^{n}\left\langle\cdot, K_{M,x_{j}}\right\rangle_{\mathcal{H}_{M}} K_{M,x_{j}}.
\end{align*}
\end{center}

We introduce the following definitions similar to definition 2 of  \cite{rudi2017generalization}.
This framework was originally established to bound the generalization error $\|f_t^M-g_\rho\|$ where $f_t^M$ follows the tikhonov algorithm with respect to $K_M$. 
Fortunately some results of \cite{rudi2017generalization} will also be useful to bound the weights of the neural network.
\begin{center}
\begin{align*}
&\mathcal{Z}_M: \mathbb{R}^{(d+1)M} \rightarrow L^2\left(X, \rho_X\right),  &&\left(\mathcal{Z}_M \theta\right)(\cdot)=\nabla g_{\theta_{0}}(x)^{\top} \theta, \\[5pt]
&\mathcal{Z}_M^*: L^2\left(X, \rho_X\right) \rightarrow \mathbb{R}^{(d+1)M},
&& \mathcal{Z}_M^* g=\int_X \nabla g_{\theta_{0}}(x) g(x) d \rho_X(x), \\[5pt]
&\widehat{\mathcal{Z}}_M: \mathbb{R}^{(d+1)M} \rightarrow \left(\mathbb{R}^n,n^{-\frac{1}{2}}\|.\|_2\right),  &&\left(\widehat{\mathcal{Z}}_M \theta \right)_i=\nabla g_{\theta_{0}}(x_i)^{\top} \theta, \\[5pt]
&\widehat{\mathcal{Z}}_M^*: \left(\mathbb{R}^n,n^{-\frac{1}{2}}\|.\|_2\right) \rightarrow \mathbb{R}^{(d+1)M},
&& \widehat{\mathcal{Z}}_M^* a=\frac{1}{n}\sum_{i=1}^n\nabla g_{\theta_{0}}(x_i)a_i , \\[5pt]
&\mathcal{C}_M: \mathbb{R}^{(d+1)M} \rightarrow \mathbb{R}^{(d+1)M},
&& \mathcal{C}_M=\int_X \nabla g_{\theta_{0}}(x) \nabla g_{\theta_{0}}(x)^{\top} d \rho_X(x),\\
&\widehat{\mathcal{C}}_M: \mathbb{R}^{(d+1)M} \rightarrow \mathbb{R}^{(d+1)M},
&& \widehat{\mathcal{C}}_M=\frac{1}{n} \sum_{i=1}^n \nabla g_{\theta_{0}}(x_i) \nabla g_{\theta_{0}}(x_i)^{\top}.
\end{align*}
\end{center}

\begin{remark}
Note that $\mathcal{C}_M$ and $\widehat{\mathcal{C}}_M$ are self-adjoint and positive operators, 
with spectrum is $[0, \kappa^2]$ and we further have 
$\mathcal{C}_M=\mathcal{Z}_M^*\mathcal{Z}_M$,   
$\widehat{\mathcal{C}}_M=\widehat{\mathcal{Z}}_M^*\widehat{\mathcal{Z}}_M$,  
$\mathcal{L}_M=\mathcal{Z}_M\mathcal{Z}_M^*$.  
\end{remark}

%%%%%%%%%%%%%%%%%%%%%%%%%%%%%%%%%%%%%%%%%%%%%%%%%%%%%%%%%%%%%%%%%%%%%%%%%%%%%%%%%%%%%%%%%%%%%%%%%%%%%%%%%%%
%%%%%%%%%%%%%%%%%% (A) PROOFS MAIN RESULT 
%%%%%%%%%%%%%%%%%%%%%%%%%%%%%%%%%%%%%%%%%%%%%%%%%%%%%%%%%%%%%%%%%%%%%%%%%%%%%%%%%%%%%%%%%%%%%%%%%%%%%%%%%%%

\section{Error Bounds}
\label{app:proof-error-bounds}

Recall the error decomposition from Section \ref{outline-of-proof}:

\begin{align}
\label{errordecomp2}
\|g_{\theta_T} - g_\rho\|_{L^2}
&\leq \underbrace{ \|g_{\theta_T} - g_{\theta_0} - \mathcal{S}_M h_T\|_{L^2}}_{I} + 
\underbrace{  \|\mathcal{S}_M (h_T - f_T^M )\|_{L^2}}_{II}  + 
\underbrace{  \|\mathcal{S}_Mf_T^M- \tilde g_\rho\|_{L^2}}_{III}  \;,
\end{align}
where we set $\tilde g_\rho =  g_\rho - g_{\theta_0}$.

This section is devoted to bounding each term on the rhs of \eqref{errordecomp2}. 
To this end, we need the following assumption:

\begin{assumption}
\label{ass:Taylor-is-satisfied}
Let $\delta \in (0,1]$ and $T \in \mbn$. There exists an $\widetilde{ M}_0(\delta , T) \in \mbn$, depending on $\delta$ and $T$, 
such that for all $M \geq \widetilde{ M}_0(\delta , T)$, with $\rho^{\otimes n}$-probability at least $1-\delta$ it holds 
\begin{equation}
\label{eq:AT}
 \forall \;\; t \in [T]\;:\; \;\; \| \theta_t -\theta_0 \|_\Theta \leq B_\tau\;,  
\end{equation} 
for some constant $B_\tau \geq 1+ \tau$.
\end{assumption}

We will show in Section \ref{app:weight-bounds} that this assumption is satisfied.

%%%%%%%%%%%%%%%%%%%%%%%%%%%%%%%%%%%%%%%%%%%%%%%%%%
%%%%%%%%%%    Bounding I
%%%%%%%%%%%%%%%%%%%%%%%%%%%%%%%%%%%%%%%%%%%%%%%%%%

\subsection{Bounding I} % $ \|g_{\theta_T} - g_{\theta_0} - \mathcal{S}_M h_T\|_{L^2(\rho_x)}$}

In this section we provide an estimate of the first error term 
$ \|g_{\theta_T} - g_{\theta_0} - \mathcal{S}_M h_T\|_{L^2}$ in 
\eqref{errordecomp2}.

\vspace{0.3cm}

\begin{proposition}
\label{prop:taylor-remainder} 
Suppose Assumption \ref{ass:Taylor-is-satisfied} is satisfied. 
With probability at least $1-\delta$, we have 
for all $M \geq \widetilde{ M}_0(\delta , T)$ that
\[   \| g_{\theta_T} - g_{\theta_0} - \mathcal{S}_M h_T\|_{L^2} \leq    \frac{C_\sigma B_\tau^3}{\sqrt{M}}  \; , \]
%where $C_{\nabla g}$  is defined in Proposition \ref{prop:LipschitzGradient}. 
for some $C_\sigma < \infty$ and for some $\widetilde{ M}_0(\delta , T) < \infty$. 
\end{proposition}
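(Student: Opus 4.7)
\textbf{Proof plan for Proposition \ref{prop:taylor-remainder}.}

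The plan is to identify the quantity $g_{\theta_T} - g_{\theta_0} - \mathcal{S}_M h_T$ with the second-order Taylor remainder of the parameter-to-function map $\theta \mapsto g_\theta(x)$ expanded around $\theta_0$, and then bound this remainder uniformly in $x$ using the smoothness of $\sigma$ together with the weight-stability Assumption \ref{ass:Taylor-is-satisfied}.

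\textbf{Step 1: Identify the difference with the Taylor remainder.} By the Taylor expansion \eqref{eq:taylor} carried out at $t=T$, for every $x\in\mathcal{X}$,
\begin{equation*}
    g_{\theta_T}(x) - g_{\theta_0}(x) - \mathcal{S}_M h_T(x) \;=\; r_{(\theta_T,\theta_0)}(x).
\end{equation*}
Hence $\|g_{\theta_T} - g_{\theta_0} - \mathcal{S}_M h_T\|_{L^2} \leq \|r_{(\theta_T,\theta_0)}\|_\infty$, and the whole task reduces to controlling the remainder in sup-norm.

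\textbf{Step 2: Pointwise quadratic bound on the remainder.} By the integral form of Taylor's theorem,
\begin{equation*}
    r_{(\theta_T,\theta_0)}(x) \;=\; \tfrac{1}{2}\int_0^1 (1-s)\,(\theta_T-\theta_0)^\top \nabla^2_\theta g_{\theta_s}(x)\,(\theta_T-\theta_0)\,ds,
\end{equation*}
where $\theta_s = \theta_0 + s(\theta_T-\theta_0)$. Therefore
\begin{equation*}
    |r_{(\theta_T,\theta_0)}(x)| \;\leq\; \tfrac{1}{2}\,\|\theta_T-\theta_0\|_\Theta^{2}\,\sup_{s\in[0,1]}\|\nabla^2_\theta g_{\theta_s}(x)\|_{\mathrm{op}}.
\end{equation*}
The key computation, which is precisely the content of the auxiliary Proposition~\ref{prop6} referenced in Section \ref{outline-of-proof}, is to bound the Hessian operator norm by $C_\sigma B_\tau / \sqrt{M}$ along the segment $\theta_s$. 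This uses the explicit block structure of $\nabla^2_\theta g_\theta(x)$ coming from $g_\theta(x) = \tfrac{1}{\sqrt{M}}\sum_m a_m\sigma(\langle b_m,x\rangle + \gamma c_m)$: the only nonzero blocks involve one derivative in $a_m$ paired with one in $(b_m,c_m)$ (yielding entries of order $\|x\|/\sqrt{M}$), or two derivatives in $(b_m,c_m)$ (yielding entries of order $|a_m|\|x\|^2/\sqrt{M}$). Since $\|\sigma'\|_\infty,\|\sigma''\|_\infty\leq C_\sigma$, $\|x\|\leq 1$, and along $\theta_s$ the coefficient vector satisfies $\|a\|_2 \leq \|a^{(0)}\|_2 + B_\tau \lesssim B_\tau$ (using $B_\tau\geq 1+\tau$), a block-wise operator-norm count yields
\begin{equation*}
    \sup_{s\in[0,1]}\|\nabla^2_\theta g_{\theta_s}(x)\|_{\mathrm{op}} \;\leq\; \frac{C_\sigma\,B_\tau}{\sqrt{M}}.
\end{equation*}

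\textbf{Step 3: Invoke the weight-stability assumption.} By Assumption \ref{ass:Taylor-is-satisfied}, there exists $\widetilde{M}_0(\delta,T)$ such that for all $M\geq \widetilde{M}_0(\delta,T)$, with probability at least $1-\delta$ we have $\|\theta_T-\theta_0\|_\Theta \leq B_\tau$. Combining Steps 1--3,
\begin{equation*}
    \|g_{\theta_T} - g_{\theta_0} - \mathcal{S}_M h_T\|_{L^2} \;\leq\; \|r_{(\theta_T,\theta_0)}\|_\infty \;\leq\; \tfrac{1}{2}\,B_\tau^2 \cdot \frac{C_\sigma B_\tau}{\sqrt{M}} \;\leq\; \frac{C_\sigma B_\tau^3}{\sqrt{M}},
\end{equation*}
after absorbing the constant $1/2$ into $C_\sigma$.

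\textbf{Main obstacle.} The only nontrivial ingredient is the Hessian bound in Step 2; the rest is book-keeping. Even there, the argument is essentially deterministic and relies on Assumption \ref{ass:neurons} (Lipschitz $\sigma''$, bounded $\sigma'$ and $\sigma''$) together with the a~priori bound on $\|a\|_2$ on the Taylor segment. The randomness/high-probability content is all pushed into Assumption \ref{ass:Taylor-is-satisfied}, which is in turn established by Theorem~\ref{theo:weights-not-moving} in Section \ref{app:weight-bounds}. Thus, conditional on the weight bound, Proposition \ref{prop:taylor-remainder} follows by a straightforward second-order Taylor estimate.
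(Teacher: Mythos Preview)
Your proposal is correct and follows essentially the same route as the paper: identify $g_{\theta_T}-g_{\theta_0}-\mathcal{S}_M h_T$ with the Taylor remainder $r_{(\theta_T,\theta_0)}$, bound the Hessian along the segment by $C_\sigma B_\tau/\sqrt{M}$ (the paper obtains this via the Lipschitz constant of $\nabla g_\theta$ in Proposition~\ref{prop:LipschitzGradient}/Proposition~\ref{prop6}, which is the same block-wise computation you sketch), and then invoke Assumption~\ref{ass:Taylor-is-satisfied} to replace $\|\theta_T-\theta_0\|_\Theta^2$ by $B_\tau^2$.
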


\vspace{0.2cm}

\begin{proof}[Proof of Proposition \ref{prop:taylor-remainder}]
From Assumption \ref{ass:Taylor-is-satisfied} and  Proposition \ref{prop6}, $a)$ we immediately obtain for all $x \in \cX$
\begin{align}
\label{eq:taylor2}
| g_{\theta_T}(x) - g_{\theta_0}(x) - \mathcal{S}_M h_T(x)|  &=  |r_{(\theta_0,\theta_T)}(x) | \nonumber \\
&\leq \frac{C_\sigma B_\tau}{\sqrt{M}} \; \|\theta_T-\theta_0\|^2_\Theta \nonumber \\
&\leq  \frac{C_\sigma B_\tau^3}{\sqrt{M}} \;, 
\end{align}
holding with $\rho^{\otimes n}$-probability at least $1-\delta$, for all $M\geq \widetilde{M}_0(\delta , T)$.  
\end{proof}

%%%%%%%%%%%%%%%%%%%%%%%%%%%%%%%%%%%%%%%%%%%%%%%%%%
%%%%%%%%%%    Bounding II
%%%%%%%%%%%%%%%%%%%%%%%%%%%%%%%%%%%%%%%%%%%%%%%%%%

\subsection{Bounding II} % $\|\mathcal{S}_M (h_T - f_T^M )\|_{L^2(\rho_x)}$}

In this section we estimate the second term $ \|\mathcal{S}_M (h_T - f_T^M )\|_{L^2}$ in \eqref{errordecomp2}. 
A short calculation proves the following recursion:

\vspace{0.3cm}

\begin{lemma}
\label{lem:general-recursion}
Let $t \in \mbn$, $M \in \mbn$. Define $\hat  u_t^M:=h_t - f_t^M $. Then $(\hat  u_t^M)_t$ follows the recursion $\hat  u_0^M = 0$ and 
\begin{align*}
 \hat u^M_{t+1} &= (Id - \alpha \widehat{\Sigma}_M) \hat  u_t^M - \alpha \hat  \xi_t^{(1)}  - \alpha \hat  \xi_t^{(2)} \\
&= \alpha \sum_{s=0}^{t-1} (Id - \alpha \widehat{\Sigma}_M)^s \left(\hat  \xi_{t-s}^{(1)} + \hat  \xi_{t-s}^{(2)} \right) \;, 
 \end{align*}
where
\begin{align*}
\hat  \xi_t^{(1)} &= \frac{1}{n} \sum_{j=1}^n(g_{\theta_t}(x_j)  -  y_j) \; 
\left\langle \nabla g_{\theta_0}, \nabla g_{\theta_t}(x_j) - \nabla g_{\theta_0}(x_j)  \right\rangle_{\Theta} \in \cH_M  \;, \\
\hat  \xi_t^{(2)} &=  \widehat{\cS}^*_M \bar r_{(\theta_0 , \theta_t)}  \in \cH_M \;,
\end{align*}
with  $\bar r_{(\theta_0 , \theta_t)} = (r_{(\theta_0 , \theta_t)}(x_1) , ..., r_{(\theta_0 , \theta_t)} (x_n)) $.
\end{lemma}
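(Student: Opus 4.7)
The plan is to compute both one-step increments $h_{t+1}-h_t$ and $f^M_{t+1}-f^M_t$ directly from their defining recursions, subtract them, and then isolate the term involving $\widehat{\Sigma}_M \hat{u}_t^M$ by making two natural splittings. The initial condition $\hat{u}_0^M = h_0 - f_0^M = 0$ is immediate from the stated initializations.

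First, using the gradient descent update $\theta_{t+1}-\theta_t = -\tfrac{\alpha}{n}\sum_j \ell'(g_{\theta_t}(x_j),y_j)\nabla g_{\theta_t}(x_j)$ together with the linearity of $h_t(\cdot) = \langle \nabla g_{\theta_0}(\cdot),\theta_t-\theta_0\rangle_\Theta$ in its second argument, I obtain
\[
 h_{t+1}-h_t = -\frac{\alpha}{n}\sum_{j=1}^n \ell'(g_{\theta_t}(x_j),y_j)\, \langle \nabla g_{\theta_0}(\cdot),\nabla g_{\theta_t}(x_j)\rangle_\Theta,
\]
while the defining recursion for $f^M_t$ gives $f^M_{t+1}-f^M_t = -\tfrac{\alpha}{n}\sum_j \ell'(f_t^M(x_j),y_j)K_M(x_j,\cdot)$. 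The first key decomposition is to write $\nabla g_{\theta_t}(x_j) = \nabla g_{\theta_0}(x_j) + (\nabla g_{\theta_t}(x_j)-\nabla g_{\theta_0}(x_j))$, so that the inner product in the first display splits into $K_M(x_j,\cdot)$ plus a residual, which is exactly the summand appearing in $\hat{\xi}_t^{(1)}$.

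The second key step uses the symmetric initialization, which ensures $g_{\theta_0}\equiv 0$, together with the Taylor identity \eqref{eq:taylor} to write $g_{\theta_t}(x_j) = h_t(x_j) + r_{(\theta_t,\theta_0)}(x_j)$. Since $\ell'(t,y)=t-y$ for the squared loss, this yields
\[
 \ell'(g_{\theta_t}(x_j),y_j) - \ell'(f_t^M(x_j),y_j) = h_t(x_j)-f_t^M(x_j) + r_{(\theta_t,\theta_0)}(x_j) = \hat{u}_t^M(x_j) + r_{(\theta_t,\theta_0)}(x_j).
\]
Substituting this into the $K_M$-part of the difference, the $\hat{u}_t^M(x_j)K_M(x_j,\cdot)$ contribution collapses into $\alpha\widehat{\Sigma}_M \hat u_t^M$ by definition of $\widehat{\Sigma}_M$, while the $r_{(\theta_t,\theta_0)}(x_j) K_M(x_j,\cdot)$ contribution is precisely $\alpha\hat{\xi}_t^{(2)} = \alpha\widehat{\mathcal{S}}_M^* \bar{r}_{(\theta_0,\theta_t)}$ in the notation from the preliminaries.

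Putting these pieces together produces the one-step recursion $\hat{u}_{t+1}^M = (Id-\alpha\widehat{\Sigma}_M)\hat{u}_t^M - \alpha\hat{\xi}_t^{(1)} - \alpha\hat{\xi}_t^{(2)}$, and iterating from $\hat{u}_0^M=0$ gives the displayed closed-form summation (a routine geometric-series bookkeeping step). There is no real obstacle here: the content of the lemma is purely algebraic. The only point that requires care is recognizing at the outset that the two splittings above are exactly what is needed to separate the three distinct effects — the linear dynamics driven by $\widehat{\Sigma}_M$, the discrepancy between $\nabla g_{\theta_t}$ and its linearization at $\theta_0$ (captured by $\hat{\xi}_t^{(1)}$), and the pointwise Taylor remainder at the sample points (captured by $\hat{\xi}_t^{(2)}$).
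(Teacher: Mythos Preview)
Your proposal is correct and matches the paper's approach: the paper simply states that ``a short calculation proves the following recursion'' without giving details, and your two splittings (of $\nabla g_{\theta_t}(x_j)$ around $\nabla g_{\theta_0}(x_j)$, and of $g_{\theta_t}(x_j)$ via the Taylor identity $g_{\theta_t}=h_t+r_{(\theta_t,\theta_0)}$ using $g_{\theta_0}\equiv 0$) are precisely the algebra needed to recover the one-step recursion; the closed form then follows by unrolling.
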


%\begin{proof}[Proof of Lemma \ref{lem:general-recursion}]
%\end{proof}

\vspace{0.4cm}

\begin{theorem}
\label{prop:second-term}
Let $\delta \in (0,1]$, $T \geq 2$ and $\alpha < 1/\kappa^2$. 
Suppose Assumptions \ref{ass:neurons}, \ref{ass:source}, \ref{ass:Taylor-is-satisfied} are satisfied. 
Let $(\eps_T)_{T \geq 2}$ be a decreasing sequence of positive real numbers. 
There exists an $M_0:= M_0(\delta , \eps_T, d) > 0$ defined in \eqref{eq:M-again}  
and $n_0:=n_0(\delta, d,r, T) $, defined in \eqref{eq:n-again}, such that for all $n \geq n_0$ and 
$M \geq M_0 $, with probability at least $1-\delta$
\begin{align*}
  \forall t \in [T]: \;\;\;  ||  \cS_M \hat u_{t}||_{L^2} &\leq \eps_T \;. 
\end{align*}
\end{theorem}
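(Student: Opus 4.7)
The plan is to start from the closed-form expression given in Lemma~\ref{lem:general-recursion},
$$\hat u_t^M \;=\; \alpha \sum_{s=0}^{t-1} (Id - \alpha\widehat{\Sigma}_M)^s \bigl(\hat\xi_{t-s}^{(1)} + \hat\xi_{t-s}^{(2)}\bigr),$$
apply $\cS_M$ and the triangle inequality, and reduce to two subproblems: (i) a uniform-in-$s$ bound on the operator norm $\|\cS_M(Id - \alpha\widehat{\Sigma}_M)^s\|_{\cH_M \to L^2}$, and (ii) $\cH_M$-norm bounds on the driving terms $\hat\xi_t^{(1)}$ and $\hat\xi_t^{(2)}$, uniformly over $t \in [T]$, on the high-probability event of Assumption~\ref{ass:Taylor-is-satisfied}.

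For (i) I would use $\|\cS_M v\|_{L^2}^2 = \langle \Sigma_M v, v\rangle_{\cH_M}$ to rewrite the target as $\|\Sigma_M^{1/2}(Id - \alpha\widehat{\Sigma}_M)^s\|_{op}$, then exchange $\Sigma_M$ for $\widehat{\Sigma}_M$ via an operator Bernstein inequality of the type used in the random-feature literature \cite{rudi2017generalization,nguyen2023random}, at a cost of order $\widetilde{O}(n^{-1/2})$. Combined with the elementary scalar bound $\sup_{x \in [0,\kappa^2]} \sqrt{x}\,(1-\alpha x)^s \leq (\alpha(s+1))^{-1/2}$ (valid since $\alpha\kappa^2 < 1$), this yields, with high probability,
$$\alpha \sum_{s=0}^{t-1} \|\cS_M(Id - \alpha\widehat{\Sigma}_M)^s\|_{\cH_M \to L^2} \;\lesssim\; \sqrt{\alpha T}.$$

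For (ii), on the event $\|\theta_t - \theta_0\|_\Theta \leq B_\tau$ of Assumption~\ref{ass:Taylor-is-satisfied}, the Taylor remainder term is handled directly: using $\|\widehat{\cS}_M^*\|_{op} \leq \kappa$ and Proposition~\ref{prop:taylor-remainder},
$$\|\hat\xi_t^{(2)}\|_{\cH_M} = \|\widehat{\cS}_M^* \bar r_{(\theta_0,\theta_t)}\|_{\cH_M} \;\leq\; \kappa\,\|\bar r_{(\theta_0,\theta_t)}\|_\infty \;\lesssim\; B_\tau^3/\sqrt{M}.$$
For $\hat\xi_t^{(1)}$, since $\langle \nabla g_{\theta_0}(\cdot), v\rangle_\Theta \in \cH_M$ with norm at most $\|v\|_\Theta$, I would estimate
$$\|\hat\xi_t^{(1)}\|_{\cH_M} \;\leq\; \frac{1}{n}\sum_{j=1}^n |g_{\theta_t}(x_j) - y_j|\cdot\|\nabla g_{\theta_t}(x_j) - \nabla g_{\theta_0}(x_j)\|_\Theta.$$
The residual is of order $|h_t(x_j)| + \|r_{(\theta_0,\theta_t)}\|_\infty + C_Y = O(B_\tau)$ (using $\|h_t\|_\infty \leq \kappa B_\tau$), while the gradient perturbation is $O(B_\tau/\sqrt{M})$, obtained by a first-order Taylor expansion of $\theta \mapsto \nabla g_\theta(x)$ and exploiting both the explicit $1/\sqrt{M}$-scaling of the network and the uniform bounds on $\sigma', \sigma''$ from Assumption~\ref{ass:neurons}. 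This gives $\|\hat\xi_t^{(1)}\|_{\cH_M} \lesssim B_\tau^2/\sqrt{M}$.

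Combining (i) and (ii) yields $\|\cS_M \hat u_t\|_{L^2} \lesssim \sqrt{T}\, B_\tau^3/\sqrt{M}$ uniformly in $t \in [T]$, which can be made smaller than any target $\eps_T$ by taking $M \geq M_0 \simeq T\,B_\tau^6\,\eps_T^{-2}$, up to logarithmic factors in $\delta$ and the polynomial-in-$d$ factors inherited from the weight-stability assumption. The main technical obstacle is the gradient-perturbation estimate in (ii): one must track the cross-contributions of the three parameter blocks $(a,B,c)$ simultaneously and verify that the Lipschitz constant of $\theta \mapsto \nabla g_\theta(x)$ near $\theta_0$ scales as $O(1/\sqrt{M})$ rather than $O(1)$. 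This $O(1/\sqrt{M})$ scaling is the defining feature of the lazy-training regime and is precisely what makes both $\hat\xi_t^{(1)}$ and $\hat\xi_t^{(2)}$ shrink with $M$, allowing the whole error term to be driven below $\eps_T$.
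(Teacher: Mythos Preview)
Your argument is correct and considerably more direct than the paper's, but it is quantitatively weaker. The paper does not bound $|g_{\theta_t}(x_j)-y_j|$ crudely by $O(B_\tau)$. Instead it splits $\hat\xi_t^{(1)}$ into a piece involving $g_{\theta_t}-g_\rho$ and a mean-zero piece involving $g_\rho-y$; the latter is handled by a uniform Rademacher bound (Lemma~\ref{lem:noise21}), gaining an extra factor $1/\sqrt{n}$. For the former, the paper bounds $\|g_{\theta_t}-g_\rho\|_n$ sharply via Proposition~\ref{prop:emp-ltwo}, in terms of $\|\Sigma_M^{1/2}\hat u_t\|_{\cH_M}$, $\|\hat u_t\|_{\cH_M}/\sqrt{\alpha T}$, and terms of order $(\alpha t)^{-r}$. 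This makes the inequality for $\hat u_t$ self-referential, and the paper closes the loop by induction on $T$, tracking the combined quantity $\hat U_t(T):=\|\widehat\Sigma_M^{1/2}\hat u_t\|+\|\hat u_t\|/\sqrt{\alpha T}$. What your route buys is transparency; what the paper's buys is a sharper neuron requirement: roughly $M_0\simeq B_\tau^4 T^{2r}$ (for $r\geq 1/2$) rather than your $M_0\simeq B_\tau^6 T\,\eps_T^{-2}= B_\tau^6 T^{2r+1}$, i.e.\ one power of $T$ is saved. This is exactly what is needed downstream in Corollary~\ref{cor:rates} to obtain the advertised $M\gtrsim T_n^{2r}$; your approach would force $M\gtrsim T_n^{2r+1}$. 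One minor correction: the gradient perturbation is $O(B_\tau^2/\sqrt{M})$, not $O(B_\tau/\sqrt{M})$, since the Lipschitz constant in Proposition~\ref{prop:LipschitzGradient} is itself $c_\sigma B_\tau/\sqrt{M}$ and must be multiplied by $\|\theta_t-\theta_0\|_\Theta\leq B_\tau$; your final bound $\|\hat\xi_t^{(1)}\|_{\cH_M}\lesssim B_\tau^3/\sqrt{M}$ (rather than $B_\tau^2/\sqrt{M}$) is then what you actually obtain, and it matches your stated endpoint $\sqrt{T}\,B_\tau^3/\sqrt{M}$.
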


\begin{proof}[Proof of Theorem \ref{prop:second-term}]
Applying \cite[Proposition A.15]{nguyen2023random}  gives with probability at least $1-\delta/2$
\begin{align}
\label{eq:from-useful}
||  \cS_M \hat u_{T+1}||_{L^2} &= ||\Sigma_M^{\frac{1}{2}} \hat u_{t}||_{\cH_M} 
\leq 2 \; || \widehat{\Sigma}_M ^{\frac{1}{2}} \hat u_{T+1} ||_{\cH_M} + 2\sqrt{ \lambda} \; || \hat u_{T+1} ||_{\cH_M} \;, 
\end{align}
provided $M \geq M_1(d, \lam , \delta)$, $n \geq n_1(\lambda, \delta)$,
and where $M_1(d, \lam , \delta)$ is defined in \eqref{eq:the-one-and-only} and $ n_1(\lambda, \delta)$ is given in 
\eqref{eq:n1-def}.

Let $a \in \{0, 1/2\}$. Using Lemma \ref{lem:general-recursion}, we find 
\begin{align}
\label{eq:from-useful2}
|| \widehat{\Sigma}_M ^a \hat u_{T+1} ||_{\cH_M} 
      &=  \alpha \left\|  \sum_{s=0}^{T-1} \widehat{\Sigma}_M ^a (Id - \alpha \widehat{\Sigma}_M)^s \left(\hat  \xi_{T-s}^{(1)} + \hat  \xi_{T-s}^{(2)} \right)  \right\|_{\cH_M} \nonumber  \\
      &\leq \alpha^{1-a}  \sum_{s=0}^{T-1} || (\alpha\widehat{\Sigma}_M )^a (Id - \alpha \widehat{\Sigma}_M)^s \hat  \xi_{T-s}^{(1)}||_{\cH_M} + \nonumber \\
      &\hspace{0.2cm}      \alpha^{1-a}  \sum_{s=0}^{T-1} || (\alpha\widehat{\Sigma}_M )^a (Id - \alpha \widehat{\Sigma}_M)^s \hat  \xi_{T-s}^{(2)}||_{\cH_M} \;.
\end{align}

\vspace{0.4cm}

%%%%%%%%%%%%%%%%%%%%%%%%%%%%%%%%%%%%%%%%%%%%%%%%%%%%%%%%%%%%%%%%%%%%%%%%%%%%%%%%%%%%

{\bf Bounding $\sum_{s=0}^{T-1} || (\alpha\widehat{\Sigma}_M )^a (Id - \alpha \widehat{\Sigma}_M)^s \hat  \xi_{T-s}^{(1)}||_{\cH_M}$:}
\\
\vspace{0.2cm}
To begin with, we further split the noise term $\xi_{t}^{(1)} \in \cH_M$ into $ \xi_{t}^{(1)} = \xi_{t}^{(11)}  +  \xi_{t}^{(12)}$, 
with 
\[ \xi_{t}^{(11)} = \frac{1}{n} \sum_{j=1}^n(g_{\theta_t}(x_j)  -  g_\rho(x_j)) \left\langle \nabla g_{\theta_0}, \nabla g_{\theta_t}(x_j) - \nabla g_{\theta_0}(x_j)  \right\rangle_{\Theta} \]
and 
\[  \xi_{t}^{(12)} =    \frac{1}{n} \sum_{j=1}^n(g_\rho(x_j)  -  y_j)\left\langle \nabla g_{\theta_0}, \nabla g_{\theta_t}(x_j) - \nabla g_{\theta_0}(x_j)  \right\rangle_{\Theta} \;. \]
Thus, 
\begin{align}
\label{eq:from-useful7}
\sum_{s=0}^{T-1} || (\alpha\widehat{\Sigma}_M )^a (Id - \alpha \widehat{\Sigma}_M)^s \hat  \xi_{T-s}^{(1)}||_{\cH_M}  &\leq 
\sum_{s=0}^{T-1} || (\alpha\widehat{\Sigma}_M )^a (Id - \alpha \widehat{\Sigma}_M)^s \hat  \xi_{T-s}^{(11)}||_{\cH_M}  \nonumber \\ 
&+ \sum_{s=0}^{T-1} || (\alpha\widehat{\Sigma}_M )^a (Id - \alpha \widehat{\Sigma}_M)^s \hat  \xi_{T-s}^{(12)}||_{\cH_M} \;. 
\end{align}
For the first term in \eqref{eq:from-useful7}, we apply Lemma \ref{lem:noise11} and obtain for all $t \in [T]$, for any  
$M \geq M_2 (r,d, T, \delta)$ defined in \eqref{eq:neurons}, with probability at least $1-\delta/8$
\begin{align*}
 || \xi_{t}^{(11)}||_{\cH_M} 
&\leq C_{\kappa, r, R, \alpha ,g_{\theta_0}} \; \log(96/\delta) \; \frac{C_{\sigma}B^2_\tau}{\sqrt M}  \;   
  \cdot \left(  \frac{C_{\sigma}B^2_\tau}{\sqrt M} + \right. \nonumber \\
& \left. + ||\Sigma_M^{\frac{1}{2}} \hat u_t||_{\cH_M} + \frac{ || \hat u_t||_{\cH_M}}{\sqrt{\alpha T}} + \frac{1}{(\alpha t)^{r}}  
 + \frac{1}{\sqrt n} + \frac{1}{(\alpha t)^{r/2} n^{\frac{1}{4}}}  \right) \\
&\leq 3 C_{\kappa, r, R, \alpha ,g_{\theta_0}} \; \log(96/\delta) \; \frac{C_{\sigma}B^2_\tau}{\sqrt M}  \;   
  \cdot \left(  \frac{C_{\sigma}B^2_\tau}{\sqrt M} +  \right. \nonumber \\ 
&  \left. + ||\Sigma_M^{\frac{1}{2}} \hat u_t||_{\cH_M} + \frac{ || \hat u_t||_{\cH_M}}{\sqrt{\alpha T}} + \frac{1}{(\alpha t)^r} \right) \;,
\end{align*}
for some $C_{\kappa, r, R, \alpha ,g_{\theta_0}} < \infty$ and if we let $ n \geq (\alpha T)^{2r}$. 
%\[ n \geq (\alpha T)^{2r} \;. \]

Moreover, by  Lemma \ref{prop0},  
\begin{align*}
|| (\alpha\widehat{\Sigma}_M )^a (Id - \alpha \widehat{\Sigma}_M)^s || 
&= \sup_{x \in [0,1]} (1-x)^s x^a 
\leq \left(\frac{a}{a+s}\right)^a \;,
\end{align*} 
where we use the convention $(0/0)^0 :=1$. Thus, 
\begin{align}
\label{eq:from-useful10}
& \sum_{s=0}^{T-1} || (\alpha\widehat{\Sigma}_M )^a (Id - \alpha \widehat{\Sigma}_M)^s \hat  \xi_{T-s}^{(11)}||_{\cH_M} \nonumber \\
 &\leq  C_{\kappa, r, R, \alpha ,g_{\theta_0}, \sigma} \; \log(96/\delta) \; \frac{B^2_\tau}{\sqrt M} \;
   \sum_{j=1}^3 S_j(a,T) \;,
\end{align}
where we define and estimate each summand $S_j(a,T)$, $j=1,2,3$, below. We make repeatedly use of Lemma \ref{prop1} 
and Lemma \ref{lem:calcs}:  

\begin{align*}
S_1(a,T) &:= \frac{B^2_\tau}{\sqrt M} \; \sum_{s=0}^{T-1}  \left(\frac{a}{a+s} \right)^a 
\leq \sqrt{2} \;B^2_\tau \; \frac{T^{1-a}}{\sqrt M} \;. 
\end{align*} 

\begin{align*}
S_2(a,T) &:= \sum_{s=0}^{T-1}  \left(\frac{a}{a+s} \right)^a \; 
\left( ||\Sigma_M^{\frac{1}{2}} \hat u_{T-s}||_{\cH_M} + \frac{ || \hat u_{T-s}||_{\cH_M}}{\sqrt{\alpha T}} \right) \;.
\end{align*}

\begin{align*}
S_3(a,T) &:= \frac{1}{\alpha^r} \; \sum_{s=0}^{T-1}  \left(\frac{a}{a+s} \right)^a \; \frac{1}{(T-s)^{r}} \\
&\leq \frac{8\cdot 2^{\max\{r, \frac{1}{2}\}}}{\alpha^r} \; \left( a\cdot T^{\frac{1}{2}-r} + \frac{\eta_r(T)}{T^a} \right) \;.
\end{align*} 

Plugging these estimates into \eqref{eq:from-useful10} gives 
\begin{align}
\label{eq:from-useful15}
& \sum_{s=0}^{T-1} || (\alpha\widehat{\Sigma}_M )^a (Id - \alpha \widehat{\Sigma}_M)^s \hat  \xi_{T-s}^{(11)}||_{\cH_M} \nonumber \\
 &\leq  C'_{\kappa, r, R, \alpha ,g_{\theta_0}, \sigma} \; \log(96/\delta) \; \frac{B^2_\tau}{\sqrt M} \;
  \left( \frac{T^{1-a}}{\sqrt M}\;B^2_\tau + a\cdot T^{\frac{1}{2}-r} + \frac{\eta_r(T)}{T^a}  \right. \nonumber  \\
%& \left. \; + a\cdot T^{\frac{1}{2}-r} + \frac{\eta_r(T)}{T^a} \right. \nonumber \\
& \left. \; + \sum_{s=0}^{T-1}  \left(\frac{a}{a+s} \right)^a \; 
\left( ||\Sigma_M^{\frac{1}{2}} \hat u_{T-s}||_{\cH_M} + \frac{ || \hat u_{T-s}||_{\cH_M}}{\sqrt{\alpha T}} \right)  \right) \;,
\end{align}
for some $C'_{\kappa, r, R, \alpha ,g_{\theta_0}, \sigma}  < \infty$. 

\vspace{1cm}

For the second term in \eqref{eq:from-useful7}, we apply Lemma \ref{lem:noise21} and have for all 
\begin{equation}
\label{eq:M-large-enough}
 M \geq \widetilde{M}_0(\delta/8 , T)  
\end{equation} 
with probability at least $1-\delta/8$ 
\[ || \hat \xi_{t}^{(12)} ||_{\cH_M} \leq C_{\sigma,  g_{\theta_0}} \; d^{\frac{5}{2}}\;\frac{\log(32/\delta)B_\tau^2}{\sqrt{n\cdot M}}\;, \]
for some $C_{\sigma,  g_{\theta_0}} < \infty$. 
Hence, with probability at least $1-\delta$
\begin{align}
\label{eq:from-useful9}
& \sum_{s=0}^{T-1} || (\alpha\widehat{\Sigma}_M )^a (Id - \alpha \widehat{\Sigma}_M)^s \hat  \xi_{T-s}^{(12)}||_{\cH_M} \nonumber \\
&\leq   C_{\sigma,  g_{\theta_0}} \; d^{\frac{5}{2}}\;\frac{\log(32/\delta)B_\tau^2}{\sqrt{n\cdot M}}\; \sum_{s=0}^{T-1}  \left(\frac{a}{a+s}\right)^a  \nonumber  \\
&\leq   C_{\sigma,  g_{\theta_0}} \;d^{\frac{5}{2}}\; \frac{\log(32/\delta)B_\tau^2}{\sqrt{n\cdot M}} \; T^{1-a}\;.
\end{align}
In the last step we  apply Lemma \ref{prop1} and find 
\[   \sum_{s=0}^{T-1}  \left(\frac{a}{a+s}\right)^a \leq \sqrt{2} \; T^{1-a} \;.  \]

Combining \eqref{eq:from-useful9}, \eqref{eq:from-useful15} with \eqref{eq:from-useful7} gives 
with probability at least $1-\delta/4$
\begin{align}
\label{eq:from-useful11}
& \sum_{s=0}^{T-1} || (\alpha\widehat{\Sigma}_M )^a (Id - \alpha \widehat{\Sigma}_M)^s \hat  \xi_{T-s}^{(1)}||_{\cH_M} \nonumber \\
&\leq  C_{\bullet} \; \log(96/\delta) \; \frac{B_\tau^2}{\sqrt M} \nonumber \\
& \; \; \times \; \left( F_a(r, n , M, T) +    \sum_{s=0}^{T-1}  \left(\frac{a}{a+s} \right)^a \; 
\left( ||\Sigma_M^{\frac{1}{2}} \hat u_{T-s}||_{\cH_M} + \frac{ || \hat u_{T-s}||_{\cH_M}}{\sqrt{\alpha T}} \right)  \right) \;, 
\end{align}
for some $C_{\bullet} < \infty$, depending on $\kappa, r, R, \alpha ,g_{\theta_0}, d, \sigma$ 
where we set 
\begin{align}
\label{eq:F}
F_a( r, n , M, T)  &:=  \frac{B_\tau^2 \; T^{1-a}}{\sqrt M}  + a\cdot T^{\frac{1}{2}-r} + \frac{\eta_r(T)}{T^a}  +  
  \frac{d^{\frac{5}{2}}B_\tau^2 T^{1-a}}{\sqrt n} \;,
\end{align}
provided \eqref{eq:M-large-enough} holds.

\vspace{0.4cm}

%%%%%%%%%%%%%%%%%%%%%%%%%%%%%%%%%%%%%%%%%%%%%%%%%%%%%%%%%%%%%%%%%%%%%%%%%%%%%%%%%%%%%%%%%%%%%%%%%%%%%

{\bf Bounding $\sum_{s=0}^{T-1} || (\alpha\widehat{\Sigma}_M )^a (Id - \alpha \widehat{\Sigma}_M)^s \hat  \xi_{T-s}^{(2)}||_{\cH_M}$:}
\\
\vspace{0.2cm}
By definition of $\xi_{t}^{(2)}$ we 
obtain\footnote{Recall that we define $||\bar y||_n := \frac{1}{\sqrt n} ||\bar y||_2$, for any $\bar y \in \mbr^n$. Moreover, 
$\widehat{\cS}^*_M: (\mbr^n , ||\cdot ||_n) \to (\cH_M , ||\cdot ||_{\cH_M})$. } 
\begin{align}
\label{eq:from-useful3}
\sum_{s=0}^{T-1} || (\alpha\widehat{\Sigma}_M )^a (Id - \alpha \widehat{\Sigma}_M)^s \hat  \xi_{T-s}^{(2)}||_{\cH_M} &= 
  \sum_{s=0}^{T-1} ||(\alpha\widehat{\Sigma}_M )^a (Id - \alpha \widehat{\Sigma}_M)^s  \widehat{\cS}^*_M \bar r_{(\theta_0 , \theta_{T-s})}||_{\cH_M} \nonumber \\
  &\leq \alpha^{-\frac{1}{2}} \sum_{s=0}^{T-1} ||(\alpha\widehat{\Sigma}_M )^{a} (Id - \alpha \widehat{\Sigma}_M)^s \widehat{\cS}^*_M|| \cdot 
||\bar r_{(\theta_0 , \theta_{T-s})}||_{n} \nonumber \\
&= \frac{\alpha^{-\frac{1}{2}}}{\sqrt{n}} \sum_{s=0}^{T-1} ||(\alpha\widehat{\Sigma}_M )^{a+1/2} (Id - \alpha \widehat{\Sigma}_M)^s || \cdot 
||\bar r_{(\theta_0 , \theta_{T-s})}||_{2} \;.
\end{align}
From \eqref{eq:taylor2}, for all $M\geq \widetilde{M}_0(\delta/4 , T)$, with probability at least $1-\delta/4$
\begin{align*}
||\bar r_{(\theta_0 , \theta_{T-s})}||^2_{2}&= \sum_{j=1}^n |r_{(\theta_0 , \theta_{T-s})}(x_j)|^2  \\
&\leq \sum_{j=1}^n\frac{C^2_{\sigma} B_\tau^9}{M} \;. 
\end{align*}
Hence, for all $s=0, ..., T-1$, 
\begin{equation}
\label{eq:from-useful4}
||\bar r_{(\theta_0 , \theta_{T-s})}||_{2} \leq \frac{\sqrt{n} \; C_{\sigma}B_\tau^3}{\sqrt M} \;.
\end{equation}
Moreover, applying Lemma \ref{prop0} yields 
\begin{align}
\label{eq:from-useful5}
||(\alpha\widehat{\Sigma}_M )^{a+\frac{1}{2}} (Id - \alpha \widehat{\Sigma}_M)^s || 
&= \sup_{x \in [0,1]} |(1-x)^s x^{a+\frac{1}{2}}  | \nonumber \\
&\leq \left(\frac{a+\frac{1}{2}}{a+\frac{1}{2}+s}\right)^{a+\frac{1}{2}} \;. 
\end{align}
Combining \eqref{eq:from-useful3}, \eqref{eq:from-useful4} and \eqref{eq:from-useful5} gives 
with Lemma \ref{prop1} with probability at least $1-\delta/4$
\begin{align}
\label{eq:from-useful6}
\sum_{s=0}^{T-1} || (\alpha\widehat{\Sigma}_M )^a (Id - \alpha \widehat{\Sigma}_M)^s \hat  \xi_{T-s}^{(2)}||_{\cH_M} 
&\leq \frac{C_{\sigma}B_\tau^3}{\alpha^{\frac{1}{2}} \sqrt M}
\sum_{s=0}^{T-1} \left(\frac{a+\frac{1}{2}}{a+\frac{1}{2}+s}\right)^{a+\frac{1}{2}} \nonumber  \\
&\leq \frac{4C_{\sigma}B_\tau^3}{\alpha^{\frac{1}{2}} \sqrt M} \; \log^{2a}(T)\cdot T^{\frac{1}{2}-a}\;.
\end{align}

\vspace{0.3cm}

{\bf Putting things together.} 
With \eqref{eq:from-useful6}, \eqref{eq:from-useful11} and \eqref{eq:from-useful2} we get with probability at least $1-\delta/2$
\begin{align}
\label{eq:from-useful16}
& || \widehat{\Sigma}_M ^a \hat u_{T+1} ||_{\cH_M}   \nonumber  \\
&\leq  \tilde C_{\kappa, r, R, \alpha ,g_{\theta_0}, \sigma} \; \log(96/\delta) \; \frac{B_\tau^2}{\sqrt M} \nonumber \\
& \; \; \times \; \left( \tilde  F_a(r, n , M, T) +    \sum_{s=0}^{T-1}  \left(\frac{a}{a+s} \right)^a \; 
\left( ||\Sigma_M^{\frac{1}{2}} \hat u_{T-s}||_{\cH_M} + \frac{ || \hat u_{T-s}||_{\cH_M}}{\sqrt{\alpha T}} \right)  \right) \;, 
\end{align}
for some $\tilde C_{\kappa, r, R, \alpha ,g_{\theta_0}} < \infty$ and where we set 
\begin{equation}
\label{eq:F2}
\tilde F_a( r, n , M, T)  :=  F_a( r, n , M, T) +  B_\tau \log^{2a}(T)\cdot T^{\frac{1}{2}-a} \;,
\end{equation}
where $F_a( r, n , M, T)$ is defined in \eqref{eq:F}.

\vspace{0.3cm}

Now recall that by \eqref{eq:from-useful} with $\lam = 1/(\alpha T)$ we have 
\begin{align*}
||\Sigma_M^{\frac{1}{2}} \hat u_{t}||_{\cH_M} 
&\leq 2 \;|| \widehat{\Sigma}_M ^{\frac{1}{2}} \hat u_{t} ||_{\cH_M} + 2 \;\frac{||\hat u_{t}||_{\cH_M}}{\sqrt{\alpha T}} \;.
\end{align*}  
Plugging this into \eqref{eq:from-useful16} gives 
\begin{align}
\label{eq:from-useful17}
& || \widehat{\Sigma}_M ^a \hat u_{T+1} ||_{\cH_M}   \nonumber  \\
&\leq  4\tilde C_{\kappa, r, R, \alpha ,g_{\theta_0}, \sigma} \; \log(96/\delta) \; \frac{B_\tau^2}{\sqrt M} \nonumber \\
& \; \; \times \; \left( \tilde  F_a( r, n , M, T) + \sum_{s=0}^{T-1} \left(\frac{a}{a+s} \right)^a \; 
\left( || \widehat{\Sigma}_M^{\frac{1}{2}} \hat u_{T-s}||_{\cH_M} 
+ \frac{|| \hat u_{T-s}||_{\cH_M}}{\sqrt{\alpha T}} \right)\right) \;, 
\end{align}
with $\tilde F_a( r, n , M, T)$ from \eqref{eq:F2}.

\vspace{0.3cm}

Setting now 
\begin{equation}
\label{eq:def-U}
  \hat U_t (T):= || \widehat{\Sigma}_M^{\frac{1}{2}} \hat u_{t}||_{\cH_M} + \frac{|| \hat u_{t}||_{\cH_M}}{\sqrt{\alpha T}} 
\end{equation}  
we find 
\begin{align}
\label{eq:the-iteration}
 \hat U_{T+1} (T) 
&\leq  C_{\bullet} \; \log(96/\delta) \; \frac{B^2_\tau}{\sqrt M}\nonumber \\
& \;\;\; \times \left( \tilde  F_{\frac{1}{2}}( r, n , M, T) + \frac{\tilde  F_{0}( r, n , M, T)}{ \sqrt{\alpha T}}  \right. \nonumber \\
& \left. \; +  \sum_{s=0}^{T-1} \left(\frac{1}{1+2s} \right)^{\frac{1}{2}} \;  \hat U_{T-s} (T)  +  
 \frac{1}{\sqrt{\alpha T}}\sum_{s=0}^{T-1}  \hat U_{T-s} (T)       \right) \;,
\end{align}
where $C_\bullet < \infty$ depends on $\kappa, r, R, \alpha, \sigma, g_{\theta_0}$.

We prove our claim now by induction over $T\in \mbn$. Having with probability at least $1-\delta$ 
\[ \forall \; t \in [T]\; : \;\; \hat U_{t} (T) \leq \frac{\eps_T}{2} \;, \] 
provided $M \geq M_0(\delta ,\eps_T, d)$, we obtain with \eqref{eq:the-iteration} 
\begin{align*}
\hat U_{T+1} (T) 
&\leq C_{\bullet} \; \log(96/\delta) \; \frac{B^2_\tau}{\sqrt M}\nonumber \\
& \;\;\; \times \left( \tilde  F_{\frac{1}{2}}( r, n , M, T) + \frac{\tilde  F_{0}( r, n , M, T)}{ \sqrt{\alpha T}}  
 +  \eps_T \; \sqrt{T}     \right) \;,
\end{align*}
Note that for some $C_\alpha < \infty$, with 
\[ n \geq T^{2r}\;, \quad M \geq B_\tau^4\; T^{2r}\;, \quad  \log(T) \leq \sqrt{T} \,, \]
we find 
\begin{align*}
 & \tilde  F_{\frac{1}{2}}( r, n , M, T) + \frac{\tilde  F_{0}( r, n , M, T)}{ \sqrt{\alpha T}} \\
& \leq C_{\alpha} \left( \frac{\eta_r(T)}{\sqrt T} +  d^{5/2}  T^{1/2-r} + B_\tau \sqrt{T} \right)\;.
\end{align*}
Thus, 
\begin{align*}
\hat U_{T+1} (T) 
&\leq C_{\bullet} \; \log(96/\delta) \; \frac{B^2_\tau}{\sqrt M} \; V_r(T, \eps_T)  \;,
\end{align*}
where we set 
\[  V_r(T, \eps_T) :=  \frac{\eta_r(T)}{\sqrt T} +  d^{5/2}  T^{1/2-r} + B_\tau \sqrt{T} +  \eps_T \; \sqrt{T}   \;. \]
Hence, 
\begin{equation}
\label{eq:UT}
 \hat U_{T+1} (T) \leq \frac{\eps_{T+1}}{2}\;, 
\end{equation} 
if we let 
\begin{equation}
\label{eq:anotherM}
 M \geq M_3(\delta, \eps_{T+1}):= 
\max\left\{ \; B_\tau^4 T^{2r} \; , \; C_\bullet \log^2(96/\delta) \; B_\tau^4 \; \frac{V_r(T, \eps_T )}{\eps_{T+1}} \; \right\} \;.
%\max\left\{ \frac{4}{\eps^2_T}  , 36C^2_{\bullet} \; \log^2(96/\delta) \; B_\tau^{4} \; \frac{\eps^2_T \; T}{\eps^2_{T+1}}\right\} \; . 
\end{equation} 

Combining the last bound with 
\eqref{eq:from-useful} proves the result for all 
\begin{equation}
\label{eq:M-again}
 M \geq M_0(\delta , \eps_{T+1}, d) := \max\{ \widetilde{M}_0(\delta , T+1) , 
M_1 (d, 1/(\alpha (T+1)) , \delta), M_2(r,d,T, \delta), M_3(\delta, \eps_{T+1}) \}
\end{equation} 
and 
\begin{equation}
\label{eq:n-again}
n \geq n_0 (\delta, d,r, T+1) := \max\{ n_1( 1/(\alpha (T+1)) , \delta), n_2(d, T, r)   \}\;.
\end{equation} 
\end{proof}

%%%%%%%%%%%%%%%%%%%%%%%%%%%%%%%%%%%%%%%%%%%%%%%%%%%%%%%%%%%%%%%%%%
%%%%% Bounding the Noise
%%%%%%%%%%%%%%%%%%%%%%%%%%%%%%%%%%%%%%%%%%%%%%%%%%%%%%%%%%%%%%%%%%

\subsubsection{Bounding the Noise}

\begin{proposition}
\label{prop:emp-ltwo}
Let $t \in [T]$, $r >0$, $\lam >0$, $\delta \in (0,1]$ and $\bar R = \max\{1, R\}$. 
Suppose that Assumptions \ref{ass:Taylor-is-satisfied}, \ref{ass:source} are satisfied. 
There exist an $n_0$, defined in \eqref{eq:assumpt-M-n}, and an $M_0$, defined in \eqref{eq:assumpt-M-n-2}, 
depending on $d, r, \lam , \alpha, t, \delta$, such that 
for all $n \geq n_0$, $M \geq M_0$, with probability at least $1-\delta$, we have 
\begin{align*}
 || g_{\theta_t} - g_\rho  ||_n  
 &\leq   \frac{C_{\sigma}}{ \sqrt M} B^2_\tau + 2\;||\Sigma_M^{\frac{1}{2}} \hat u_t||_{\cH_M} + 2\;\sqrt{\lam } \cdot || \hat u_t||_{\cH_M}  \\
& +  2\;C_r \bar R \; \log(12/\delta) \left( 1 + \sqrt{\lam \cdot \alpha t } \right) \; (\alpha t)^{-r} \\
& + \sqrt 2 \log^{\frac{1}{2}}\left( \frac{6}{\delta}\right) \cdot \left( \frac{B_{\alpha t}}{n} + \frac{V_{\alpha t}}{\sqrt n}  \right)^{\frac{1}{2}} + 
|| \cS_M  f^*_t -  ( g_\rho - g_{\theta_0})  ||_{L^2}  \;,
\end{align*}
for some $C_{\nabla g} < \infty$, $C_r < \infty$. 
Here, $B_{\alpha t}$, $V_{\alpha t}$ are defined in \eqref{eq:Balpha}, \eqref{eq:Valpha}, respectively.
\end{proposition}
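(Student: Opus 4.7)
My starting point is the Taylor expansion \eqref{eq:taylor} of $g_{\theta_t}$ around $\theta_0$, combined with the splitting $h_t = \hat u_t + f_t^M$. This gives
\begin{equation*}
g_{\theta_t} - g_\rho \;=\; r_{(\theta_t,\theta_0)} \;+\; \cS_M \hat u_t \;+\; \bigl(\cS_M f_t^M - \tilde g_\rho\bigr),
\end{equation*}
with $\tilde g_\rho = g_\rho - g_{\theta_0}$. Taking $\|\cdot\|_n$ and applying the triangle inequality produces three contributions that I would estimate separately.

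The Taylor remainder is controlled by Proposition \ref{prop6} under Assumption \ref{ass:Taylor-is-satisfied}: bounding $\|r_{(\theta_t,\theta_0)}\|_n \leq \|r_{(\theta_t,\theta_0)}\|_\infty$ and inserting $\|\theta_t-\theta_0\|_\Theta \leq B_\tau$ gives the first term $C_\sigma B_\tau^2/\sqrt M$ once $M \geq \widetilde M_0(\delta,T)$. For the linearization piece I use $\|\cS_M \hat u_t\|_n = \|\widehat{\Sigma}_M^{1/2}\hat u_t\|_{\cH_M}$ and compare the empirical and population covariances exactly as in \eqref{eq:from-useful}, invoking \cite[Prop.~A.15]{nguyen2023random}; this yields
\begin{equation*}
\|\cS_M \hat u_t\|_n \;\leq\; 2\,\|\Sigma_M^{1/2}\hat u_t\|_{\cH_M} \;+\; 2\sqrt{\lam}\,\|\hat u_t\|_{\cH_M}
\end{equation*}
on a high-probability event, provided $M \geq M_1(d,\lam,\delta)$ and $n \geq n_1(\lam,\delta)$.

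For the kernel-GD piece I introduce the population analogue $f_t^*$ of the iterates $f_t^M$, i.e.\ the iterates of the recursion \eqref{eq:def-recurions} in which $\widehat{\Sigma}_M$ and $\widehat{\cS}_M^* \bar y$ are replaced by their population counterparts $\Sigma_M$ and $\cS_M^* \tilde g_\rho$, and split
\begin{equation*}
\|\cS_M f_t^M - \tilde g_\rho\|_n \;\leq\; \|\cS_M(f_t^M - f_t^*)\|_n \;+\; \|\cS_M f_t^* - \tilde g_\rho\|_n .
\end{equation*}
Unrolling the recursion for $f_t^M - f_t^*$ in the spirit of Lemma \ref{lem:general-recursion} and applying Bernstein-type concentration to the resulting noise sums yields the variance contribution $\sqrt{2}\log^{1/2}(6/\delta)\,(B_{\alpha t}/n + V_{\alpha t}/\sqrt n)^{1/2}$ with $B_{\alpha t}, V_{\alpha t}$ from \eqref{eq:Balpha}–\eqref{eq:Valpha}. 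For the remaining term $\|\cS_M f_t^* - \tilde g_\rho\|_n$ I pass to the population norm via a further Bernstein inequality applied to the deterministic function $\cS_M f_t^* - \tilde g_\rho$; the spectral-filter representation of $f_t^*$ together with the source condition \eqref{hsource} delivers the bias-type term $2 C_r \bar R \log(12/\delta)(1+\sqrt{\lam\alpha t})(\alpha t)^{-r}$, while the deterministic $L^2$-residual $\|\cS_M f_t^* - \tilde g_\rho\|_{L^2}$ is simply kept and will eventually be handled by the random-features machinery of \cite{nguyen2023random}.

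The main technical obstacle is this last step: the source assumption \eqref{hsource} is stated in terms of the limiting operator $\cL_\infty^r$, so a quantitative bound on the finite-width population iterate $f_t^*$ requires transferring this regularity to the spectral filter associated to $\Sigma_M$, and the factor $(1+\sqrt{\lam\alpha t})$ is the natural price of that transfer. Once this is in hand, I would union-bound the failure events of the Taylor step, the covariance concentration, and the two Bernstein steps, and collect the required lower bounds on $M$ and $n$ into $M_0$ and $n_0$ as in \eqref{eq:assumpt-M-n}–\eqref{eq:assumpt-M-n-2}, which produces the stated bound with probability at least $1-\delta$.
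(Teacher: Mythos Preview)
Your decomposition is essentially the paper's: both pivot on the Taylor expansion, isolate the remainder, and then compare the linearized iterate $h_t$ to the population kernel-GD iterate $f_t^*$. The paper groups the pieces as $\|r\|_n + \|h_t - f_t^*\|_n + \|f_t^* - \tilde g_\rho\|_n$ and splits $h_t - f_t^* = \hat u_t + (f_t^M - f_t^*)$ only after passing to the $\cH_M$-norm, whereas you split earlier in the $\|\cdot\|_n$-norm; that reshuffling is immaterial.

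Where your proposal goes wrong is the attribution of the two non-trivial contributions. You claim that unrolling the recursion for $f_t^M - f_t^*$ and applying Bernstein produces the variance term $\sqrt{2}\log^{1/2}(6/\delta)(B_{\alpha t}/n + V_{\alpha t}/\sqrt n)^{1/2}$, and that the spectral-filter representation of $f_t^*$ together with the source condition produces the bias term $2C_r\bar R\log(12/\delta)(1+\sqrt{\lambda\alpha t})(\alpha t)^{-r}$. It is exactly the other way around. In the paper the bias term arises from $\|\Sigma_M^s(f_t^M - f_t^*)\|_{\cH_M}$ for $s\in\{0,\tfrac12\}$, bounded via \cite[Prop.~A.3]{nguyen2023random}; this is where the $\log(12/\delta)$ factor and the combination $(\alpha t)^{-r}+\sqrt{\lambda}(\alpha t)^{-(r-1/2)}=(1+\sqrt{\lambda\alpha t})(\alpha t)^{-r}$ come from. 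The variance term, with the specific constants $B_{\alpha t}$, $V_{\alpha t}$ of \eqref{eq:Balpha}--\eqref{eq:Valpha}, comes from a single Bernstein concentration of $\|f_t^* - \tilde g_\rho\|_n^2$ around $\|\cS_M f_t^* - \tilde g_\rho\|_{L^2}^2$ (via \cite[Prop.~A.23]{nguyen2023random}) applied to the \emph{deterministic} function $f_t^* - \tilde g_\rho$; the residual $\|\cS_M f_t^* - \tilde g_\rho\|_{L^2}$ is precisely what survives that step. If you try to execute your plan literally---extracting $B_{\alpha t}, V_{\alpha t}$ from the recursion for $f_t^M - f_t^*$, or the factor $(1+\sqrt{\lambda\alpha t})$ from $f_t^*$ alone without reference to $f_t^M$---neither computation produces those constants.

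A minor point: \eqref{eq:from-useful} is the population-$\leq$-empirical direction, whereas for $\|\hat u_t\|_n=\|\widehat\Sigma_M^{1/2}\hat u_t\|_{\cH_M}$ you need empirical-$\leq$-population. The paper uses $\|\widehat\Sigma_{M,\lambda}^{1/2}\Sigma_{M,\lambda}^{-1/2}\|\leq 2$ for that (see \eqref{eq:donno}). Both directions follow from the same proposition, so this is only a citation slip.
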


\begin{proof}[Proof of Proposition \ref{prop:emp-ltwo}] 
Let
\[    f^*_t := \cS_M^* \phi_t(\cL_M) g_\rho  \in \cH_M \;, \] 
where $\phi_t$ denotes the spectral regularization function associated to gradient descent, see e.g. \cite{Muecke2017op.rates}.  
We decompose the error into 
\begin{align}
\label{eq:error-decomp-emp-ltwo}
|| g_{\theta_t} -  g_\rho  ||_n 
&\leq ||  g_{\theta_t} - (g_{\theta_0} + h_t)    ||_n + 
      || h_t + g_{\theta_0} - g_\rho  ||_n  \nonumber \\
&\leq     ||   r_{(\theta_0 , \theta_t)}   ||_n + || h_t -  f^*_t ||_n    +   ||  f^*_t - (g_\rho - g_{\theta_0}) ||_n\;.
\end{align}

{\bf Step I: Bounding $ ||   r_{(\theta_0 , \theta_t)}   ||_n$.} 
By Assumption \ref{ass:Taylor-is-satisfied} and applying Proposition \ref{prop6} gives for the remainder term with 
probability at least $1-\delta/3$, 
for all $t \in [T]$
\begin{align}
\label{eq:taylor-emp-ltwo}
 ||   r_{(\theta_0 , \theta_t)}   ||^2_n &= \frac{1}{n}\sum_{j=1}^n |r_{(\theta_0 , \theta_t)}  (x_j)|^2\nonumber \\
 &\leq \frac{1}{n}\sum_{j=1}^n \frac{C^2_{\sigma}B^2_\tau }{M} || \theta_t - \theta_0 ||^4_{\Theta} \nonumber \\
 &\leq  \frac{C^2_{\sigma }}{M} B^4_\tau \;. 
\end{align}

\vspace{0.3cm}

{\bf Step II: Bounding $ || h_t -  f^*_t ||_n$.} We decompose this error term into 
\begin{align}
\label{eq:term2-emp-ltwo}
|| h_t -  f^*_t ||_n 
&= ||  \widehat{\Sigma}_M^{\frac{1}{2}} ( h_t -  f^*_t) ||_{\cH_M} \nonumber \\
&\leq ||\widehat{\Sigma}_M^{\frac{1}{2}}\widehat{\Sigma}_{M,\lambda}^{-\frac{1}{2}}|| 
\cdot ||\widehat{\Sigma}_{M,\lambda}^{\frac{1}{2}}\Sigma_{M,\lambda}^{-\frac{1}{2}}|| 
\cdot ||\Sigma_{M,\lambda}^{\frac{1}{2}} (h_t -  f^*_t)||_{\cH_M} \nonumber \\
&\leq ||\widehat{\Sigma}_{M,\lambda}^{\frac{1}{2}}\Sigma_{M,\lambda}^{-\frac{1}{2}}|| 
\cdot ||\Sigma_{M,\lambda}^{\frac{1}{2}} (h_t -  f^*_t)||_{\cH_M} \;,
\end{align}
since 
$||\widehat{\Sigma}_M^{\frac{1}{2}}\widehat{\Sigma}_{M,\lambda}^{-\frac{1}{2}}|| \leq 1 $. 
From \cite[Proposition A. 15]{nguyen2023random}, 
for any $\lam>0$, $n\geq n_1( \lam , \delta)$, with 
\begin{equation}
\label{eq:n1-def}
n_1( \lam , \delta):=\frac{8\kappa^2 \tilde\beta (\lam , \delta)}{\lambda}
\end{equation}
\[ \tilde{\beta} (\lam , \delta):= 
\log\left( \frac{24 \kappa^2\left((\left(1+2\log\left(\frac{12}{\delta}\right)\right)4\mathcal{N}_{\mathcal{L}_{\infty}}(\lambda)+1\right)}{\delta\|\mathcal{L}_\infty\|} \right) \]
and $M \geq M_1(d, \lam , \delta)$, 
\begin{equation}
\label{eq:the-one-and-only}
  M_1(d, \lam , \delta) := \frac{8 (d+2)\kappa^2 \beta_\infty (\lam , \delta)}{\lambda}\vee 8\kappa^4\|\mathcal{L}_\infty\|^{-1}\log^2 \left(\frac{12}{\delta}\right) \;, 
\end{equation}  
with
\[ \beta_\infty (\lam , \delta) =\log \left(\frac{24 \kappa^2(\mathcal{N}_{\mathcal{L}_\infty}(\lambda)+1)}{\delta\|\mathcal{L}_\infty\|} \right)\;, \]
we have with probability at least $1-\delta /6$
\begin{equation}
\label{eq:donno}
 \left\|\widehat{\Sigma}_{M,\lambda}^{\frac{1}{2}}\Sigma_{M,\lambda}^{-\frac{1}{2}}\right\| \leq 2. 
\end{equation} 
Furthermore, 
\begin{equation}
\label{eq:htminusfstart}
 ||\Sigma_{M,\lambda}^{\frac{1}{2}} (h_t -  f^*_t)||_{\cH_M} \leq  
||\Sigma_{M}^{\frac{1}{2}} (h_t -  f^*_t)||_{\cH_M} + 
\sqrt{\lambda } \cdot || h_t -  f^*_t||_{\cH_M} \;.
\end{equation}

We proceed by writing for any $s \in [0, \frac{1}{2}]$
\begin{align}
\label{eq:s-dep}
||\Sigma_{M}^{s} (h_t -  f^*_t)||_{\cH_M} 
&\leq ||\Sigma_{M}^{s} (h_t -  f^M_t)||_{\cH_M}  + ||\Sigma_{M}^{s} (f^M_t -  f^*_t)||_{\cH_M} \nonumber \\
&=  ||\Sigma_{M}^{s} \hat u_t ||_{\cH_M} + ||\Sigma_{M}^{s} (f^M_t -  f^*_t)||_{\cH_M} \;,
\end{align}
where $f^M_t$ is defined in \eqref{eq:def-recurions} and $ \hat u_t$ is 
defined in Lemma \ref{lem:general-recursion}.

Proposition \cite[Proposition A.3]{nguyen2023random} 
shows that there exists an $n_2(d,r, \alpha, t, \delta)$ such that for 
any $n \geq n_2(\alpha, t, \delta)$, with 
probability at least $1-\delta/6$
\[ ||\Sigma_{M}^{s} (f^M_t -  f^*_t)||_{\cH_M} \leq C_r (\alpha t)^{-(r+s-\frac{1}{2})} \left( \log(12/\delta) + R \right) \;,\] 
for some $C_r < \infty$. For this bound to hold, we need the number of neurons $M$ to be sufficiently large, i.e. 
$M \geq M_2(d,r,\alpha, t, \delta)$, with 
\begin{align*}
M_2(d,r, \alpha, t, \delta) &:=  
\begin{cases}
 8 (d+2)\kappa^2 \cdot \alpha t \cdot \beta'_\infty \vee C_{\delta,\kappa} & r\in\left(0,\frac{1}{2}\right)\\
(8 (d+2)\kappa^2  \cdot \alpha t \cdot \beta'_\infty)\vee C_1^{\frac{1}{r}}\vee \frac{C_2}{(\alpha t)^{-(1+b(2r-1))} } 
\vee C_{\delta,\kappa} & r\in\left[\frac{1}{2},1\right] \\
\frac{C_3}{(\alpha t)^{-2r}}\vee C_{\delta,\kappa} & r \in(1,\infty),\\
\end{cases}
\end{align*}
where 
\[ C_{\delta,\kappa}= 8\kappa^4\|\cL_\infty\|^{-1}\log^2\left( \frac{12}{\delta}\right) \]
and $\beta'_\infty (\alpha t) $ is defined by %in \eqref{eq:betainfty}. 
\begin{equation}
\label{eq:beta-infty-pime}
  \beta'_\infty (\alpha t) := \log \left( \frac{24\kappa^2(\cN_{\cL_{\infty}}(1/(\alpha t))+1)}{\delta ||\cL_\infty||} \right) \;.
\end{equation}
Combining the last bound with \eqref{eq:s-dep} then gives with probability at least $1-\delta/6$
\begin{align*}
||\Sigma_{M}^{s} (h_t -  f^*_t)||_{\cH_M}  
&\leq ||\Sigma_M^{s} \hat u_t||_{\cH_M} +  C_r (\alpha t)^{-(r+s-\frac{1}{2})} \left( \log(12/\delta) + R \right) \\
&\leq  ||\Sigma_M^{s} \hat u_t||_{\cH_M} +  C_r \bar R\; (\alpha t)^{-(r+s-\frac{1}{2})}  \log(12/\delta)  \;,
\end{align*}
where we set $\bar R = \max\{R, 1\}$ and use that $1 \leq \log(12/ \delta)$, for all $\delta \in (0, 1]$.

Plugging this into \eqref{eq:htminusfstart} shows  
\begin{align}
\label{eq:htminusfstart2}
 ||\Sigma_{M,\lambda}^{\frac{1}{2}} (h_t -  f^*_t)||_{\cH_M} &\leq  
 ||\Sigma_M^{\frac{1}{2}} \hat u_t||_{\cH_M}   + 
 C_r \bar R\;  (\alpha t)^{-r} \; \log(12/\delta)  \nonumber  \\
 &+ \sqrt{\lam } \cdot || \hat u_t||_{\cH_M}   + 
 C_r \bar R\; \sqrt{\lam } \cdot(\alpha t)^{-(r-\frac{1}{2})} \;  \log(12/\delta)   \;,
\end{align}
with probability at least $1-\delta/6 $.

Finally, from \eqref{eq:htminusfstart2} and \eqref{eq:donno}, \eqref{eq:term2-emp-ltwo}, we get with 
probability at least $1-\delta/3$ 
\begin{align}
\label{eq:empht-final}
 || h_t -  f^*_t ||_n 
&\leq  2\; |\Sigma_M^{\frac{1}{2}} \hat u_t||_{\cH_M} + 2 C_r \bar R\;  (\alpha t)^{-r} \; \log(12/\delta)  \nonumber  \\
 &+ 2\; \sqrt{\lam } \cdot || \hat u_t||_{\cH_M} + 2C_r \bar R\; \sqrt{\lam } \cdot(\alpha t)^{-(r-\frac{1}{2})} \;\log(12/\delta) \;,
\end{align}
under the given assumptions.

\vspace{0.3cm}

{\bf Step III: Bounding $||  f^*_t - ( g_\rho - g_{\theta_0})  ||_n$.} The last term is decomposed as 
\begin{align*}
%\label{eq:lastterm1}
||  f^*_t -( g_\rho - g_{\theta_0})  ||_n &\leq \sqrt{||  f^*_t - ( g_\rho - g_{\theta_0})   ||^2_n - 
  ||  \cS_M f^*_t -( g_\rho - g_{\theta_0})  ||^2_{L^2} }
+ || \cS_M  f^*_t - ( g_\rho - g_{\theta_0})   ||_{L^2} \;.
\end{align*}
The first term in the above inequality is bounded by applying \\cite[Proposition A.23]{nguyen2023random}. 
Since by Lemma \ref{lem:norm-fstar}, for some $C'_{\kappa, R} < \infty$
\[ ||f^*_t ||_\infty \leq C'_{\kappa, R}\;  (\alpha t)^{\max\{0, \frac{1}{2}-r\}} \;, \]
we have for all $M \geq M_3(d, \alpha, t, \delta)$, 
\[ M_3 (d, \alpha, t, \delta) := 8(d+2) \kappa^2 \; \alpha t \; \beta'_\infty (\alpha t) \;,\] 
and $\beta'_\infty (\alpha t)$ from \eqref{eq:beta-infty-pime}, with probability at least $1-\delta /3$, 
\begin{align*}
 \left|  ||  f^*_t -  ( g_\rho - g_{\theta_0}) ||^2_n - ||  \cS_M f^*_t -  ( g_\rho - g_{\theta_0}) ||^2_{L^2} \right| 
&\leq 2\log\left( \frac{6}{\delta}\right) \cdot \left( \frac{B_{\alpha t}}{n} + \frac{V_{\alpha t}}{\sqrt n}  \right)\;,
\end{align*}
where for some $C_{\kappa, R} < \infty$ we set 
\begin{equation}
\label{eq:Balpha}
 B_{\alpha t} = 4\left( Q^2 + C^2_{\kappa, R}(\alpha t)^{-2\min\{0, r-\frac{1}{2}\}} \right)
\end{equation}
and
\begin{equation}
\label{eq:Valpha}
 V_{\alpha t} = \sqrt 2 \left( Q +  C_{\kappa, R}(\alpha t)^{-\min\{0, r-\frac{1}{2}\}}\right) \cdot 
 ||  \cS_M f^*_t -  ( g_\rho - g_{\theta_0})  ||_{L^2} \;. 
\end{equation}
Hence, 
\begin{equation}
\label{eq:lastterm2}
||  f^*_t -  ( g_\rho - g_{\theta_0}) ||_n \leq 
\sqrt 2 \log^{\frac{1}{2}}\left( \frac{6}{\delta}\right) \cdot \left( \frac{B_{\alpha t}}{n} + \frac{V_{\alpha t}}{\sqrt n}  \right)^{\frac{1}{2}} + 
|| \cS_M  f^*_t -  ( g_\rho - g_{\theta_0}) ||_{L^2} \;.
\end{equation}

\vspace{0.3cm}

{\bf Step IV: Combining all previous steps.}
Let 
\begin{align}
\label{eq:assumpt-M-n}
n_0 (d,r, \alpha, t, \lam, \delta ) &:= \max \{ n_1(\lam , \delta ) , n_2(d,r, \alpha, t, \delta) \} \;, \\
\label{eq:assumpt-M-n-2}
M_0 (d,r, \alpha, t, \lam , \delta ) &:= \max \{\widetilde{M}_0(\delta, T), M_1(d, \lam, \delta ) , M_2(d,r, \alpha, t, \delta) , M_3(d,\alpha, t, \delta)  \}   \;.
\end{align}
Combining now \eqref{eq:lastterm2} with \eqref{eq:empht-final}, \eqref{eq:taylor-emp-ltwo} and 
\eqref{eq:error-decomp-emp-ltwo} finally gives for all $n\geq n_0 (d,r, \alpha, t, \delta)$ 
and for all $M \geq M_0 (d,r, \alpha, t, \delta)$
\begin{align*}
 || g_{\theta_t}- g_\rho ||_n  
 &\leq  \frac{C_{\sigma}}{ \sqrt M} B^2_\tau + 2\;||\Sigma_M^{\frac{1}{2}} \hat u_t||_{\cH_M} + 2\;\sqrt{\lam } \cdot || \hat u_t||_{\cH_M}  \\
& +  2\;C_r \bar R \; \log(12/\delta) \left( (\alpha t)^{-r} + \sqrt{\lam } \cdot (\alpha t)^{-(r-\frac{1}{2})}  \right)  \\
& + \sqrt 2 \log^{\frac{1}{2}}\left( \frac{6}{\delta}\right) \cdot \left( \frac{B_{\alpha t}}{n} + \frac{V_{\alpha t}}{\sqrt n}  \right)^{\frac{1}{2}} + 
|| \cS_M  f^*_t -  ( g_\rho - g_{\theta_0})  ||_{L^2} \;,
\end{align*}
holding with probability at least $1-\delta$.
\end{proof}

%%%%%%%%%%%%%%%%%%%%%%%%%%%%%%%%%%%%%%%%%%%%%%%%%%%%%%%%%%%%%%%%%%%%%%%%%%%%%%%%%%%%%%%%%%%%%%%%%%%%%
%%%%%%%%%%%%%%%%%%%%%%%%%%%%%%%%%%%%%%%%%%%%%%%%%%%%%%%%%%%%%%%%%%%%%%%%%%%%%%%%%%%%%%%%%%%%%%%%%%%%%

\vspace{0.4cm}

\begin{lemma}
\label{lem:noise11}
Let Assumptions \ref{ass:input} and \ref{ass:Taylor-is-satisfied} be satisfied. For $t \in [T]$ define 
\[ \xi_{t}^{(11)} = \frac{1}{n} \sum_{j=1}^n(g_{\theta_t}(x_j)  -  g_\rho(x_j)) \left\langle \nabla g_{\theta_0}, \nabla g_{\theta_t}(x_j) - \nabla g_{\theta_0}(x_j)  \right\rangle_{\Theta} \]
There exists an $ M_0(r,d, \alpha, T, \delta) > 0$, defined in \eqref{eq:neurons2}, such that for all $M \geq M_0 (r,d, \alpha, T, \delta)$, with probability at least $1-\delta$, 
\begin{align*}
 || \xi_{t}^{(11)}||_{\cH_M} 
&\leq C_{\kappa, r, R, \alpha ,g_{\theta_0}} \; \log(12/\delta) \; \frac{C_{\sigma}B^2_\tau}{\sqrt M}  \;   
  \cdot \left(  \frac{C_{\sigma}B^2_\tau}{\sqrt M}  \right. \nonumber \\
& \left. + ||\Sigma_M^{\frac{1}{2}} \hat u_t||_{\cH_M} + \frac{ || \hat u_t||_{\cH_M}}{\sqrt{\alpha T}} + \frac{1}{(\alpha t)^{r}}  
 + \frac{1}{\sqrt n} + \frac{1}{(\alpha t)^{\frac{r}{2}} n^{\frac{1}{4}}}  \right) \;,
\end{align*}
for some $C_{\kappa, r, R, \alpha ,g_{\theta_0}, \sigma} < \infty$.
\end{lemma}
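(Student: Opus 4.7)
The plan is to reduce the estimate of $\|\xi_t^{(11)}\|_{\cH_M}$ to a bound on the empirical error $\|g_{\theta_t}-g_\rho\|_n$, and then to invoke Proposition \ref{prop:emp-ltwo}. I first rewrite the summand as a linear form in $\nabla g_{\theta_0}$:
$$\xi_t^{(11)}(x) = \langle \nabla g_{\theta_0}(x),\, \omega_t\rangle_\Theta, \qquad \omega_t := \frac{1}{n}\sum_{j=1}^n (g_{\theta_t}(x_j)-g_\rho(x_j))\bigl(\nabla g_{\theta_t}(x_j)-\nabla g_{\theta_0}(x_j)\bigr) \in \Theta.$$
By the very definition of $\cH_M$ this gives $\|\xi_t^{(11)}\|_{\cH_M}\le \|\omega_t\|_\Theta$, and the triangle inequality followed by Cauchy--Schwarz in the index $j$ yields
$$\|\omega_t\|_\Theta \le \|g_{\theta_t}-g_\rho\|_n \cdot \Bigl(\tfrac{1}{n}\sum_{j=1}^n \|\nabla g_{\theta_t}(x_j)-\nabla g_{\theta_0}(x_j)\|_\Theta^2\Bigr)^{1/2}.$$

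Next I use Proposition \ref{prop6} on the event of Assumption \ref{ass:Taylor-is-satisfied} to control the gradient difference uniformly in $x$: the Hessian of $g_\theta$ has operator norm at most $C_\sigma B_\tau/\sqrt M$ on a $B_\tau$-ball around $\theta_0$, so
$$\|\nabla g_{\theta_t}(x) - \nabla g_{\theta_0}(x)\|_\Theta \le \frac{C_\sigma B_\tau}{\sqrt M}\,\|\theta_t-\theta_0\|_\Theta \le \frac{C_\sigma B_\tau^2}{\sqrt M}.$$
This supplies the $C_\sigma B_\tau^2/\sqrt M$ prefactor appearing in the statement and reduces the problem to bounding $\|g_{\theta_t}-g_\rho\|_n$.

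For the latter I invoke Proposition \ref{prop:emp-ltwo} with $\lambda := 1/(\alpha T) \le 1/(\alpha t)$. Its right-hand side matches the target bound term by term: (i) the Taylor remainder $C_\sigma B_\tau^2/\sqrt M$ gives the $C_\sigma B_\tau^2/\sqrt M$ inside the parenthesis; (ii) the two operator-norm terms give $\|\Sigma_M^{1/2}\hat u_t\|_{\cH_M}$ and $\sqrt\lambda\,\|\hat u_t\|_{\cH_M} = \|\hat u_t\|_{\cH_M}/\sqrt{\alpha T}$; (iii) the bias $\bar R\log(12/\delta)\,(1+\sqrt{\lambda\cdot\alpha t})\,(\alpha t)^{-r}$ collapses to a constant multiple of $(\alpha t)^{-r}$ since $\sqrt{\lambda\cdot\alpha t}\le 1$; (iv) the random-feature $L^2$-bias $\|\cS_M f_t^\ast-(g_\rho-g_{\theta_0})\|_{L^2}$ is also of order $(\alpha t)^{-r}$ by the spectral analysis of \cite{nguyen2023random}; and (v) the concentration piece $\sqrt{B_{\alpha t}/n+V_{\alpha t}/\sqrt n}$ splits via $\sqrt{a+b}\le\sqrt a+\sqrt b$ into a $1/\sqrt n$ part (coming from the bounded $B_{\alpha t}$) and, using $V_{\alpha t}\lesssim (\alpha t)^{-r}$, a $(\alpha t)^{-r/2} n^{-1/4}$ part. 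Collecting the thresholds $n_0,M_0$ from Proposition \ref{prop:emp-ltwo} uniformly in $t\in[T]$ with $\lambda=1/(\alpha T)$ defines the constant $M_0(r,d,\alpha,T,\delta)$ referenced in \eqref{eq:neurons2}, and multiplying the resulting empirical-error bound by $C_\sigma B_\tau^2/\sqrt M$ yields the claimed inequality.

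The main obstacle is the uniform Hessian estimate in the second step: within a $B_\tau$-ball around initialization the outer-layer weights $a_m$ are only bounded by $\tau+B_\tau$, and it is this growth that produces the extra $B_\tau$ beyond the one already used through Cauchy--Schwarz, leading to the $B_\tau^2/\sqrt M$ prefactor (rather than a naive $B_\tau/\sqrt M$). This is precisely the content of Proposition \ref{prop6}. Everything else is bookkeeping: matching each term in Proposition \ref{prop:emp-ltwo} to its counterpart in the lemma and performing a single union bound over the $O(1)$ high-probability events involved.
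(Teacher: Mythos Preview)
Your proposal is correct and follows essentially the same route as the paper: both reduce $\|\xi_t^{(11)}\|_{\cH_M}$ to $\|g_{\theta_t}-g_\rho\|_n$ via the uniform gradient-Lipschitz bound (Proposition~\ref{prop:LipschitzGradient}, which is what underlies the Proposition~\ref{prop6} you cite) and then invoke Proposition~\ref{prop:emp-ltwo} with $\lambda=1/(\alpha T)$, simplifying each of its terms exactly as you describe. The only cosmetic difference is that the paper carries an explicit factor $\|\nabla g_{\theta_0}\|_{\cH_M}$ (absorbed into the constant) and uses H\"older instead of Cauchy--Schwarz in the index $j$, but these are equivalent here.
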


\begin{proof}[Proof of Lemma \ref{lem:noise11}]
By applying Proposition \ref{prop:emp-ltwo}, Proposition \ref{prop:LipschitzGradient} and H\"older's inequality we find 
under Assumption ref{ass:Taylor-is-satisfied} with probability at least $1-\delta$ 
\begin{align}
\label{eq:from-useful8}
 || \xi_{t}^{(11)}||_{\cH_M} 
&\leq || \nabla g_{\theta_0} ||_{\cH_M} C_{\sigma} \frac{ B^2_\tau }{\sqrt M} \; ||\theta_t - \theta_0||_{\Theta} \; \frac{1}{n}\sum_{j=1}^n|g_{\theta_t}(x_j)  -  g_\rho(x_j)| \nonumber \\ 
 &\leq || \nabla g_{\theta_0} ||_{\cH_M} C_{\sigma} \frac{ B^2_\tau }{\sqrt M} \;  ||g_{\theta_t} - g_\rho ||_n \nonumber \\
 &\leq || \nabla g_{\theta_0} ||_{\cH_M} C_{\sigma} \frac{ B^2_\tau }{\sqrt M}   \cdot \left(   \frac{C_{\sigma}B^2_\tau}{\sqrt M} 
  + 2\;||\Sigma_M^{\frac{1}{2}} \hat u_t||_{\cH_M} + 2\;\sqrt{\lam } \cdot || \hat u_t||_{\cH_M}\right. \nonumber \\
% & \left. + 2\;||\Sigma_M^{\frac{1}{2}} \hat u_t||_{\cH_M} + 2\;\sqrt{\lam } \cdot || \hat u_t||_{\cH_M} \right. \nonumber \\
 & \left. +  2\;C_r \bar R \; \log(12/\delta) \left( 1 + \sqrt{\lam \cdot \alpha t } \right) \; (\alpha t)^{-r} \right. \nonumber \\
&  \left. + \sqrt 2 \log^{\frac{1}{2}}\left( \frac{6}{\delta}\right) \cdot \left( \frac{B_{\alpha t}}{n} + \frac{V_{\alpha t}}{\sqrt n}  \right)^{\frac{1}{2}} + 
|| \cS_M  f^*_t -  \bar g_\rho  ||_{L^2} \right) \;,
\end{align}
with $\bar g_\rho = g_\rho - g_{\theta_0}$ and provided $M \geq \widetilde{M}_0(\delta/2 , T)$. 
We now simplify the above term. To this end, choose $\lam = \frac{1}{\alpha T}$. Then 
\[ \lam \cdot \alpha t = \frac{\alpha t}{\alpha T} \leq 1 \;.\]
%Furthermore, since $\delta \mapsto  B(\delta , M, T)$ is decreasing, we have $ B(\delta , M, T) \leq  B(\delta /3 , M, T)$. 
Furthermore, from \cite[Proposition A.1]{nguyen2023random} we have with $\lam = (\alpha t)^{-1}$
\begin{equation}
\label{eq:bound-RF}
 || \cS_M  f^*_t -  \bar g_\rho  ||_{L^2} \leq C_r\; R\; (\alpha t)^{-r} \;, 
\end{equation}
for some $C_r < \infty$. Note that for this bound to hold, we need $M \geq M_1(d, \alpha, T, \delta)$, where

\begin{align}
\label{eq:neurons}
 M_1(r, d, \alpha, T, \delta) &:=  
\begin{cases}
24 d\kappa^2 \; (\alpha T) \; \beta_\infty ( T, \delta) \vee C_{\delta,\kappa} & r\in\left(0,\frac{1}{2}\right)  \\
\frac{(24 d \kappa^2 \beta_\infty ( T, \delta) )\vee C_1 (T, \delta)^{\frac{1}{r}}}{(\alpha T)^{-1}}\vee (\alpha T)^{(1+b(2r-1))} C_2(T, \delta) 
\vee C_{\delta,\kappa} & r\in\left[\frac{1}{2},1\right]   \\
 C_{\kappa, r} \log^2(4/\delta) \; (\alpha T)^{2r} \vee C_{\delta,\kappa} & r \in(1,\infty) ,  \\
\end{cases}
\end{align}
for some $C_{\kappa, r} < \infty$ and where 
\[ C_1 (T, \delta) =2\left( 4\kappa\log \left( \frac{4}{\delta}\right)\right)^{2r-1}(24d\kappa^2\beta_\infty (T, \delta))^{1-r} \;, \]

\[  C_2 (T, \delta) =4 \left( 4c_b\kappa^2\log\left( \frac{4}{\delta} \right)\right)^{2r-1}(24 d\kappa^2\beta_\infty (T, \delta))^{2-2r} \;, \]

\[ \beta_{\infty} (T, \delta) := \log\left( \frac{8 \kappa^2 (1+ \cN_\infty (1/(\alpha T)))}{ \delta ||\cL_\infty ||} \right) \;, \]

\[ C_{\delta,\kappa} := 8\kappa^4\|\mathcal{L}_\infty\|^{-1}\log^2 \left( \frac{4}{\delta}\right) \;. \]
Recall the definition of $B_{\alpha t}$ and $V_{\alpha t}$ from \eqref{eq:Balpha}, \eqref{eq:Valpha}, respectively. Since 
$t^{-2\min\{0, r-\frac{1}{2}\}} \leq 1$ for any $t \in [T]$ we may bound 
\begin{equation}
\label{eq:Balphat-up}
 B_{\alpha t} \leq 4\left( Q^2 + C^2_{\kappa, R}(\alpha )^{-2\min\{0, r-\frac{1}{2}\}} \right) =: B_{\kappa, r, R, \alpha} \;.
\end{equation} 
Moreover, by the same reasoning and with \eqref{eq:bound-RF}, we find 
\begin{equation}
\label{eq:Valphat-up}
V_{\alpha t} 
\leq \sqrt{2}\left(  Q + C_{\kappa, R} \alpha^{-\min\{0, r-\frac{1}{2}\}}\right) \cdot || \cS_M f_t^* - \bar g_\rho||_{L^2}
= V_{\kappa, r, R, \alpha}  \cdot R\cdot (\alpha t)^{-r} \;, 
\end{equation} 
with 
\[  V_{\kappa, r, R, \alpha} = C_r \cdot \sqrt{2}\left(  Q + C_{\kappa, R} \alpha^{-\min\{0, r-\frac{1}{2}\}}\right)\;. \]
Hence, by \eqref{eq:from-useful8}, for all $t \in [T]$, with probability at least $1- \delta$, we obtain
\begin{align*}
 || \xi_{t}^{(11)}||_{\cH_M} 
&\leq C_{\kappa, r, R, \alpha ,g_{\theta_0}, \sigma} \; \log(12/\delta) \;  \frac{B^2_\tau}{\sqrt M}  \;   
  \cdot \left(  \frac{B^2_\tau}{\sqrt M}  \right. \nonumber \\
& \left. + ||\Sigma_M^{\frac{1}{2}} \hat u_t||_{\cH_M} + \frac{ || \hat u_t||_{\cH_M}}{\sqrt{\alpha T}} + \frac{1}{(\alpha t)^{r}}  
 + \frac{1}{\sqrt n} + \frac{1}{(\alpha t)^{\frac{r}{2}} n^{\frac{1}{4}}}  \right) \;,
\end{align*}
where we collected all constants in $C_{\kappa, r, R, \alpha , g_{\theta_0}, \sigma} <\infty$ and provided 
$M \geq M_0(r, d, \alpha, T , \delta)$, where 
\begin{equation}
\label{eq:neurons2}
M_0(r, d, \alpha, T , \delta) = \max\{ \widetilde{M}_0(\delta /2 , T) , M_1(r, d, \alpha, T , \delta)\;, \}
\end{equation}
with $ M_1(r, d, \alpha, T , \delta)$ from \eqref{eq:neurons}.
\end{proof}

\vspace{0.3cm}

\begin{lemma}
\label{lem:noise21}
Let Assumptions \ref{ass:input} and \ref{ass:Taylor-is-satisfied} be satisfied. For $t \in [T]$ define 
\[  \xi_{t}^{(12)} :=    \frac{1}{n} \sum_{j=1}^n(g_\rho(x_j)  -  y_j) 
         \left\langle \nabla g_{\theta_0}, \nabla g_{\theta_t}(x_j) - \nabla g_{\theta_0}(x_j)  \right\rangle_{\Theta} \in \cH_M \;. \]
Then, for all $M\geq \widetilde{M}_0(\delta , T)$, with probability at least $1-\delta$, 
\[ || \xi_{t}^{(12)} ||_{\cH_M}   \leq 64C_Y ||\nabla g_{\theta_0} ||_{\cH_M} \;C'_\sigma \;(d+1)^{5/2} \;\log(4/\delta) \;\frac{B^2_\tau}{\sqrt{n \cdot M}}  \;.\] 
\end{lemma}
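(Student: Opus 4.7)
The element $\xi_t^{(12)}$ admits the representation
\[
\xi_t^{(12)}(x) = \langle \nabla g_{\theta_0}(x), w_t\rangle_\Theta, \qquad w_t := \frac{1}{n}\sum_{j=1}^n \epsilon_j \bigl[\nabla g_{\theta_t}(x_j) - \nabla g_{\theta_0}(x_j)\bigr] \in \Theta,
\]
with $\epsilon_j := g_\rho(x_j) - y_j$. Since this exhibits $\xi_t^{(12)}$ through the canonical feature map of $\cH_M$, one has $\|\xi_t^{(12)}\|_{\cH_M} \leq \|\nabla g_{\theta_0}\|_{\cH_M}\cdot \|w_t\|_\Theta$, reducing the lemma to bounding $\|w_t\|_\Theta$. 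Because $\theta_t$ depends on the noise, the plan is to replace it by a uniform bound over $\theta \in B_\Theta(\theta_0, B_\tau)$, which is legitimate on the event from Assumption~\ref{ass:Taylor-is-satisfied}.

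The crucial structural fact is that the block $\nabla_m g_\theta(x)$ depends only on $\theta_m=(a_m,b_m,c_m)\in\mbr^{d+2}$, so $\|w_t\|_\Theta^2 = \sum_{m=1}^M\|w_{t,m}\|^2$ with each $w_{t,m}$ determined by $\theta_{t,m}$ alone. A per-neuron second-order Taylor expansion gives $w_{t,m} = A_m(\theta_{t,m}-\theta_{0,m}) + r_m(\theta_{t,m}-\theta_{0,m})$, where $A_m := \tfrac1n\sum_j \epsilon_j H_m(x_j)$ is an average of i.i.d.\ mean-zero $(d+2)\times(d+2)$ matrices with $\|H_m(x_j)\|_{\mathrm{op}} \lesssim C_\sigma(1+\tau)/\sqrt M$, and the remainder $r_m(v) := \tfrac1n\sum_j \epsilon_j R_{j,m}(v)$ satisfies the per-summand bound $\|R_{j,m}(v)\|\lesssim (C_\sigma/\sqrt M)\|v\|^2$ by Proposition~\ref{prop6}. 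Matrix Bernstein in dimension $d+2$ together with a union bound over $m\in[M]$ yields $\max_m\|A_m\|_{\mathrm{op}} \lesssim C_Y C_\sigma\sqrt{d\log(M/\delta)}/\sqrt{nM}$, and a peeling argument over annular shells in $\mbr^{d+2}$ (inside each shell, Bernstein at an $\varepsilon$-net of cardinality $\sim\varepsilon^{-(d+2)}$ together with a Lipschitz extension) delivers the uniform scaling
\[
\sup_{\|v\|\leq B_\tau}\|r_m(v)\|\;\lesssim\; C_\sigma'(d+1)^{5/2}\log(4/\delta)\,\frac{\|v\|^2}{\sqrt{nM}}.
\]

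Summing the two contributions and using $\sum_m\|\theta_{t,m}-\theta_{0,m}\|^2\leq B_\tau^2$ for the linear term together with the quartic identity $\sum_m\|\theta_{t,m}-\theta_{0,m}\|^4 \leq (\max_m\|\cdot\|^2)\cdot (\sum_m\|\cdot\|^2) \leq B_\tau^4$ for the remainder produces $\|w_t\|_\Theta \lesssim C_Y C_\sigma'(d+1)^{5/2}\log(4/\delta)\,B_\tau^2/\sqrt{nM}$, from which the claim follows. The main obstacle is the uniform step for the remainder: it is precisely the interaction of the quadratic-in-$\|v\|$ scaling of $\sup_v\|r_m(v)\|$ (delivered by the peeling argument) with the quartic sum $\sum_m\|\theta_{t,m}-\theta_{0,m}\|^4\leq B_\tau^4$ that produces the $\sqrt M$ in the denominator and the extra factor of $B_\tau$ beyond the coarse Cauchy--Schwarz rate $C_Y B_\tau/\sqrt n$ one would otherwise obtain.
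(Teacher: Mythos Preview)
Your reduction to bounding $\|w_t\|_\Theta$ via $\|\xi_t^{(12)}\|_{\cH_M}\leq\|\nabla g_{\theta_0}\|_{\cH_M}\|w_t\|_\Theta$ matches the paper's first step. After that the two arguments diverge, and your route has a gap.

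The paper does \emph{not} decompose neuron by neuron. It treats $(\theta,z)\mapsto h(z)(\nabla g_\theta(x)-\nabla g_{\theta_0}(x))$ as a $\Theta$-valued function class indexed by $\theta\in Q_{\theta_0}(B_\tau)$, applies the uniform deviation bound of Proposition~\ref{prop:uniform-gen-bound}, and controls the empirical Rademacher complexity via Proposition~\ref{prop:Rademacher-Decomposition} (which reduces the $\Theta$-valued class to at most $(d+1)^2$ scalar Lipschitz classes over a ball in $\mbr^{d+1}$) together with Proposition~\ref{prop:Rademacher-Lipschitz}. This yields $\widehat\Re_n(\cG)\lesssim (d+1)^{5/2}B_\tau^2/\sqrt{nM}$ with \emph{no} logarithmic factor in $M$ or $n$; the only $\log$ appears through the concentration term $G\log(4/\delta)/\sqrt n$ in Proposition~\ref{prop:uniform-gen-bound}. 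That is why the final constant is $(d+1)^{5/2}\log(4/\delta)$ and nothing more.

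Your per-neuron scheme, by contrast, cannot avoid a union bound over $m\in[M]$. You invoke it explicitly for $\max_m\|A_m\|_{\mathrm{op}}$, obtaining $\sqrt{\log(M/\delta)}$, but then drop the $\log M$ in the final display; the same issue recurs for the remainder, since your peeling/covering bound on $\sup_{\|v\|\le B_\tau}\|r_m(v)\|$ is established for a \emph{fixed} $m$ and must then hold simultaneously for all $m$. Moreover, the covering step itself contributes a $\sqrt{d\log(1/\varepsilon)}$ term from the net cardinality, and balancing the Lipschitz extension forces $\varepsilon$ of order $n^{-1/2}$, i.e.\ an extra $\sqrt{d\log n}$. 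Altogether your argument delivers
\[
\|w_t\|_\Theta \;\lesssim\; \frac{C_Y C_\sigma B_\tau^2}{\sqrt{nM}}\,\sqrt{d\log(nM/\delta)}\,,
\]
which is weaker than the stated bound (the lemma requires a constant independent of $M$ and $n$ for all $M\ge\widetilde M_0$). The quartic trick $\sum_m\|v_m\|^4\le B_\tau^4$ is correct and is what makes the $1/\sqrt M$ survive, but it does not remove the logarithmic price of the neuron-wise union bound. If you want to rescue the approach you would need a \emph{single} concentration or symmetrization step that handles all neurons at once---which is precisely what the paper's Rademacher decomposition accomplishes.
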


\begin{proof}[Proof of Lemma \ref{lem:noise21}]
By Cauchy-Schwarz, we obtain 
\begin{align*}
|| \xi_{t}^{(12)} ||_{\cH_M} &\leq ||\nabla g_{\theta_0} ||_{\cH_M} 
\cdot \left\| \frac{1}{n} \sum_{j=1}^n(g_\rho(x_j)  -  y_j)  \cdot \left( \nabla g_{\theta_t}(x_j) - \nabla g_{\theta_0}(x_j) \right)\right\|_\Theta \;.
\end{align*}
Setting $z=(x, y) \in \cZ:=\cX \times \cY$ and  $g: \Theta \times \cZ \to \Theta$ with 
\[ g(\theta , z) := (g_\rho(x)  -  y)  \cdot \left( \nabla g_{\theta}(x) - \nabla g_{\theta_0}(x)  \right)  \;,\]
%we have by Proposition \ref{prop:LipschitzGradient} for all $z \in \cZ$, $\theta_1 , \theta_2 \in Q_{\theta_0}(B_\tau)$
%\[  \left\| g(\theta_1 , z) - g(\theta_2 , z) \right\|_\Theta 
%\leq   2C_Y \frac{C_\sigma B_\tau}{\sqrt M}\; \left\|  \theta_1 - \theta_2 \right\|_\Theta   \] \;.
%Hence, 
the map $g$ belongs to the function class 
\[  
\cG := \{ g: B_{B_\tau }(\theta_0 ) \times \cZ \to \Theta \;|\; 
g(\theta , z) = h(z)\cdot  \left( \nabla g_{\theta}(x) - \nabla g_{\theta_0}(x)  \right)  \} \;,
\]
defined in Proposition \ref{prop:Rademacher-Decomposition}, with $h(z) =g_\rho(x)  -  y $ and $||h||_\infty \leq 2C_Y$. 
Moreover, for any $g \in \cG$, we have 
\[ \mbe_{Y|X} [ g(\theta , z) ] = 0  \] 
and 
\[ G:= \sup_{g \in \cG} ||g||_\infty  \leq 2C_Y \; \frac{C_\sigma B^2_\tau}{\sqrt M} \;. \]
Applying Proposition \ref{prop:uniform-gen-bound} and Corollary \ref{cor:Rademacher-Gclass} therefore leads us to 
\begin{align*}
|| \xi_{t}^{(12)} ||_{\cH_M} 
&\leq  ||\nabla g_{\theta_0} ||_{\cH_M} \;  \left\| \frac{1}{n} \sum_{j=1}^n g(\theta_t , z_j) \right\|_\Theta \\
&\leq ||\nabla g_{\theta_0} ||_{\cH_M} \; \sup_{g \in \cG}\;  \left\| \frac{1}{n} \sum_{j=1}^n g(\theta , z_j) \right\|_\Theta \\
&\leq ||\nabla g_{\theta_0} ||_{\cH_M} \; \left(  16||h||_\infty c_\sigma \;   \frac{ B_\tau^2(d+1)^2}{\sqrt{M}}\;  \sqrt{\frac{d+1}{n}} 
     + G \; \sqrt{\frac{2\log(4/\delta)}{n}} + G \; \frac{4\log(4/\delta)}{n} \right) \\
&\leq ||\nabla g_{\theta_0} ||_{\cH_M} \; 
        \left(  32C_Y c_\sigma \; B^2_\tau (d+1)^{5/2}\;  \sqrt{\frac{1}{n\cdot M}}  +   
     4C_Y \; \frac{C_\sigma B^2_\tau}{\sqrt M} \; \frac{4\log(4/\delta)}{\sqrt n} \right) \\
&\leq 64C_Y ||\nabla g_{\theta_0} ||_{\cH_M} \;C'_\sigma B^2_\tau \; (d+1)^{5/2} \;\frac{\log(4/\delta)}{\sqrt{n \cdot M}}   \;,
\end{align*}
with probability at least $1-\delta$, provided $M \geq \widetilde{M}_0(\delta , T)$. 
\end{proof}

\vspace{0.3cm}

\begin{lemma}
\label{lem:norm-fstar}
%\note{Let some Assumptions, e.g. source condition, be satisfied.} 
Let
\[    f^*_t := \cS_M^* \phi_t(\cL_M) g_\rho  \in \cH_M \;, \] 
where $\phi_t$ denotes the spectral regularization function associated to gradient descent. Then 
\[ ||f^*_t ||_\infty \leq C'_{\kappa, R}\;  (\alpha t)^{\max\{0, \frac{1}{2}-r\}} \;, \]
for some $C'_{\kappa, R} < \infty$. 
\end{lemma}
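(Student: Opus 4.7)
The plan is to reduce the sup-norm estimate to an RKHS-norm estimate and then apply the spectral calculus for the gradient descent filter function $\phi_t$, which satisfies $\sup_{\lambda\in[0,\kappa^2]} \lambda^a\phi_t(\lambda)\lesssim (\alpha t)^{1-a}$ for $a\in[0,1]$ and $\sup_\lambda \lambda\phi_t(\lambda)\leq 1$.

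First, since $\sup_{x\in\cX} K_M(x,x)\leq \kappa^2$, the reproducing property gives $\|f^*_t\|_\infty \leq \kappa\|f^*_t\|_{\cH_M}$, so it suffices to bound $\|f^*_t\|_{\cH_M}$. Using the adjoint identity $\cS_M \cS_M^* = \cL_M$ we rewrite
\[
 \|f^*_t\|_{\cH_M}^2
  = \langle \phi_t(\cL_M) g_\rho,\; \cS_M \cS_M^* \phi_t(\cL_M) g_\rho\rangle_{L^2}
  = \|\cL_M^{1/2}\phi_t(\cL_M) g_\rho\|_{L^2}^2.
\]
Substituting the source condition $g_\rho = \cL_\infty^r h_\rho$ with $\|h_\rho\|_{L^2}\leq R$ reduces the task to estimating the operator norm $\|\cL_M^{1/2}\phi_t(\cL_M)\cL_\infty^r\|$.

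Next I would split on $r$. For $r\in[0,1/2]$, the target exponent is $(\alpha t)^{1/2-r}$, which is exactly what one gets if $\cL_\infty$ could be replaced by $\cL_M$ inside the filter, since then $\sup_\lambda \lambda^{r+1/2}\phi_t(\lambda) \lesssim (\alpha t)^{1/2-r}$. For $r\geq 1/2$, I would factor $\cL_\infty^r = \cL_\infty^{1/2}\cdot \cL_\infty^{r-1/2}$, use the trivial bound $\|\cL_\infty^{r-1/2}\|\leq \kappa^{2r-1}$, and then bound the remaining piece $\|\cL_M^{1/2}\phi_t(\cL_M)\cL_\infty^{1/2}\|$ by a constant, using $\|\cL_M\phi_t(\cL_M)\|\leq 1$. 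Combining the two cases gives the claimed exponent $\max\{0,1/2-r\}$, with a constant $C'_{\kappa,R}$ depending only on $\kappa,R$.

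The main obstacle is the non-commutativity of $\cL_M$ (appearing inside the filter) and $\cL_\infty$ (appearing in the source condition). The clean resolution is to transfer the source condition from $\cL_\infty$ to $\cL_M$ by inserting regularized factors of $(\cL_M+\lambda)^{\pm s}$ and $(\cL_\infty+\lambda)^{\pm s}$ with $\lambda = (\alpha t)^{-1}$, and controlling $\|(\cL_M+\lambda)^{-s}(\cL_\infty+\lambda)^{s}\|$ by constants via the standard random-feature operator comparison results established in \cite{nguyen2023random}. Once all operators are expressed on the common $\cL_M$, the spectral calculus on $\phi_t$ delivers both cases simultaneously.
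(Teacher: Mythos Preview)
The paper states this lemma without proof, so there is no reference argument to compare against. Your outline is correct in spirit and would yield the stated exponent: the reduction $\|f^*_t\|_\infty\le\kappa\|f^*_t\|_{\cH_M}=\kappa\|\cL_M^{1/2}\phi_t(\cL_M)g_\rho\|_{L^2}$ is the right starting point, and the case split at $r=1/2$ together with the filter bounds $\sup_\lambda \lambda^a\phi_t(\lambda)\lesssim(\alpha t)^{1-a}$ delivers the claimed power once the source condition is transferred from $\cL_\infty$ to $\cL_M$.

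Two points deserve to be made explicit. First, the transfer step you flag as the ``main obstacle'' is genuinely needed and is not free: inserting $(\cL_{M,\lambda})^{\pm s}(\cL_{\infty,\lambda})^{\mp s}$ with $\lambda=(\alpha t)^{-1}$ and invoking the random-feature comparison $\|\cL_{M,\lambda}^{-1/2}\cL_{\infty,\lambda}^{1/2}\|\le 2$ (Proposition~A.14 in \cite{nguyen2023random}, as cited in the paper) is a high-probability statement requiring $M$ large enough. Hence the bound you obtain is not deterministic with a constant depending only on $\kappa,R$, but holds with probability $1-\delta$ for $M\ge M(\lambda,\delta,d)$. The lemma as written in the paper suppresses this, but every place it is invoked (Step~III of Proposition~\ref{prop:emp-ltwo} and the definitions of $B_{\alpha t},V_{\alpha t}$) already sits inside a high-probability event with exactly these constraints on $M$, so the discrepancy is cosmetic. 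Second, in the case $r\ge 1/2$ you cannot pass directly from $\|\cL_M\phi_t(\cL_M)\|\le 1$ to a bound on $\|\cL_M^{1/2}\phi_t(\cL_M)\cL_\infty^{1/2}\|$ without the same regularized insertion; you should write $\cL_\infty^{1/2}=\cL_{\infty,\lambda}^{1/2}(\cL_{\infty,\lambda}^{-1/2}\cL_\infty^{1/2})$, move to $\cL_{M,\lambda}^{1/2}$ via the comparison, and then use $\sup_\mu \mu^{1/2}\phi_t(\mu)(\mu+\lambda)^{1/2}\le 1+\sqrt{\lambda\alpha t}\le 2$. With these details filled in, your argument is complete.
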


%\begin{proof}[Proof of Lemma \ref{lem:norm-fstar}]
%\note{TBA}
%\end{proof}

%%%%%%%%%%%%%%%%%%%%%%%%%%%%%%%%%%%%%%%%%%%%%%%%%%
%%%%%%%%%%    Bounding III
%%%%%%%%%%%%%%%%%%%%%%%%%%%%%%%%%%%%%%%%%%%%%%%%%%

\subsection{Bounding III} % $\|\mathcal{S}_M f_T^M- \tilde g_\rho\|_{L^2(\rho_x)}$}

for bounding the last term in our error decomposition we use the results obtained in \cite{nguyen2023random}.

\begin{proposition}[Theorem 3.5 in \cite{nguyen2023random}]
\label{prop:random-feature-result}
Suppose Assumptions \ref{ass:neurons}, \ref{ass:source}, \ref{ass:input} and \ref{ass:dim} are satisfied. 
Let $\delta \in (0,1]$, $\alpha < 1/\kappa^2$. Let $T_n = n^{\frac{1}{2r+b}}$ and $2r + b >1$. 
With probability at least $1-\delta$ 
\[ \| \mathcal{S}_Mf_{T_n}^M-  g_\rho\|_{L^2} \leq C\; \log^3(2/\delta ) \; T_n^{-r } \;,\]
provided that 

\begin{align*}
M\geq \tilde{C}\log(n) \cdot \begin{cases}
T_n&: r\in\left(0,\frac{1}{2}\right)\\
T_n^{1+b(2r-1)}  &: r\in\left[\frac{1}{2},1\right] \\
T_n^{2r} &: r \in(1,\infty)\,\\
\end{cases}\\
\end{align*}
and $n\geq n_0:= e^{\frac{2r+b}{2r+b-1}}$. The constants $C< \infty$, $\tilde C< \infty$ 
do not depend on $n, \delta, M, T$.
\end{proposition}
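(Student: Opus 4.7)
The plan is to view $f_T^M$ as a spectral filter estimator and decompose the target error into (i) a deterministic bias with respect to the limiting kernel integral operator $\cL_\infty$, (ii) a sample variance term driven by noisy labels, and (iii) a random features approximation term measuring the distance between the empirical kernel $K_M$ and the NTK $K_\infty$. In closed form,
\[
f_T^M = \phi_T(\widehat{\Sigma}_M)\,\widehat{\cS}_M^{*}\bar y,\qquad \phi_T(\sigma)=\alpha\sum_{s=0}^{T-1}(1-\alpha\sigma)^s,
\]
so one can compare $f_T^M$ to an ``oracle'' population iterate $f_T^{\infty}:=\phi_T(\cL_\infty)\cL_\infty g_\rho$ built from the limiting operator. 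I would split
\[
\cS_M f_T^M - g_\rho \;=\; \bigl(\cS_M f_T^M - f_T^{\infty}\bigr) \;+\; \bigl(f_T^{\infty}-g_\rho\bigr).
\]

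For the deterministic ``bias'' term $f_T^{\infty}-g_\rho=(\phi_T(\cL_\infty)\cL_\infty - I)g_\rho$, the source condition $g_\rho=\cL_\infty^{r}h_\rho$ combined with the qualification of the gradient-descent filter $\phi_T$ yields $\|f_T^{\infty}-g_\rho\|_{L^2}\lesssim T^{-r}$ for any $r>0$ (no saturation, unlike Tikhonov). For the stochastic difference $\cS_M f_T^M - f_T^\infty$, I would interpolate through two intermediate quantities: first $\phi_T(\Sigma_M)\Sigma_M\cS_M^{*}g_\rho$, capturing the random-features effect, and then $\phi_T(\widehat{\Sigma}_M)\widehat{\Sigma}_M\cS_M^{*}g_\rho$, capturing the sampling effect. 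Each of the resulting differences is handled by (a) Bernstein-type concentration of $\widehat{\Sigma}_M$ around $\Sigma_M$ controlled by the effective dimension $\cN_{\cL_\infty}(\lambda)\le c_b\lambda^{-b}$, and (b) concentration of $\Sigma_M$ and $\cL_M$ around $\cL_\infty$, which is precisely what is used (with different parameter regimes in $r$) throughout the auxiliary bounds of \cite{nguyen2023random} and is already invoked in Proposition \ref{prop:emp-ltwo} and Lemma \ref{lem:noise11}.

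The variance and approximation pieces are then balanced with the regularization parameter identified as $\lambda=1/(\alpha T)$: standard filter calculus gives $\|\phi_T(\widehat{\Sigma}_M)\|\lesssim T$ and $\|\phi_T(\widehat{\Sigma}_M)\widehat{\Sigma}_M^{1/2}\|\lesssim\sqrt{T}$, so the data-dependent term is of order $\sqrt{\cN_{\cL_\infty}(\lambda)/(n\lambda)}\lesssim T^{(b-1)/2}\sqrt{T/n}$, while the source condition contributes $T^{-r}$. Setting $T_n=n^{1/(2r+b)}$ equalizes the two, producing the minimax rate $n^{-r/(2r+b)}$ (under $2r+b>1$ the sample complexity stays nontrivial). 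The required lower bound on $M$ is exactly what is needed for the operator perturbation step $\Sigma_M\approx\cL_\infty$ with resolvent tolerance $\lambda=T_n^{-1}$; the three regimes for $r$ come from the fact that bounding $\|\cL_\infty^s(\Sigma_M-\cL_\infty)\|$ at scale $\lambda$ needs different source conditions on $g_\rho$, producing the $T_n$, $T_n^{1+b(2r-1)}$, $T_n^{2r}$ thresholds respectively.

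The main obstacle is the last point: controlling the random features error uniformly across the three regimes of $r$ without losing polynomial factors in the filter qualification. Because $\phi_T$ saturates only at infinity (unlike Tikhonov), the misspecified regime $r<1/2$ requires estimating $\|\cS_M\phi_T(\widehat{\Sigma}_M)\Sigma_M^{1/2-r}\|$ with the operator Bernstein inequality, and the well-specified, smooth regime $r>1$ requires an extra interpolation step $\Sigma_M^{r-1/2}\mapsto\cL_\infty^{r-1/2}$, which is what inflates $M$ up to $T_n^{2r}$. Once these three cases are set up, the remainder of the argument is essentially a clean application of the results proved in \cite{nguyen2023random}, and the logarithmic factor $\log^3(2/\delta)$ is absorbed through three standard union bounds (one per perturbation step).
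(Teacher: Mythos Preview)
The paper does not actually prove this proposition: it is stated verbatim as ``Theorem 3.5 in \cite{nguyen2023random}'' and is simply imported as an external result to bound term III in the error decomposition \eqref{errordecomp2}. There is no proof in the paper to compare against.

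That said, your sketch is a faithful outline of how such a random-feature generalization bound is established (and, judging from the auxiliary propositions the present paper borrows from \cite{nguyen2023random}, it matches the architecture of the cited proof): write $f_T^M=\phi_T(\widehat{\Sigma}_M)\widehat{\cS}_M^*\bar y$ for the Landweber filter, split into approximation error $(\phi_T(\cL_\infty)\cL_\infty-I)g_\rho$ controlled by the source condition, a sample error governed by $\cN_{\cL_\infty}(\lambda)/(n\lambda)$ via Bernstein, and a random-features error $\Sigma_M\approx \cL_\infty$ whose required resolution at scale $\lambda=1/(\alpha T_n)$ produces the three $M$-thresholds in the regimes $r\in(0,\tfrac12)$, $[\tfrac12,1]$, $(1,\infty)$. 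Balancing at $T_n=n^{1/(2r+b)}$ gives the rate. So your proposal is correct in spirit; just be aware that within this paper the result is treated as a black box, and a complete proof would require reproducing the operator-perturbation lemmas (Propositions A.1, A.3, A.14, A.15, A.18, A.21 of \cite{nguyen2023random}) that do the heavy lifting in each of the three cases.
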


\section{Proof: The weights barely move!}
\label{app:weight-bounds}

%%%%%%%%%%%%%%%%%%%%%%%%%%%%%%%%%%%%%%%%%%%%%%%%%%%%%%%%%%%%%%%%%%%%%%%%%%
%%%%%%%%%% Main Result
%%%%%%%%%%%%%%%%%%%%%%%%%%%%%%%%%%%%%%%%%%%%%%%%%%%%%%%%%%%%%%%%%%%%%%%%%%

\begin{proof}[Proof of Theorem \ref{theo:weights-not-moving}] 
We prove this by induction over $T \in \mbn$.

Assume that the claim holds for $T \in \mbn$. We show that this implies the claim for $T+1$, for all $T \in \mbn$.

A short calculation shows that the weights $(\theta_t)_t$ follow the recursion 
\[ \theta_{T+1} - \theta_0 = \alpha \sum_{t=0}^{T}(Id - \alpha \widehat{\cC}_M)^t ( \zeta^{(1)}_{T-t} + 
 \zeta ^{(2)}_{T-t} + \zeta ^{(3)}_{T-t} + \zeta ^{(4)}_{T-t} + \zeta ^{(5)}_{T-t} )\;, \]
with
\begin{align*}
\zeta^{(1)}_{s} &= \frac{1}{n}\sum_{j=1}^n ( g_{\theta_s}(x_j) - g_{\rho}(x_j)) \cdot (\nabla g_{\theta_s} (x_j) - \nabla g_{\theta_0}(x_j) ) \;, \\
\zeta^{(2)}_{s} &= \frac{1}{n}\sum_{j=1}^n (g_{\rho}(x_j) - y_j) \cdot (\nabla g_{\theta_s} (x_j) - \nabla g_{\theta_0}(x_j) )\;, \\
\zeta^{(3)}_{s} &= \widehat{\cZ}^*_M \bar r_{(\theta_ 0 , \theta_{s}) } \\
\zeta^{(4)}_{s} &=  \widehat{\cZ}^*_M \by - \cZ_M^* g_\rho \\
\zeta^{(5)}_{s} &=  \cZ_M^* g_\rho \;. 
\end{align*}
Hence, 
\begin{align*}
||  \theta_{T+1} - \theta_0  ||_\Theta 
&\leq \alpha \sum_{t=0}^{T}\left\|  (Id - \alpha \widehat{\cC}_M)^s\zeta^{(1)}_{T-t} \right\|_\Theta   
  + \alpha  \sum_{t=0}^{T} \left\|  (Id - \alpha \widehat{\cC}_M)^s\zeta^{(2)}_{T-t} \right\|_\Theta   \nonumber \\
& \;\;\; + \alpha   \sum_{t=0}^{T} \left\|   (Id - \alpha \widehat{\cC}_M)^s\zeta^{(3)}_{T-t} \right\|_\Theta    
+   \alpha \sum_{t=0}^{T} \left\|   (Id - \alpha \widehat{\cC}_M)^s\zeta^{(4)}_{T-t} \right\|_\Theta \nonumber \\
& \;\;\; + \alpha \sum_{t=0}^{T} \left\|   (Id - \alpha \widehat{\cC}_M)^s\zeta^{(5)}_{T-t} \right\|_\Theta \;.
\end{align*}

\vspace{0.3cm}

{\bf Bounding $\alpha \sum_{s=0}^{t}\left\|  (Id - \alpha \widehat{\cC}_M)^s\zeta^{(1)}_{t-s} \right\|_\Theta  $.}

First, note that 
\begin{align*}
 \alpha \sum_{t=0}^{T}\left\|  (Id - \alpha \widehat{\cC}_M)^t\zeta^{(1)}_{T-t} \right\|_\Theta 
&\leq \alpha \sum_{t=0}^{T} \left\| (Id - \alpha \widehat{\cC}_M)^{T-t}  \right\| \cdot ||\zeta^{(1)}_{t}||_\Theta \;.
\end{align*}
We apply Proposition \ref{prop:emp-ltwo} with $\lam = 1/(\alpha (T+1))$ and proceed as in the proof of Lemma \ref{lem:noise11}.  
By additionally using Proposition \ref{prop:LipschitzGradient}, we therefore obtain  
with probability at least $1-\delta/10$
\begin{align*}
||\zeta^{(1)}_{t}||_\Theta 
&\leq C_\sigma \; \frac{ B^2_\tau }{n \cdot \sqrt{M} } \sum_{j=1}^n |  g_{\theta_t}(x_j) - g_{\rho}(x_j)| \nonumber \\
&\leq C_\sigma \; \frac{ B^2_\tau }{\sqrt{M}} \; || g_{\theta_t} - g_{\rho}||_n \nonumber \\
&\leq C_\sigma \; \frac{ B^2_\tau }{\sqrt{M}} \; \left( \frac{C_\sigma B^2_\tau}{\sqrt M}  + G_M(t, r, n, T+1)   \right)\;, 
\end{align*}
where for $t \in [T]$
\begin{equation}
\label{eq:G2}
G_M(t, r, n, T+1) := ||\widehat{\Sigma}_M^{\frac{1}{2}} \hat u_t||_{\cH_M} 
 + \frac{ || \hat u_t||_{\cH_M}}{\sqrt{\alpha (T+1)}} 
 + \frac{1}{(\alpha t)^{r}}  
 + \frac{1}{\sqrt n} + \frac{1}{(\alpha t)^{2r} n^{\frac{1}{4}}}  \;.
\end{equation}
Note that this holds for all $n \geq n_0 = n_0 (d, r , \alpha, t, T+1, \delta )$, and 
$M \geq M_0=M_0 (d, r , \alpha, t, T+1, \delta )$, 
where both, $n_0$ and $M_0$ depend on $d, r , \alpha, t, T+1, \delta $ and are 
defined in \eqref{eq:assumpt-M-n}, \eqref{eq:assumpt-M-n-2}, respectively.

We simplify the above bound. Observe that 
\[  \frac{1}{\sqrt n} \leq \frac{1}{(\alpha t)^{r}}\;, \]
if we assume that $n \geq (\alpha T)^{2r}$. 
Furthermore, under the same condition on the sample size, for any $t \in [T]$ and $r>0$, 
\[  \frac{1}{(\alpha t)^{r/2} n^{\frac{1}{4}}} \leq  \frac{1}{(\alpha t)^{2r}} \;. \]
By Theorem \ref{prop:second-term}, in particular \eqref{eq:UT}, 
with probability at least $1-\delta/10$, for all $t \in [T]$
\[ ||\widehat{\Sigma}_M^{\frac{1}{2}} \hat u_t||_{\cH_M} 
 + \frac{ || \hat u_t||_{\cH_M}}{\sqrt{\alpha (T+1)}}  \leq \frac{\eps_T}{2} \,, \]
if $n$,$M$ are sufficiently large, as given in Theorem \ref{prop:second-term}. 
Hence, 
\[ G_M(t, r, n, T+1) \leq \frac{\eps_T}{2} + \frac{3}{(\alpha t)^{r}} \;. \]
As a result, 
with probability at least $1-\delta/5$
\begin{align*}
%\label{eq:the-first}
 \alpha \sum_{t=0}^{T}\left\|  (Id - \alpha \widehat{\cC}_M)^t\zeta^{(1)}_{T-t} \right\|_\Theta 
&\leq \alpha  C_\sigma \; \frac{ B^2_\tau }{\sqrt{M}} \; \left( \frac{C_\sigma B^2_\tau \; T}{\sqrt M}  + 
  \frac{T \cdot \eps_T}{2} +  \sum_{t=0}^{T} \frac{3}{(\alpha t)^{r}}   \right) \nonumber \\
&\leq \alpha  C_\sigma \; \frac{ B^2_\tau }{\sqrt{M}} \; \left( \frac{C_\sigma B^2_\tau \; T}{\sqrt M}  + 
  \frac{T \cdot \eps_T}{2} +  \eta_r(T) \right)  \;.
\end{align*}
If we let $M \geq  M_1(\delta, T)$, with 
\begin{equation}
\label{eq:req-1}
 M_1 (\delta, T)  \geq 15\alpha C_\sigma^2 \; \max\left\{\; B_\tau^3 T \;, \;B_\tau T\eps_T \;,\; B_\tau \eta_r(T)  \; \right\}\;, 
\end{equation}
we finally obtain 
\begin{align}
\label{eq:the-first}
\alpha \sum_{t=0}^{T}\left\|  (Id - \alpha \widehat{\cC}_M)^t\zeta^{(1)}_{T-t} \right\|_\Theta 
&\leq \frac{B_\tau}{15} + \frac{B_\tau}{15} + \frac{B_\tau}{15} =\frac{B_\tau}{5} \;.
\end{align}

\vspace{0.3cm}

{\bf Bounding $\alpha \sum_{t=0}^{T}\left\|  (Id - \alpha \widehat{\cC}_M)^t\zeta^{(2)}_{T-t} \right\|_\Theta  $.} 

Following the lines of Lemma \ref{lem:noise21}, we obtain 
for all $M\geq \widetilde{M}_0(\delta , T)$, with probability at least $1-\delta/5$, 
\[ ||\zeta^{(2)}_{t}  ||_\Theta   \leq 64C_Y \;C'_\sigma B^2_\tau \; (d+1)^{5/2} \;\frac{\log(20/\delta)}{\sqrt{n \cdot M}}  \;.\] 
Hence, 
\begin{align}
\label{eq:the-second}
 \alpha \sum_{t=0}^{T}\left\|  (Id - \alpha \widehat{\cC}_M)^t\zeta^{(2)}_{T-t} \right\|_\Theta 
&\leq \alpha \sum_{t=0}^{T} \left\| (Id - \alpha \widehat{\cC}_M)^{T-t}  \right\| \cdot ||\zeta^{(2)}_{t}||_\Theta \nonumber \\
&\leq \tilde 
C_{\sigma} \; \frac{d^{5/2} B^2_\tau  \alpha T}{\sqrt{n \cdot M}} \nonumber \\
&\leq \frac{1}{5} B_\tau, 
\end{align}

provided that $M \geq M_2 (\delta, T)$, with  
\begin{equation}
\label{eq:req-2}
M_2 (\delta, T) = 25 \tilde C_\sigma^2 \; d^5\; B_\tau \;  \frac{(\alpha T)^2}{n^2} \;. 
\end{equation}

\vspace{0.3cm}

{\bf Bounding $\alpha \sum_{t=0}^{T}\left\|  (Id - \alpha \widehat{\cC}_M)^t\zeta^{(3)}_{T-t} \right\|_\Theta  $.}
We proceed as in the proof of Theorem \ref{prop:second-term}, Equ. \eqref{eq:from-useful4} and \eqref{eq:from-useful5}. 
Since with probability at least $1-\delta/5$, for all $t \in [T]$, 
\[ || \theta_t - \theta_0 ||_\Theta \leq B_\tau \;, \]
we may write 
\begin{align}
\label{eq:the-third}
\alpha \sum_{t=0}^{T}\left\|  (Id - \alpha \widehat{\cC}_M)^t\zeta^{(3)}_{T-t} \right\|_\Theta
&=  \alpha \sum_{t=0}^{T}\left\|  (Id - \alpha \widehat{\cC}_M)^{T-t}\zeta^{(3)}_{t} \right\|_\Theta \nonumber \\
&\leq  \frac{\sqrt{\alpha}}{\sqrt n} \sum_{t=0}^{T} \left\| (Id - \alpha \widehat{\cC}_M)^{T-t} (\sqrt{\alpha}\widehat{\cZ}_M^*)\right\| 
\cdot || \bar r _{(\theta_0 , \theta_s)}||_2 \nonumber \\ 
&\leq  \frac{ \sqrt{\alpha} C_\sigma B_\tau^3}{\sqrt M}  \;   \sum_{t=0}^{T} \left( \frac{1}{1+2(T-t)} \right)^{\frac{1}{2}}\nonumber \\ 
&\leq  C_\sigma B_\tau^3 \;  \sqrt{\frac{\alpha T}{M}} \nonumber \\
&\leq \frac{1}{5} \;  B_\tau \;,
\end{align}
provided that $M \geq M_3 ( \delta, T)$, with  
\begin{equation}
\label{eq:req-3}
M_3 (\delta, T) =  25 C_\sigma^2 B\tau^4 \cdot (\alpha T) \;. 
\end{equation}

\vspace{0.3cm}

{\bf Bounding $\alpha \sum_{s=0}^{t}\left\|  (Id - \alpha \widehat{\cC}_M)^s\zeta^{(4)}_{t-s} \right\|_\Theta  $.}

\begin{align*}
 \alpha \sum_{t=0}^{T}\left\|  (Id - \alpha \widehat{\cC}_M)^t\zeta^{(4)}_{T-t} \right\|_\Theta 
&= \alpha \sum_{t=0}^{T}\left\|  (Id - \alpha \widehat{\cC}_M)^t ( \widehat{\cZ}^*_M \by - \cZ_M^* g_\rho )   \right\|_\Theta  \\
&\leq  \alpha \sum_{t=0}^{T}\left\|  (Id - \alpha \widehat{\cC}_M)^t \widehat{\cC}_{M, \lam}^{1/2} \right\| 
\cdot || \widehat{\cC}_{M, \lam}^{-1/2} \cC_{M, \lam}^{1/2} || \nonumber \\ 
& \;\;\;\; \cdot  
\left\|\mathcal{C}_{M, \lambda}^{-1 / 2}\left(\widehat{\mathcal{Z}}_M^* y-\mathcal{Z}_M^* g_\rho\right) \right\|_\Theta \;.
\end{align*}

For the first term we obtain
\begin{align*}
 \alpha \sum_{t=0}^{T}\left\|  (Id - \alpha \widehat{\cC}_M)^t \widehat{\cC}_{M, \lam}^{1/2} \right\| 
&\leq \sqrt{ \alpha} \sum_{t=0}^{T}\left\|  (Id - \alpha \widehat{\cC}_M)^t (\alpha\widehat{\cC}_{M})^{1/2} \right\| + 
 \alpha  \sum_{t=0}^{T}\left\|  (Id - \alpha \widehat{\cC}_M)^t\right\| \\
&\leq 2\sqrt{2} \; \sqrt{ \alpha T}  + \sqrt{\lam } \cdot (\alpha T) \\
&\leq 4\sqrt{2} \; \sqrt{ \alpha T}\;,
\end{align*}
if we assume that $\lam \leq 1/(\alpha T)$.

From Proposition \ref{Opbound2}, with probability at least $1-\delta/10$
\[ || \widehat{\cC}_{M, \lam}^{-1/2} \cC_{M, \lam}^{1/2} || \leq 2 \;,\]
if 
\[ n\geq \frac{8\kappa^2 \tilde\beta(\lam)}{\lambda} \;, \]
with $\tilde{\beta} (\lam)$ given in \eqref{eq:beta-tilde-again}, if
%\[ \tilde{\beta} (\lam):= \log \frac{4 \kappa^2(\left(1+2\log\frac{2}{\delta}\right)4\mathcal{N}_{\mathcal{L}_{\infty}}(\lambda)+1)}{\delta\|\mathcal{L}_\infty\|} \;, \]  
\begin{equation}
\label{eq:req-44}
 M\geq M_4(\delta , T):=\frac{8 (d+2)\kappa^2 \beta_\infty (\lam )}{\lambda}\vee 8\kappa^4\|\mathcal{L}_\infty\|^{-1}\log^2 \left(\frac{20}{\delta} \right) 
\end{equation}
and where $\beta_\infty (\lam)$ is defined in \eqref{eq:beta-inf-3}. 
%\[ \beta_\infty (\lam) =\log \frac{4 \kappa^2(\mathcal{N}_{\mathcal{L}_\infty}(\lambda)+1)}{\delta\|\mathcal{L}_\infty\|} \]

Applying Proposition \ref{prop:intermediate} gives with probability at least $1-\delta/10$
\[
\left\|\mathcal{C}_{M, \lambda}^{-1 / 2}\left(\widehat{\mathcal{Z}}_M^* y-\mathcal{Z}_M^* g_\rho\right)\right\|
% \leq 2\left(\frac{\kappa}{\sqrt{\lam} n}+ C \cdot \sqrt{\frac{ \log(40/\delta)\cN_{\cL_{\infty}}(\lam)}{n}}\right) \log \left(\frac{40}{\delta}\right) \;.
\leq \sqrt{\lam} \; \cB_\delta(\lam )\;,
\]
where $ \cB_\delta(\lam )$ is defined in \eqref{eq:def-Bdelta} and if we let 
\begin{equation}
\label{eq:n-suff-large}
n \geq \frac{4 \kappa}{3\lam} \log(40/\delta) \;.
\end{equation}

Collecting all pieces gives with probability at least $1-\delta/5$
\begin{align}
\label{eq:the-fourth}
\alpha \sum_{t=0}^{T}\left\|  (Id - \alpha \widehat{\cC}_M)^t\zeta^{(4)}_{T-t} \right\|_\Theta 
&\leq 16\; \sqrt{\lam \cdot (\alpha T)}\; \cB_\delta(\lam ) \nonumber \\
&\leq 16\; \cB_\delta(\lam ) \;,
\end{align}
under the given assumptions.

\vspace{0.3cm}

{\bf Bounding $\alpha \sum_{s=0}^{t}\left\|  (Id - \alpha \widehat{\cC}_M)^s\zeta^{(5)}_{t-s} \right\|_\Theta  $.}

We have 
\begin{align*}
 \alpha \sum_{t=0}^{T}\left\|  (Id - \alpha \widehat{\cC}_M)^t\zeta^{(5)}_{T-t} \right\|_\Theta 
&= \alpha \sum_{t=0}^{T}\left\|  (Id - \alpha \widehat{\cC}_M)^t \cZ_M^* g_\rho   \right\|_\Theta  \nonumber \\
&\leq  \alpha \sum_{t=0}^{T}\left\|  (Id - \alpha \widehat{\cC}_M)^t \widehat{\cC}_{M, \lam}  \right\|
\cdot \left\|  \widehat{\cC}_{M, \lam}^{-1}\cC_{M, \lam}   \right\|  \nonumber \\
& \;\;
\cdot\left\|  \cC_{M, \lam} ^{-1} \cZ^*_M \cL_{M, \lam}^{\frac{1}{2}}  \right\|
\cdot \left\| \cL_{M, \lam}^{-\frac{1}{2}}\cL_{\infty, \lam}^{\frac{1}{2}} \right\|  
\cdot \left\|  \cL_{\infty, \lam}^{-\frac{1}{2}} \cL^r_{\infty} g_\rho \right\|_{L^2}  \;.
\end{align*}
For the first term we obtain 
\begin{align*}
\alpha \sum_{t=0}^{T}\left\|  (Id - \alpha \widehat{\cC}_M)^t \widehat{\cC}_{M, \lam}  \right\|_\Theta 
&\leq  \sum_{t=0}^{T}\left\|  (Id - \alpha \widehat{\cC}_M)^t (\alpha \widehat{\cC}_{M}) \right\|_\Theta + 
   \alpha \lam \cdot  \sum_{t=0}^{T}\left\|  (Id - \alpha \widehat{\cC}_M)^t\right\|_\Theta \\
&\leq 3\log(T) + \lam (\alpha T )\\
&\leq 4\log(T) \;,
\end{align*}
if we let $\lam \leq \log(T)/(\alpha T)$.

From Proposition \ref{Opbound4} we find probability at least $1-\delta/10$ 
\[  ||\widehat{\cC}_{M, \lam}^{-1}\cC_{M, \lam} || \leq \frac{3}{2} + \log\left( \frac{60}{\delta}\right) \; 
\sqrt{\frac{ 192 \kappa^2 \mathcal{N}_{\mathcal{L}_{\infty}}(\lambda)\log(60/\delta) }{ \lambda n}} \;,\]
provided that 
\[ n\geq \frac{8\kappa^2 (\log(60/\delta)+\tilde\beta (\lam))}{\lambda} \] 
with 
\begin{equation}
\label{eq:beta-tilde-again}
 \tilde{\beta} (\lam) := \log\left( \frac{40 \kappa^2(\left(1+2\log(20/\delta)\right)4\mathcal{N}_{\mathcal{L}_{\infty}}(\lambda)+1)}{\delta\|\mathcal{L}_\infty\|}\right) 
\end{equation}
and 
\[ M\geq \frac{8 (d+2)\kappa^2 \beta_\infty (\lam)}{\lambda}\vee \frac{8\kappa^4}{\|\cL_\infty \|} \log^2 \left(\frac{20}{\delta}\right) \]
with 
\begin{equation}
\label{eq:beta-inf-3}
 \beta_\infty (\lam) :=\log \left( \frac{40 \kappa^2(\mathcal{N}_{\mathcal{L}_\infty}(\lambda)+1)}{\delta\|\mathcal{L}_\infty\|} \right)\;.
\end{equation}

From Proposition \ref{Opbound1} we obtain almost surely 
\[ \|\mathcal{C}_{M,\lambda}^{-1}\mathcal{Z}^*_M\mathcal{L}_{M,\lambda}^{\frac{1}{2}}\|\leq 2 \;. \]

Furthermore, \cite[Proposition A.14]{nguyen2023random} gives with $1-\delta/10$ 
\[ \| \cL_{M, \lam}^{-\frac{1}{2}}\cL_{\infty, \lam}^{\frac{1}{2}}  \| \leq 2\;, \]
if 
\[ M \geq \frac{8 \kappa^2 (d+2) \beta_\infty(\lam )}{\lam}\;,  \] 
where $ \beta_\infty(\lam )$ is defined in \eqref{eq:beta-inf-3}.

By Assumption \ref{ass:source}, we easily find 
\[ \left\| \cL_{\infty, \lam}^{-\frac{1}{2}} \cL^r_{\infty} g_\rho \right\|_{L^2} \leq R \lam^{r - \frac{1}{2}} .\]

Combining the previous bounds gives with probability at least $1-\delta/5$
\begin{align}
\label{eq:the-fith}
 \alpha \sum_{t=0}^{T}\left\|  (Id - \alpha \widehat{\cC}_M)^t\zeta^{(5)}_{T-t} \right\|_\Theta 
&\leq 16\log(T) \cdot \cB_\delta(\lam ) \;,
\end{align}
where we set 
\begin{equation}
\label{eq:def-Bdelta}
 \cB_\delta(\lam ) :=  \frac{3}{2} +  14\kappa\log\left( \frac{60}{\delta}\right)
\sqrt{\frac{ \mathcal{N}_{\mathcal{L}_{\infty}}(\lambda)\log(60/\delta) }{ \lambda n}}  \;, 
\end{equation}
%under the given assumptions.
if 
\begin{equation}
\label{eq:req-55}
 M\geq M_5(\delta , T):= \max\left\{\; \frac{8 (d+2)\kappa^2 \beta_\infty (\lam)}{\lambda}\vee \frac{8\kappa^4}{\|\cL_\infty \|} \log^2 \left(\frac{20}{\delta}\right)     \; ,
    \;   \frac{8 \kappa^2 (d+2) \beta_\infty(\lam )}{\lam}\; \right\}
\end{equation}

\vspace{0.3cm}

{\bf Collecting everything.}
Finally, combining \eqref{eq:the-first}, \eqref{eq:the-second}, 
\eqref{eq:the-third}, \eqref{eq:the-fourth} and \eqref{eq:the-fith}  
gives with probability at least $1-\delta$,  for all $t \in [T+1]$
\begin{align*}
|| \theta_ t  - \theta_0||_\Theta 
&\leq \frac{3}{5} \;  B_\tau  \; + \; 16\cdot \cB_\delta(\lam ) \; + \;16 \cdot \log(T) \cdot \cB_\delta(\lam ) \\
&\leq \frac{3}{5} \;  B_\tau  \; + \; 32 \cdot \log(T) \cdot \cB_\delta(\lam ) \;,
\end{align*}
provided that $M \geq \widetilde{ M}_0(\delta , T)$, with 
\begin{equation}
\label{eq:M-bounded}
\widetilde{ M}_0(\delta , T)  := \max \left\{M_j (\delta ,T) \; : \; j =0, 1, ..., 5 \right\} 
\end{equation}
and 
\begin{equation}
\label{eq:n-final-weights}
n\geq \tilde n_0 := \max\left\{\; (\alpha (T+1))^{2r} \; , \; \frac{8\kappa^2 \tilde\beta(\lam)}{\lambda} \; , \; 
\frac{4 \kappa}{3\lam} \log(40/\delta) \;, \;  \frac{8\kappa^2 (\log(60/\delta)+\tilde\beta (\lam))}{\lambda} \;\right\} \;.
\end{equation}
Finally, setting 
\[ B_\tau := B_\tau(\delta , T) := 80\cdot \log(T)\cdot \cB_\delta(\lam )  \]
gives 
\[  32 \cdot \log(T) \cdot \cB_\delta(\lam ) = \frac{2}{5} \; B_\tau \]
and 
\[ \sup_{t \in [T]} || \theta_ t  - \theta_0||_\Theta  \leq B_\tau  \;.\]
\end{proof}

%%%%%%%%%%%%%%%%%%%%%%%%%%%%%%%%%%%%%%%%%%%%%%%%%%%%%%%%%%%%%%%
%%%%%%%%%%%%%% Specific Calculations 
%%%%%%%%%%%%%%%%%%%%%%%%%%%%%%%%%%%%%%%%%%%%%%%%%%%%%%%%%%%%%%%

\vspace{0.4cm}

\begin{proof}[Proof of Corollary \ref{cor:weights}]
\begin{enumerate}
\item  Let $r \geq \frac{1}{2}$. A short calculation shows that $\cB_\delta(T_n) \leq 2$ if we let 
\[ n \geq ( 784\cdot \kappa^2 \log^3(60/\delta) )^{\frac{2r+b}{2r-1}} \; . \]
Hence, $B_\tau(\delta , T_n) = 160\log(T_n)$.

We finally bound the number of neurons that are required for  achieving this rate from Theorem \ref{theo:weights-not-moving}, \eqref{eq:M-bounded}. 
Some tedious calculations reveal that 
\begin{align*}
M_0(\delta , T) &= C_{\kappa, \alpha ,r} \log(24/\delta) \; d \; T_n^{2r}
 \max\left\{ 1 , \frac{\log(T_n^b)}{T_n^{2r-1}} \right\} \;, \\
M_1(\delta , T) &= C_{\kappa,\sigma, \alpha}\; T_n \log^3 (T_n) \;,  \\
M_2(\delta , T) &= C_{\kappa,\sigma, \alpha}\; d^5 \cdot \log(T_n)\; \frac{T_n^2}{n^2}  \;,\\
M_3(\delta , T) &= C_{\kappa,\sigma, \alpha}\; \log^4(T_n)\; T_n  \;,\\
M_4(\delta , T) &= C_{\kappa,\sigma, \alpha}\; \log(20/\delta T_n^b) T_n  \;,\\
M_5(\delta , T) &= C_{\kappa,\sigma, \alpha}\; \log(20/\delta T_n^b) T_n   \;. 
\end{align*}
Calculating the maximum gives 
\[ \widetilde{M}_0(\delta, T) = C_{\kappa,\sigma, \alpha}\; d \log^4(T_n) T_n^{2r} \; 
\max\left\{ 1 , \frac{\log(T_n^b)}{T_n^{2r-1}} \right\}  \;.\]
Here, we use that $T_n \leq T_n^{2r}$, $\log(T_n^b) \leq \log(T_n)$, 
\[ \log(20/\delta T_n^b) = \log(20/\delta) + \log(T_n) \leq 2 \log(T_n^b) \]
if we require that 
$n \geq (20/\delta)^{2+r/b}$. Moreover, 
\[ \max\left\{ 1 , \frac{\log(T_n^b)}{T_n^{2r-1}} \right\} = 1 \]
for $n$ sufficiently large.

\item If $r \leq \frac{1}{2}$, then 
\begin{align*}
 \cB_\delta (T_n) 
&\leq 15\cdot \kappa \log^{3/2}(60/\delta)\; T_n^{1/2-r}\\
&\leq 15\cdot \kappa \log^{3/2}(60/\delta)\; n^{\frac{1-2r}{2(2r +b)}} \;, 
\end{align*} 
if 
\[ n \geq \left( \frac{9}{4 \kappa^2 \log^3(60/\delta)} \right)^{\frac{2r+b}{2r-1}} \;. \]
Hence, 
\begin{align*}
 B_\tau(\delta , T_n)
&\leq 1200\cdot \kappa \log^{3/2}(60/\delta)\; \log(T_n)\;  T_n^{1/2-r}\\
&=  1200\cdot \kappa \log^{3/2}(60/\delta)\; \log\left(n^{\frac{1}{2r +b}}\right) 
 \; n^{\frac{1-2r}{2(2r +b)}} \;.  
\end{align*} 
A lower bound for the number of neurons follows the same way as in the other case.   
\end{enumerate}
\end{proof}

\section{Technical Inequalities}

%%%%%%%%%%%%%%%%%%%%%%%%%%%%%%%%%%%%%%%%%%%%%%%%%%%%%%%%%%%%%%%%%%%%%%%%%%%%%%%%%%%%%%%
%%%%%%%%%%%%%%%%%%%%%%%%%%%%%%%%%%%%%%%%%%%%%%%%%%%%%%%%%%%%%%%%%%%%%%%%%%%%%%%%%%%%%%%

\subsection{Lipschitz Bounds}

In this section we provide some technical inequalities. 
We start proving that the gradient of our neural network is pointewisely Lipschitz-continuous 
on a ball around $\theta_0$ which we define as 
$$ 
Q_{\theta_0}(R)\coloneqq \left\{\theta=(a,B)\in\Theta\subset\mathbb{R}^{M}\times\mathbb{R}^{(d+1)\times M}:\left\|\theta-\theta_0\right\|_{\Theta}\leq R\right\}
$$

\begin{proposition}
\label{prop:LipschitzGradient}
Suppose Assumption \ref{ass:neurons} is satisfied. 
We have for any $x \in \mathcal{X}$ and for any $\theta, \tilde{\theta} \in Q_{\theta_0}(R)$,
\begin{align*}
\sup _{x \in \mathcal{X}}\left\|\nabla g_{\theta}(x)-\nabla g_{\tilde{\theta}}(x)\right\|_{\Theta} 
\leq \frac{C_{\nabla g}(R)}{\sqrt{M}}\|\theta-\tilde{\theta}\|_{\Theta},
\end{align*}
where
\begin{align*}
C_{\nabla g}(R)&\coloneqq  c_\sigma \max\{ R, \tau+1\}\; , 
\end{align*}
for some $c_\sigma < \infty$
\end{proposition}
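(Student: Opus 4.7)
The plan is to compute the three blocks of $\nabla_\theta g_\theta(x)$ explicitly and estimate each block separately under the constraint $\theta,\tilde\theta\in Q_{\theta_0}(R)$. A direct differentiation yields, for $m\in[M]$,
\[
\partial_{a_m}g_\theta(x)=\tfrac{1}{\sqrt M}\sigma(\langle b_m,x\rangle+\gamma c_m),\quad
\partial_{b_m}g_\theta(x)=\tfrac{a_m}{\sqrt M}\sigma'(\langle b_m,x\rangle+\gamma c_m)\,x,\quad
\partial_{c_m}g_\theta(x)=\tfrac{a_m\gamma}{\sqrt M}\sigma'(\langle b_m,x\rangle+\gamma c_m),
\]
so that $\|\nabla g_\theta(x)-\nabla g_{\tilde\theta}(x)\|_\Theta^2$ splits into three blocks indexed by $a,B,c$.

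For the $a$-block I would use Lipschitz continuity of $\sigma$ (constant $C_\sigma$ by Assumption \ref{ass:neurons}) to get $|\partial_{a_m}g_\theta(x)-\partial_{a_m}g_{\tilde\theta}(x)|\le \tfrac{C_\sigma}{\sqrt M}(\|x\|\|b_m-\tilde b_m\|+\gamma|c_m-\tilde c_m|)$; squaring and summing over $m$ yields a contribution of order $\tfrac{C_\sigma^2\|x\|^2}{M}\bigl(\|B-\tilde B\|_F^2+\|c-\tilde c\|^2\bigr)$. For the $B$- and $c$-blocks I would use the telescoping identity
\[
a_m\sigma'(u_m)-\tilde a_m\sigma'(\tilde u_m)=(a_m-\tilde a_m)\sigma'(u_m)+\tilde a_m\bigl(\sigma'(u_m)-\sigma'(\tilde u_m)\bigr),
\]
together with $\|\sigma'\|_\infty\le C_\sigma$ and Lipschitz continuity of $\sigma'$ with constant $C_\sigma$ (coming from $\|\sigma''\|_\infty\le C_\sigma$). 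The crucial ingredient is the uniform bound $|\tilde a_m|\le|a_m^{(0)}|+\|\tilde a-a^{(0)}\|_2\le\tau+R\le 2\max\{R,\tau+1\}$, which is the place where the definition of $Q_{\theta_0}(R)$ and the symmetric initialization $|a_m^{(0)}|=\tau$ enter. After squaring, Cauchy-Schwarz, and summation over $m$, this block contributes a term of order $\tfrac{C_\sigma^2\max\{R,\tau+1\}^2\|x\|^2}{M}\bigl(\|a-\tilde a\|^2+\|x\|^2\|B-\tilde B\|_F^2+\gamma^2\|c-\tilde c\|^2\bigr)$. The $\|x\|$-dependence is absorbed into the constant via the implicit normalization on $\operatorname{supp}(\rho_X)$ used to derive $K_M\le\kappa^2$.

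Summing the three blocks, applying the triangle inequality, and using $\|a-\tilde a\|^2+\|B-\tilde B\|_F^2+\|c-\tilde c\|^2=\|\theta-\tilde\theta\|_\Theta^2$ produces
\[
\|\nabla g_\theta(x)-\nabla g_{\tilde\theta}(x)\|_\Theta^2\le \frac{c_\sigma^2\max\{R,\tau+1\}^2}{M}\|\theta-\tilde\theta\|_\Theta^2
\]
for a suitable $c_\sigma<\infty$ depending only on $C_\sigma$ and on the bound on $\|x\|$. Taking square roots and a supremum over $x\in\cX$ gives the claim.

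The proof is essentially routine; the only non-trivial bookkeeping is to land exactly on the constant $c_\sigma\max\{R,\tau+1\}$. The factor $\max\{R,\tau+1\}$ (rather than $R$ alone) is forced by the telescoping: even for $R$ small, the $\tilde a_m$-prefactor in the $\sigma'$-difference contributes the size of $\|a^{(0)}\|_\infty=\tau$. Apart from this, everything reduces to bounded/Lipschitz estimates on $\sigma$ and $\sigma'$ and the Frobenius-norm identities.
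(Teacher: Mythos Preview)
Your proposal is correct and follows essentially the same route as the paper: explicit computation of the partial derivatives, Lipschitz estimates on $\sigma$ and $\sigma'$, the telescoping $a_m\sigma'(u_m)-\tilde a_m\sigma'(\tilde u_m)=(a_m-\tilde a_m)\sigma'(u_m)+\tilde a_m(\sigma'(u_m)-\sigma'(\tilde u_m))$, and the observation that $|\tilde a_m|$ is controlled by $\tau+R$ via the initialization and the radius of $Q_{\theta_0}(R)$. The paper folds the bias block into the $B$-block by writing $\tilde x=(\gamma,x)\in\mathbb{R}^{d+1}$ and bounds $\tilde a_m^2$ via $2(\tilde a_m-a_{0,m})^2+2a_{0,m}^2\le 2(R^2+\tau^2)$ rather than your direct $|\tilde a_m|\le\tau+R$, but these are cosmetic differences leading to the same constant $c_\sigma\max\{R,\tau+1\}$.
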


\begin{proof}[Proof of Proposition \ref{prop:LipschitzGradient}]
Recall the definition of our function class 
\begin{eqnarray*}
\cF_{M} & = \left\{ g_\theta :\cX \to \mbr \; : \; g_\theta(x) =  \frac{1}{\sqrt M} \sum_{m=1}^M a_m \sigma\left( \inner{b_m , \tilde{x}} \right) \;, \right. \\
& \quad \left. \theta =(a, B) \in \mbr^M \times \mbr^{d+1 \times M} \;, \gamma >0 \right\}\;, 
\end{eqnarray*}
with $\tilde{x}:=(\gamma,x) \in \mathbb{R}^{d+1}$.
We denote $b_{m}=\left(b_m^{1}, b_{m}^{2}, \ldots, b_{m}^{d+1}\right) \in \mathbb{R}^{d+1}$, $b_{m}^{j}=B_{j m}$. We consider the gradient of $g_{\theta}$ as an element in $\Theta$. The partial derivatives are given by
\[
\partial_{a_{m}} g_{\theta}(x)=\frac{1}{\sqrt{M}} \sigma\left(\left\langle b_{m}, \tilde{x}\right\rangle\right), \quad \partial_{b_{m}^{j}} g_{\theta}(x)=\frac{a_{m}}{\sqrt{M}} \sigma^{\prime}\left(\left\langle b_{m}, \tilde{x}\right\rangle\right) \tilde{x}_{j} .
\]
Denoting further $\nabla g_{\theta}(x)=\left(\partial_{a} g_{\theta}(x), \partial_{B} g_{\theta}(x)\right)$
\[
\partial_{a} g_{\theta}(x)=\left(\partial_{a_{1}} g_{\theta}(x), \ldots, \partial_{a_{M}} g_{\theta}(x)\right) \in \mathbb{R}^{M}, \quad \partial_{B} g_{\theta}(x)=\left(\partial_{b_{m}^{j}} g_{\theta}(x)\right)_{j, m} \in \mathbb{R}^{d \times M}
\]
we then have for any $\|\tilde{x}\|_2\leq 2$ 

\begin{align*}
\left\|\partial_{a} g_{\theta}(x)-\partial_{\tilde{a}} g_{\tilde{\theta}}(x)\right\|_{2}^{2} &=\sum_{m=1}^{M}\left(\partial_{a_{m}} g_{\theta}(x)-\partial_{\tilde{a}_{m}} g_{\tilde{\theta}}(x)\right)^{2} \\
&=\frac{1}{M} \sum_{m=1}^{M}\left(\sigma\left(\left\langle b_{m}, \tilde{x}\right\rangle\right)-\sigma\left(\left\langle\tilde{b}_{m}, \tilde{x}\right\rangle\right)\right)^{2} \\
& \leq \frac{\left\|\sigma^{\prime}\right\|_{\infty}^{2}}{M} \sum_{m=1}^{M}\left|\left\langle b_{m}-\tilde{b}_{m}, \tilde{x}\right\rangle\right|^{2} \\
& \leq \frac{4\left\|\sigma^{\prime}\right\|_{\infty}^{2}}{M} \sum_{m=1}^{M}\left\|b_{m}-\tilde{b}_{m}\right\|_{2}^{2} \\
&=\frac{4\left\|\sigma^{\prime}\right\|_{\infty}^{2}}{M}\|B-\tilde{B}\|_{F}^{2} .
\end{align*}

In addition, 
\[
\begin{aligned}
\left\|\partial_{B} g_{\theta}(x)-\partial_{\tilde{B}} g_{\tilde{\theta}}(x)\right\|_{F}^{2} &=\sum_{m=1}^{M} \sum_{j=1}^{d+1}\left(\partial_{b_{m}^{j}} g_{\theta}(x)-\partial_{\tilde{b}_{m}^{j}} g_{\tilde{\theta}}(x)\right)^{2} \\
&=\frac{1}{M} \sum_{m=1}^{M} \sum_{j=1}^{d+1}\left(a_{m} \sigma^{\prime}\left(\left\langle b_{m}, \tilde{x}\right\rangle\right) \tilde{x}_{j}-\tilde{a}_{m} \sigma^{\prime}\left(\left\langle\tilde{b}_{m}, \tilde{x}\right\rangle\right) \tilde{x}_{j}\right)^{2} \\
& \leq(I)+(I I),
\end{aligned}
\]
where
\[
(I)=\frac{2}{M} \sum_{m=1}^{M} \sum_{j=1}^{d+1}\left(a_{m}-\tilde{a}_{m}\right)^{2} \sigma^{\prime}\left(\left\langle b_{m}, \tilde{x}\right\rangle\right)^{2} \tilde{x}_{j}^{2} \leq \frac{4}{M}\left\|\sigma^{\prime}\right\|_{\infty}^{2}\|a-\tilde{a}\|_{2}^{2}
\]

and where
\[
\begin{aligned}
(I I)=& \frac{2}{M} \sum_{m=1}^{M} \sum_{j=1}^{d+1} \tilde{a}_{m}^{2}\left(\sigma^{\prime}\left(\left\langle b_{m}, \tilde{x}\right\rangle\right)-\sigma^{\prime}\left(\left\langle\tilde{b}_{m}, \tilde{x}\right\rangle\right)\right)^{2} \tilde{x}_{j}^{2} \\
\leq & \frac{2}{M} \sum_{m=1}^{M} \sum_{j=1}^{d+1}\left(\tilde{a}_{m}-a_{0, m}\right)^{2}\left(\sigma^{\prime}\left(\left\langle b_{m}, \tilde{x}\right\rangle\right)-\sigma^{\prime}\left(\left\langle\tilde{b}_{m}, \tilde{x}\right\rangle\right)\right)^{2} \tilde{x}_{j}^{2} \\
&+\frac{2}{M} \sum_{m=1}^{M} \sum_{j=1}^{d+1} a_{0, m}^{2}\left(\sigma^{\prime}\left(\left\langle b_{m}, \tilde{x}\right\rangle\right)-\sigma^{\prime}\left(\left\langle\tilde{b}_{m}, \tilde{x}\right\rangle\right)\right)^{2} \tilde{x}_{j}^{2} \\
\leq & \frac{8\left\|\sigma^{\prime\prime}\right\|_\infty^{2}}{M} \sum_{m=1}^{M} R^2\left\|b_{m}-\tilde{b}_{m}\right\|_{2}^{2}+a_{0, m}^{2}\left\|b_{m}-\tilde{b}_{m}\right\|_{2}^{2} \\
\leq & \frac{8\left\|\sigma^{\prime\prime}\right\|_\infty^{2}}{M}\left(R^2+\tau^{2}\right)\|B-\tilde{B}\|_{F}^{2}.
\end{aligned}
\]
Finally,
\[
\begin{aligned}
\left\|\nabla g_{\theta}(x)-\nabla g_{\tilde{\theta}}(x)\right\|_{\Theta}^{2} &=\left\|\partial_{a} g_{\theta}(x)-\partial_{\tilde{a}} g_{\tilde{\theta}}(x)\right\|_{2}^{2}+\left\|\partial_{B} g_{\theta}(x)-\partial_{\tilde{B}} g_{\tilde{\theta}}(x)\right\|_{F}^{2} \\
& \leq \frac{\left\|\sigma^{\prime}\right\|_{\infty}^{2}}{M}\|B-\tilde{B}\|_{F}^{2}+\frac{4}{M}\left\|\sigma^{\prime}\right\|_{\infty}^{2}\|a-\tilde{a}\|_{2}^{2}+\frac{8\left\|\sigma^{\prime\prime}\right\|_\infty^{2}}{M}\left(R^2+\tau^{2}\right)\|B-\tilde{B}\|_{F}^{2}\\
& \leq \frac{8}{M} \max \left\{\left\|\sigma^{\prime}\right\|_{\infty}^{2},\left\|\sigma^{\prime\prime}\right\|_\infty^{2} \left(R^2+\tau^2 \right)\right\}\left(\|a-\tilde{a}\|_{2}^{2}+\|B-\tilde{B}\|_{F}^{2}\right)\\
&\leq \frac{C_{\nabla g}^2}{M}\|\theta-\tilde{\theta}\|_{\Theta}^2.
\end{aligned}
\]
\end{proof}

Since $(\mathbb{R}^{M+(d+1)M},\|.\|_2)$ is isometric isomorphic  to 
$(\mathbb{R}^M \times \mathbb{R}^{(d+1) \times M} ,\|.\|_\Theta)$, for the next proposition we 
consider $\theta = (a,\vec{B}) \in\mathbb{R}^{M+(d+1)M}$ as a vector with  $\vec{B}=(b_1^T, \dots, b_M^T)^T$. 
Therefore we can define the inner product $\left \langle. , .\right\rangle_\Theta$ as the standard euclidean vector inner product.

%%%%%%%%%%%%%%%%%%%%%%%%%%%%%%%%%%%%%%%%%%%%%%%%%%%%%%%%%%%%%%%%%%%%%%%%%%%%%%%%%%%
%%%%%%%%%%%%%%%%%%%%%%%%%%%%%%%%%%%%%%%%%%%%%%%%%%%%%%%%%%%%%%%%%%%%%%%%%%%%%%%%%%%

\vspace{0.3cm}

\begin{proposition}
\label{prop6}
Denote by $r_{(\theta,\bar{\theta})}$ the remainder term of the Taylor expansion from
\begin{align*}
g_{\theta}-g_{\bar{\theta}} &= \left\langle\nabla g_{\bar{\theta}}, \theta-\bar{\theta}\right\rangle_{\Theta}+r_{(\theta,\bar{\theta})}.
\end{align*}
Suppose Assumption \ref{ass:neurons} holds and $\theta, \bar{\theta} \in Q_{\theta_0}(R)$. 
Then, for all $x\in\cX$, we have 
\begin{align*}
&a)\,\,\,\,\,\,\, |r_{(\theta,\bar{\theta})}(x)| \leq \frac{C_{\nabla g}(R)}{\sqrt{M}} \|\theta-\bar{\theta}\|^2_\Theta,\\[10pt]
&b) \,\,\,\,\,\,\, \left\langle\nabla g_{\theta}(x)-\nabla g_{\bar{\theta}}(x), \theta-\bar{\theta}\right\rangle_{\Theta}\leq \frac{C_{\nabla g}(R)}{\sqrt{M}}\|\theta-\bar{\theta}\|^2_{\Theta}.
\end{align*}
where $C_{\nabla g}(R)$  is defined in Proposition \ref{prop:LipschitzGradient}. 
\end{proposition}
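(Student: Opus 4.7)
Both bounds are direct consequences of the Lipschitz bound on $\nabla g_\theta$ furnished by Proposition \ref{prop:LipschitzGradient}. The plan is to handle $b)$ first as a one-line application of Cauchy--Schwarz, and then derive $a)$ from the integral form of the Taylor remainder.

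For part $b)$, I would simply write, for any $x \in \cX$,
\[
\left\langle \nabla g_\theta(x) - \nabla g_{\bar\theta}(x), \theta - \bar\theta \right\rangle_\Theta
\leq \| \nabla g_\theta(x) - \nabla g_{\bar\theta}(x) \|_\Theta \cdot \| \theta - \bar\theta \|_\Theta,
\]
and then bound the first factor by $\tfrac{C_{\nabla g}(R)}{\sqrt M}\|\theta-\bar\theta\|_\Theta$ using Proposition \ref{prop:LipschitzGradient}. This requires $\theta,\bar\theta \in Q_{\theta_0}(R)$, which is exactly the hypothesis.

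For part $a)$, the idea is to represent the remainder as a path integral between $\bar\theta$ and $\theta$. Define $\varphi_x(t) := g_{\bar\theta + t(\theta-\bar\theta)}(x)$ for $t \in [0,1]$; by Assumption \ref{ass:neurons} this map is $C^1$ with $\varphi_x'(t) = \langle \nabla g_{\bar\theta + t(\theta-\bar\theta)}(x), \theta-\bar\theta\rangle_\Theta$. The fundamental theorem of calculus then gives
\[
r_{(\theta,\bar\theta)}(x) = g_\theta(x) - g_{\bar\theta}(x) - \langle \nabla g_{\bar\theta}(x), \theta-\bar\theta\rangle_\Theta
= \int_0^1 \langle \nabla g_{\bar\theta + t(\theta-\bar\theta)}(x) - \nabla g_{\bar\theta}(x), \theta-\bar\theta\rangle_\Theta \, dt.
\]
Applying Cauchy--Schwarz inside the integral and invoking Proposition \ref{prop:LipschitzGradient} with displacement $t(\theta-\bar\theta)$ yields the pointwise bound $\frac{C_{\nabla g}(R)}{\sqrt M} \cdot t \| \theta-\bar\theta \|_\Theta^2$, which integrates to $\frac{C_{\nabla g}(R)}{2\sqrt M}\|\theta-\bar\theta\|_\Theta^2$ and thus, a fortiori, the stated bound.

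The only subtlety, and the one point I would be careful about, is that using Proposition \ref{prop:LipschitzGradient} along the segment requires every intermediate point $\bar\theta + t(\theta-\bar\theta)$ to lie in $Q_{\theta_0}(R)$. This is automatic because $Q_{\theta_0}(R)$ is the closed ball of radius $R$ around $\theta_0$ in the Hilbert norm $\|\cdot\|_\Theta$, hence convex, and both endpoints $\theta,\bar\theta$ lie in it. No other obstacles are expected; the argument is short and completely standard once the Lipschitz gradient estimate is in hand.
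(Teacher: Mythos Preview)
Your proposal is correct and essentially follows the paper's approach: part $b)$ is identical (Cauchy--Schwarz plus Proposition \ref{prop:LipschitzGradient}), and for part $a)$ you use the integral form of the first-order remainder where the paper uses the Lagrange form and bounds the Hessian operator norm by the gradient's Lipschitz constant. Both routes are equivalent and rely on the same key input, Proposition \ref{prop:LipschitzGradient}; your integral argument is arguably a bit cleaner and even yields the extra factor $\tfrac{1}{2}$, and your explicit remark that the convexity of $Q_{\theta_0}(R)$ keeps the intermediate points in the admissible region is a point the paper leaves implicit.
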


\begin{proof}[Proof of Proposition \ref{prop6}]
$a)\,$ Recall that the remainder term $r_{(\theta,\bar{\theta})}$ of the Taylor expansion is given by
\[
r_{(\theta,\bar{\theta})}(x) =\left(\theta-\bar{\theta}\right)^T \nabla^2 g_{\tilde{\theta}}(x)\left(\theta-\bar{\theta}\right).
\]
where $\nabla^2 g_{\tilde{\theta}}(x)$ denotes the Hessian matrix evaluated at some $\tilde{\theta}$ on the line between $\theta$ and $\bar{\theta}$. 
Note that the remainder term can be bounded by the Lipschitz constant $L$ of the gradients i.e. for any $\mathbf{v} \in \Theta$ we have,
\begin{align*}
    \begin{aligned}
 \nabla^2 g_{\tilde{\theta}}(x) \mathbf{v} &= \lim _{h \rightarrow 0} \frac{\nabla g_{\tilde{\theta}+h\mathbf{v}}(x)-\nabla g_{\tilde{\theta}}(x)}{h} \\
 &\leq \lim _{h \rightarrow 0} \frac{\|\nabla g_{\tilde{\theta}+h\mathbf{v}}(x)-\nabla g_{\tilde{\theta}}(x)\|_2}{|h|} \\
 &\leq \lim _{h \rightarrow 0} L \frac{|h|\|\mathbf{v}\|_2}{|h|} \\
 &\leq L\|\mathbf{v}\|.
\end{aligned}
\end{align*}
Proposition \ref{prop:LipschitzGradient} shows that for the set $Q_{\theta_0}(R)$ this Lipschitz constant is given by $L = C_{\nabla g}/\sqrt{M}$.
Therefore by setting $\mathbf{v}= \theta-\bar{\theta}$ we obtain for the remainder term,
\[
|r_{(\theta,\bar{\theta})}(x)| \leq \frac{C_{\nabla g}(R)}{\sqrt{M}} \|\theta-\bar{\theta}\|^2_\Theta.
\]
$a)\,$ Using again Proposition \ref{prop:LipschitzGradient} together with Cauchy - Schwarz inequality, we obtain for the second inequality,
\[\left\langle\nabla g_{\theta}(x)-\nabla g_{\bar{\theta}}(x), \theta-\bar{\theta}\right\rangle_{\Theta}
\leq \frac{C_{\nabla g}(R)}{\sqrt{M}}\|\theta-\bar{\theta}\|^2_{\Theta}.\]
\end{proof}

%%%%%%%%%%%%%%%%%%%%%%%%%%%%%%%%%%%%%%%%%%%%%%%%%%%%%%%%%%
%%%%%%%%%%%%%%%%%%%%%%%%% Uniform Bounds 
%%%%%%%%%%%%%%%%%%%%%%%%%%%%%%%%%%%%%%%%%%%%%%%%%%%%%%%%%%

\subsection{Uniform Bounds in Hilbert Spaces}

In this section we recall uniform convergence bounds for functions with values in a real separable Hilbert space. Our material is taken from 
\cite{stankewitz2023inexact} (Appendix B), \cite{foster2018uniform} (Lemma 4) and \cite{maurer2016vector}.

\begin{definition}[Rademacher Complexities]
Let $(\cH, ||\cdot ||)$ be a real separable Hilbert space. Further, let $\cG$ be a class of maps $g: \cZ \to \cH$ and $\bZ=(Z_1, ..., Z_n) \in \cZ^n$ be a vector 
of i.i.d. random variables. We define the \emph{empirical Rademacher complexity} and the \emph{population Rademacher complexity} of $\cG$ by
\[  \widehat{\Re}_n(\cG ) := \mbe_\eps \left[  \sup_{g \in \cG}\left\| \frac{1}{n}\sum_{j=1}^n \eps_j \; g(Z_j)  \right\| \right] \;, 
\quad \Re_n(\cG):= \mbe_Z[\widehat{\Re}_n(\cG )] \;,\]
respectively, where $\eps = (\eps_1, ..., \eps_n) \in \{-1, +1\}^n$ is a vector of i.i.d. Rademacher random variables, independent of $\cZ$.   
\end{definition}

\begin{proposition}[Uniform Bound]
\label{prop:uniform-gen-bound}
Let $(\cH, ||\cdot ||)$ be a real separable Hilbert space. Further, let $\cG$ be a class of maps $g: \cZ \to \cH$ 
and $\bZ=(Z_1, ..., Z_n) \in \cZ^n$ be a vector of i.i.d. random variables. Assume that 
\[ G:= \sup_{g \in \cG} ||g||_\infty < \infty \;. \]
For any $\delta \in (0,1]$, with probability at least $1-\delta$ we have 
\begin{align*}
\sup_{g \in \cG} \left \|  \mbe\left[ g(Z) \right] - \frac{1}{n}\sum_{j=1}^n g(Z_j) \right\| &\leq 
4 \widehat{\Re}_n(\cG ) + G \; \sqrt{\frac{2\log(4/\delta)}{n}} + G \; \frac{4\log(4/\delta)}{n} \;.
\end{align*} 
\end{proposition}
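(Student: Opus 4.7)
The plan is the classical three-step route for Hilbert-space valued empirical processes: (i) a bounded-differences concentration of the supremum deviation, (ii) a symmetrization step converting the expected supremum into a Rademacher complexity, and (iii) a second bounded-differences argument passing from the population Rademacher complexity to the empirical one.

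First, I would introduce the real-valued functional
$$\Psi(\bZ) \;:=\; \sup_{g \in \cG}\left\|\, \mbe[g(Z)] - \frac{1}{n}\sum_{j=1}^n g(Z_j) \,\right\|.$$
Since $\|g\|_\infty \leq G$ for every $g \in \cG$, replacing one coordinate $Z_i$ by an independent copy changes $\Psi$ by at most $2G/n$ (by the triangle inequality in $\cH$ and the supremum being $1$-Lipschitz). McDiarmid's bounded differences inequality then yields, with probability at least $1-\delta/2$,
$$\Psi(\bZ) \;\leq\; \mbe[\Psi(\bZ)] + G\sqrt{\frac{2\log(4/\delta)}{n}}.$$

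Next, I would bound $\mbe[\Psi(\bZ)]$ by the standard symmetrization argument. Introducing an independent ghost sample $\bZ' = (Z_1',\dots,Z_n')$ and an i.i.d.\ Rademacher sequence $(\varepsilon_j)$, Jensen's inequality and the triangle inequality give
$$\mbe[\Psi(\bZ)] \;\leq\; \mbe\left[\sup_{g\in\cG}\left\|\frac{1}{n}\sum_j (g(Z_j') - g(Z_j))\right\|\right] \;\leq\; 2\,\Re_n(\cG).$$
All scalar-valued manipulations transfer verbatim to the Hilbert-valued setting by applying the triangle inequality inside $\|\cdot\|$. To pass from $\Re_n(\cG)$ to $\widehat{\Re}_n(\cG)$, observe that $\widehat{\Re}_n(\cG)$ also satisfies bounded differences with constants $2G/n$, so a second application of McDiarmid yields $\Re_n(\cG) \leq \widehat{\Re}_n(\cG) + G\sqrt{2\log(4/\delta)/n}$ with probability at least $1-\delta/2$.

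Combining these via a union bound already gives a bound of the form $2\widehat{\Re}_n(\cG) + C\,G\sqrt{\log(4/\delta)/n}$. The main obstacle is recovering the precise constants of the statement, in particular the factor $4$ in front of $\widehat{\Re}_n(\cG)$ together with the Bernstein-type term $G\cdot 4\log(4/\delta)/n$, which McDiarmid alone cannot produce. I would address this by replacing the first concentration step by Bousquet's (or Talagrand's) variance-sensitive inequality for the supremum of a Hilbert-space valued empirical process: with $b=G$ and variance proxy $\sigma^2 \leq G^2$, this yields an extra additive term of order $bx/n$ with $x = \log(1/\delta)$, and a self-bounded rearrangement combined with the symmetrization step then gives the stated constants. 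The symmetrization and McDiarmid steps are routine once the triangle inequality is invoked correctly; the nontrivial piece is the variance-sensitive refinement needed to extract the sharper $1/n$ correction.
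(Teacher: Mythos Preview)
Your first step matches the paper exactly: McDiarmid applied to $\Psi(\bZ)$ with bounded differences $2G/n$ gives the $G\sqrt{2\log(4/\delta)/n}$ term with probability $1-\delta/2$. The divergence comes in the second step. Rather than doing symmetrization to $2\Re_n(\cG)$ and then a separate concentration argument back to $\widehat{\Re}_n(\cG)$, the paper invokes a single black-box result, Lemma~4 of \cite{foster2018uniform}, which states directly that with probability at least $1-\delta/2$,
\[
\mbe\!\left[\sup_{g\in\cG}\left\|\mbe[g(Z)]-\tfrac1n\sum_j g(Z_j)\right\|\right]\;\le\;4\,\widehat{\Re}_n(\cG)+\frac{4G\log(4/\delta)}{n}\,.
\]
So the factor $4$ and the $1/n$ correction both originate from that lemma, not from a Talagrand/Bousquet refinement of the first concentration step as you suggest.

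Your proposed route---symmetrization plus a second McDiarmid---would yield a bound of the form $2\widehat{\Re}_n(\cG)+CG\sqrt{\log(4/\delta)/n}$ with no $1/n$ term, which is actually \emph{stronger} in the leading constant but loses the Bernstein-type tail. Replacing the first McDiarmid by Talagrand could recover a $1/n$ term there, but the constants would not line up with the stated ones, and extending Talagrand cleanly to this Hilbert-valued setting requires some care (one has to work with the scalar supremum process). So your plan is viable for \emph{a} bound of the right shape, but to reproduce the paper's exact statement the shortest path is to cite the Foster et al.\ lemma for step two rather than to reassemble it from symmetrization and a variance-sensitive inequality.
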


\begin{proof}[Proof of Proposition \ref{prop:uniform-gen-bound}]
Applying McDiarmid's bounded difference inequality, see e.g. Corollary 2.21 in \cite{wainwright2019high}, 
we obtain that on an event with probability at least $1-\delta/2$
\begin{align*}
\sup_{g \in \cG} \left \|  \mbe\left[ g(Z) \right] - \frac{1}{n}\sum_{j=1}^n g(Z_j) \right\| &\leq 
\mbe\left[  \sup_{g \in \cG} \left \|  \mbe\left[ g(Z) \right] - \frac{1}{n}\sum_{j=1}^n g(Z_j) \right\|  \right] 
 + G\; \sqrt{\frac{2\log(4/\delta)}{n}}\;.
\end{align*} 
Applying Lemma 4 from \cite{foster2018uniform} gives with probability at least $1-\delta/2$
\[ \mbe\left[  \sup_{g \in \cG} \left \|  \mbe\left[ g(Z) \right] - \frac{1}{n}\sum_{j=1}^n g(Z_j) \right\|  \right]  \leq  
4 \widehat{\Re}_n(\cG ) + 4G \;  \frac{\log(4/\delta)}{n} \;.\]
Combining both gives the result.
\end{proof}

\vspace{0.3cm}

\begin{proposition}
\label{prop:Rademacher-Decomposition} 
Let $R >0$, $R_1>0$, $R_2>0$, $\cZ = \cX \times \cY$ and $h: \cZ \to \mbr$ be some measurable function. 
Define  
\[ \cG:= \left\{ g: Q_{\theta_0}(R) \times \cZ \to \Theta \;|\; 
                 g(\theta ,z) = h(z)\cdot (\nabla g_\theta (x) - \nabla g_{\theta_0}(x)) \right\} \;. \]
For $j,k \in [d]$, define further the classes 
\[ \cG_1^{(j)} :=  \left\{ g: B_d(R_1) \times \cZ \to \mbr \; | \; g(b, z) = h(z) \sigma'(\langle b , x \rangle)x^{(j)} \right\}\;,  \]
\[ \cG_2^{(j, k)}:= \left\{ g: B_d(R_2) \times \cZ \to \mbr \; | \; 
       g(b, z) = h(z) \sigma^{''}(\langle b , x \rangle ) x^{(j)} x^{(k)} \right\} \;, \]
where $B_d(R_j)$ denotes the closed ball of radius $R_j$ in $\mbr^d$.    
The empirical Rademacher complexities satisfy the relation 
\[  \widehat{\Re}_n(\cG ) \leq  \frac{R}{\sqrt{M}}\; \sum_{j=1}^d \widehat{\Re}_n(\cG_1^{(j)} ) + 
       \frac{\sqrt{2}R}{\sqrt{M}}\;  \sum_{j,k=1}^d\widehat{\Re}_n(\cG_2^{(j, k)} )  \;. \]              
\end{proposition}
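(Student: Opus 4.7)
The plan is to start by writing, for each $\theta \in Q_{\theta_0}(R)$, the random vector $V(\theta) := \frac{1}{n}\sum_{i=1}^n \eps_i h(z_i)(\nabla g_\theta(x_i) - \nabla g_{\theta_0}(x_i)) \in \Theta$ as a sum $V(\theta) = V^{(1)}(\theta) + V^{(2)}(\theta)$, where $V^{(1)}$ collects every contribution coming from $\sigma'$ and $V^{(2)}$ every contribution coming from $\sigma''$. The split is produced by two standard identities: for the $a$-component of $V(\theta)$ one uses $\sigma(\langle b_m, x_i\rangle) - \sigma(\langle b_m^{(0)}, x_i\rangle) = \int_0^1 \sigma'(\langle b_m(s), x_i\rangle)\langle b_m - b_m^{(0)}, x_i\rangle\, ds$ with $b_m(s) := b_m^{(0)} + s(b_m - b_m^{(0)})$, while for the $B$- and $c$-components one expands $a_m \sigma'(\langle b_m, x_i\rangle) - a_m^{(0)}\sigma'(\langle b_m^{(0)}, x_i\rangle) = (a_m - a_m^{(0)})\sigma'(\langle b_m, x_i\rangle) + a_m^{(0)}\int_0^1 \sigma''(\langle b_m(s), x_i\rangle)\langle b_m - b_m^{(0)}, x_i\rangle\, ds$. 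These identities expose the scalar functionals $P_j(b) := \frac{1}{n}\sum_i \eps_i h(z_i)\sigma'(\langle b, x_i\rangle)x_i^{(j)}$ from $\cG_1^{(j)}$ and $Q_{jk}(b) := \frac{1}{n}\sum_i \eps_i h(z_i)\sigma''(\langle b, x_i\rangle)x_i^{(j)}x_i^{(k)}$ from $\cG_2^{(j,k)}$, integrated along the segment $b_m(s)$.

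Next, I would bound $\|V^{(1)}(\theta)\|_\Theta$ by two successive applications of Cauchy--Schwarz. First, the $d$-dimensional inner product $\sum_j (b_m - b_m^{(0)})^{(j)} P_j(b_m(s))$ is bounded by $\|b_m - b_m^{(0)}\|_2 \sqrt{\sum_j P_j(b_m(s))^2}$; second, summing over neurons $m$ and using $\sum_m \|b_m - b_m^{(0)}\|_2^2 + \sum_m (a_m - a_m^{(0)})^2 + \sum_m (c_m - c_m^{(0)})^2 \leq \|\theta - \theta_0\|_\Theta^2 \leq R^2$ collapses the $\theta$-dependence into the single factor $R/\sqrt M$ times $\sqrt{\sup_{b \in B_d(R_1)}\sum_j P_j(b)^2}$. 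The crucial decoupling step is the combination $\sup_b \sum_j P_j(b)^2 \leq \sum_j \sup_b P_j(b)^2$ followed by $\sqrt{\sum_j a_j^2} \leq \sum_j |a_j|$, which separates the sup over $b$ from the sum over $j$; taking $\mbe_\eps \sup_\theta$ then produces exactly $\frac{R}{\sqrt M}\sum_j \widehat{\Re}_n(\cG_1^{(j)})$. I would then repeat the identical scheme for $V^{(2)}(\theta)$ with $Q_{jk}$ in place of $P_j$, picking up an additional factor $|a_m^{(0)}| = \tau$ from the symmetric initialization; combining the $B$- and $c$-contributions to $V^{(2)}$ through the same double Cauchy--Schwarz yields the second term $\frac{\sqrt 2 R}{\sqrt M}\sum_{j,k}\widehat{\Re}_n(\cG_2^{(j,k)})$, the $\sqrt 2$ arising from the summation of these two blocks.

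The main technical difficulty I anticipate is the bookkeeping needed to align the supremum over $\theta \in Q_{\theta_0}(R)$ with a supremum over a single argument $b$ in a ball $B_d(R_j)$: one has to take $R_j$ slightly larger than $R$ to accommodate the entire segment $\{b_m(s) : s \in [0,1]\}$ (since $\|b_m^{(0)}\|_2 = 1$, the choice $R_j = R + 1$ suffices), and one has to track where the factor $\tau$ goes so that the final coefficient in front of $\widehat{\Re}_n(\cG_1^{(j)})$ is exactly $R/\sqrt M$ rather than something larger. Once these two Cauchy--Schwarz steps are arranged and the $\ell_2 \leq \ell_1$ decoupling is applied at the right moment, what remains is routine algebra.
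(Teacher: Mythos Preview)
The paper does not actually supply a proof of this proposition: the proof environment that follows the statement is empty (and is even mislabeled as ``Proof of Proposition~\ref{prop:uniform-gen-bound}''). There is therefore nothing to compare against, and I can only assess your outline on its own merits.

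Your strategy is the natural one and is essentially correct. Writing the $a$-block of $\nabla g_\theta - \nabla g_{\theta_0}$ via the fundamental theorem of calculus along the segment $b_m(s)$, and splitting the $B$-block as $(a_m-a_m^{(0)})\sigma'(\langle b_m,\tilde x\rangle)+a_m^{(0)}\bigl(\sigma'(\langle b_m,\tilde x\rangle)-\sigma'(\langle b_m^{(0)},\tilde x\rangle)\bigr)$, produces exactly the scalar empirical processes $P_j$ and $Q_{jk}$. The double Cauchy--Schwarz you describe---first across the $(d{+}1)$ coordinates of $\tilde x$, then across the $M$ neurons---is the right mechanism to extract the factor $R/\sqrt{M}$: the point that makes the constant come out as $R$ rather than $\sqrt{2}R$ or $\sqrt{3}R$ in the first term is that the $a$-block contributes $\sum_m\|b_m-b_m^{(0)}\|_2^2$ while the $B$-block contributes $\sum_m(a_m-a_m^{(0)})^2$, and these are \emph{complementary} pieces of $\|\theta-\theta_0\|_\Theta^2\le R^2$ rather than the same piece twice. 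Your decoupling step $\sup_b\sum_j P_j(b)^2\le\sum_j\sup_b P_j(b)^2$ followed by $\ell_2\le\ell_1$ is exactly what is needed to turn the supremum over a single $b$ into a sum of one-dimensional Rademacher complexities.

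Two bookkeeping points you already flagged are worth underlining. First, the radius $R_1$ (and $R_2$) must indeed be taken large enough to contain all of $b_m$, $b_m^{(0)}$, and the connecting segment; since $\|b_m^{(0)}\|_2=1$, the choice $R_1=R+1$ suffices, and the statement is silent on how $R_1,R_2$ relate to $R$, so this is harmless. Second, your concern about the factor $\tau=|a_m^{(0)}|$ in the $\sigma''$-part is legitimate: a careful derivation produces a $\tau$ multiplying the second sum, and the statement as written omits it (the downstream Corollary also omits it). This is a cosmetic defect of the statement rather than of your argument; in practice $\tau$ is a fixed constant absorbed into $C_\sigma$ or $\kappa$ elsewhere in the paper.
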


\begin{proof}[Proof of Proposition \ref{prop:uniform-gen-bound}]
 
\end{proof}

\vspace{0.3cm}

We next give a bound for the empirical Rademacher complexity of a function class comprised of real-valued Lipschitz functions. 
Although we believe that the proof is standard, 
we include it here for completeness sake.

\begin{proposition}
\label{prop:Rademacher-Lipschitz}
Let $\bZ=(Z_1, ..., Z_n) \in \cZ^n$ be a vector 
of i.i.d. random variables and $C>0$. 
Define the function class  
\[ \mathcal{G}_C=\{f_a(z):\cZ \rightarrow \mathbb{R}: a\in \mathbb{R}^p, 
 \|a\|_2\leq C \text{ such that } |f_{a_1}(z)-f_{a_2}(z)|\leq L \|a_1-a_2\|_2\} \;.\]  
Then
\[ \widehat{\Re}_n(\mathcal{G}_C ) \leq L \cdot C\sqrt{\frac{p}{n}} \;.  \]
\end{proposition}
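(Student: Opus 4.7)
The key idea is to use the vector-valued Rademacher contraction inequality of \cite{maurer2016vector} to reduce the nonlinear parametric class $\mathcal{G}_C$ to a \emph{linear} Rademacher average over a Euclidean ball in $\mathbb{R}^p$, which is then controlled by Cauchy--Schwarz and Jensen.

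\textbf{Step 1 (Anchor and apply vector contraction).} Fix some $a_0 \in \mathbb{R}^p$ with $\|a_0\|_2 \leq C$ (typically $a_0 = 0$, and in the applications in this paper the class is anchored so that $f_{a_0} \equiv 0$) and split
\[ \frac{1}{n}\sum_{i=1}^n \eps_i f_a(z_i) \;=\; \frac{1}{n}\sum_{i=1}^n \eps_i\bigl(f_a(z_i) - f_{a_0}(z_i)\bigr) \;+\; \frac{1}{n}\sum_{i=1}^n \eps_i f_{a_0}(z_i). \]
The second summand is independent of $a$ and contributes at most $\|f_{a_0}\|_\infty / \sqrt{n}$ to the Rademacher complexity, which is of order $LC/\sqrt{n}$ and is absorbed into the constant (it vanishes when $f_{a_0}\equiv 0$). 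For each $i$, the map $h_i(a) := f_a(z_i) - f_{a_0}(z_i)$ is $L$-Lipschitz on the ball of radius $C$ and satisfies $h_i(a_0)=0$. Applying Corollary~4 of \cite{maurer2016vector} to the $L$-Lipschitz functions $(h_i)$ together with the constant $\mathbb{R}^p$-valued functions $\{x \mapsto a : \|a\|_2 \leq C\}$ yields
\[ \mbe_\eps \sup_{\|a\|_2 \leq C} \sum_{i=1}^n \eps_i\, h_i(a) \;\leq\; \sqrt{2}\, L \cdot \mbe_{\tilde\eps} \sup_{\|a\|_2 \leq C} \sum_{i=1}^n \sum_{j=1}^p \tilde\eps_{ij}\, a_j, \]
where $(\tilde\eps_{ij})$ is a fresh doubly-indexed iid Rademacher array.

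\textbf{Step 2 (Bound the linear term).} Writing $v := \bigl(\sum_i \tilde\eps_{i1},\ldots,\sum_i \tilde\eps_{ip}\bigr) \in \mathbb{R}^p$, the inner supremum equals $\sup_{\|a\|_2 \leq C} \langle a, v\rangle = C\|v\|_2$ by Cauchy--Schwarz, and $\mbe\|v\|_2 \leq (\mbe\|v\|_2^2)^{1/2} = \sqrt{np}$ by Jensen, since $\mbe\|v\|_2^2 = \sum_{i,j} \mbe \tilde\eps_{ij}^2 = np$. Combining with Step~1 and dividing by $n$ yields $\widehat{\Re}_n(\mathcal{G}_C) \leq \sqrt{2}\, L C \sqrt{p/n}$, matching the stated bound up to an absolute constant that can be folded into the statement.

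\textbf{Main obstacle.} Two technical subtleties must be handled. First, the definition of $\widehat{\Re}_n$ in the paper uses the norm $\|\cdot\|$, i.e., $|\cdot|$ for scalar $g$; passing from $\sup_a \sum\eps_i f_a(z_i)$ to $\sup_a |\sum\eps_i f_a(z_i)|$ costs at most a factor of two via the symmetry $(\eps_i) \stackrel{d}{=} (-\eps_i)$ (one writes $\sup_a |X_a| \leq \sup_a X_a + \sup_a(-X_a)$ and notes both expectations are equal). Second, the anchor contribution $\mbe_\eps |\tfrac{1}{n}\sum_i \eps_i f_{a_0}(z_i)|$ must be controlled; in the paper's applications $\mathcal{G}_C$ is anchored so that $f_{a_0} \equiv 0$, hence this term vanishes. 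Apart from these routine manipulations, the only non-elementary ingredient is Maurer's contraction inequality in Step~1.
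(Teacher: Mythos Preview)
Your argument is correct up to absolute constants, but it follows a genuinely different route from the paper. You invoke Maurer's vector contraction inequality as a black box to pass from the nonlinear Lipschitz class to a linear Rademacher average over the Euclidean ball in $\mathbb{R}^p$, and then finish with Cauchy--Schwarz and Jensen. The paper instead gives an elementary, self-contained peeling argument: it unrolls the expectation over the Rademacher variables $r_n, r_{n-1}, \ldots, r_1$ one at a time, at each step choosing near-maximizers $a_{i,1}, a_{i,2}$ for the two signs of $r_i$ and using the Lipschitz hypothesis to replace $f(a,z_i)$ by the linear surrogate $L\sum_{j=1}^p s_j^{i} a^{(j)}$ with $s_j^{i}=\mathrm{sgn}\bigl(a_{i,1}^{(j)}-a_{i,2}^{(j)}\bigr)$. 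After all $n$ variables are peeled (and $\epsilon\to 0$), it arrives at exactly the same linear average you reach in Step~2 and bounds it by the same Cauchy--Schwarz/Jensen combination.

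The trade-off: the paper's proof is longer and more intricate but fully elementary and recovers the exact constant $LC\sqrt{p/n}$ claimed in the statement, with no extraneous $\sqrt{2}$. Your route is much shorter and more conceptual, at the price of the $\sqrt{2}$ from Maurer's inequality (and possibly another factor $2$ from the absolute-value symmetrization you correctly flag); for the downstream use in Corollary~\ref{cor:Rademacher-Gclass} these constants are immaterial. One minor simplification: the anchoring in your Step~1 is unnecessary, since Maurer's Corollary~4 does not require $h_i(0)=0$; you can apply it directly to $h_i(a)=f_a(z_i)$ and drop the anchor term entirely, which also removes the caveat about $f_{a_0}\equiv 0$.
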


\begin{proof}[Proof of Proposition \ref{prop:Rademacher-Lipschitz}]
Set $f(a,z):=f_a(z)$. By definition,
$$
\frac{1}{n} \mathbb{E}_r\left[\sup _{ \|a\|_2\leq C} \sum_{i=1}^n f\left(a,z_i\right)r_i\right]=\frac{1}{n} {\mathbb{E}}_{r_1, \ldots, r_{n-1}}\left[\mathbb{E}_{r_n}\left[\sup _{ \|a\|_2\leq C} u_{n-1}(a)+f\left(a,z_n\right)r_n\right]\right],
$$
where $u_{n-1}(a)=\sum_{i=1}^{n-1} f\left(a,z_i\right)r_i$. By definition of the supremum, for any $\epsilon>0$, there exist $\|a_{n,1}\|_2, \|a_{n,2}\|_2\leq C $ such that

\begin{align*}
& u_{n-1}\left(a_{n,1}\right)+f(a_{n,1},z_n) \geq(1-\epsilon)\left[\sup _{ \|a\|_2\leq C} u_{n-1}(a)+f\left(a,z_n\right)\right] \\
\text { and } & u_{n-1}\left(a_{n,2}\right)-f\left(a_{n,2},z_n\right) \geq(1-\epsilon)\left[\sup _{ \|a\|_2\leq C} u_{n-1}(a)-f\left(a,z_n\right)\right] .
\end{align*}

Thus by definition of $\mathbb{E}_{r_n}$,
\begin{align*}
& (1-\epsilon) \mathbb{E}_{r_n}\left[\sup _{ \|a\|_2\leq C} u_{n-1}(a)+r_nf(a,z_n)\right] \\
= & (1-\epsilon)\left[\frac{1}{2}\sup _{ \|a\|_2\leq C}\left[u_{n-1}(a)+f(a,z_n)\right]+\frac{1}{2}\sup _{a \in A} \left[u_{n-1}(a)-f(a,z_n)\right]\right] \\
\leq & \frac{1}{2}\left[u_{n-1}\left(a_{n,1}\right)+f(a_{n,1},z_n)\right]+\frac{1}{2}\left[u_{n-1}\left(a_{n,2}\right)-f(a_{n,2},z_n)\right] .
\end{align*}

Now set  $s_{j}^{n} =\operatorname{sgn}\left(a_{n,1}^{(j)}-a_{n,2}^{(j)}\right)$ for all $j\in[p]$ (note that $s^n_{j,\epsilon}$ is independent of $r_n$). Then, the previous inequality implies together with the Lipschitz property:

\begin{align*}
& (1-\epsilon) \mathbb{E}_{r_n}\left[\sup _{ \|a\|_2\leq C} u_{n-1}(a)+r_nf(a,z_n)\right] \\
& \leq \frac{1}{2}\left[u_{n-1}\left(a_{n,1}\right)+u_{n-1}\left(a_{n,2}\right)+L \sqrt{\sum_{j=1}^p  \left(a_{n,1}^{(j)}-a_{n,2}^{(j)}\right)^2}  \right] \\
& \leq \frac{1}{2}\left[u_{n-1}\left(a_{n,1}\right)+u_{n-1}\left(a_{n,2}\right)+L \sum_{j=1}^p s_{j}^{n} \left(a_{n,1}^{(j)}-a_{n,2}^{(j)}\right)  \right] \\
& =\frac{1}{2}\left[u_{n-1}(a_{n,1})+L\sum_{j=1}^ps_{j}^{n} a_{n,1}^{(j)}\right]+\frac{1}{2}\left[u_{n-1}\left(a_{n,2}\right)-L\sum_{j=1}^p s_{j}^{n} a_{n,2}^{(j)}\right] \\
& \leq \frac{1}{2} \sup _{ \|a\|_2\leq C}\left[u_{n-1}(a)+L\sum_{j=1}^p s_{j}^{n} a^{(j)}\right]+\frac{1}{2} \sup _{ \|a\|_2\leq C}\left[u_{n-1}(a)-L\sum_{j=1}^p s_{j}^{n} a^{(j)}\right] \\
& =\mathbb{E}_{r_n}\left[\sup _{ \|a\|_2\leq C} u_{n-1}(a)+r_nL \sum_{j=1}^ps_{j}^{n}a^{(j)}\right] .
\end{align*}
Therefore we proved
\begin{align}
(1-\epsilon) \mathbb{E}_{r}\left[\sup _{ \|a\|_2\leq C} u_{n}(a)\right]\leq\mathbb{E}_{r}\left[\sup _{ \|a\|_2\leq C} u_{n-1}(a)+r_nL \sum_{j=1}^ps_{j}^{n}a^{(j)}\right].\label{radecondition}
\end{align}
Now we can proceed similar for $u_{n-1}$. However since $s_{j}^{n}$ depends on $r_{n-1}$ we need to condition on $r_{n-1}$.
Again by definition of the supremum there exist $\|a_{n-1,1}\|_2, \|a_{n-1,2}\|_2\leq C $ such that

\begin{align*}
& \mathbb{E}_r\left[u_{n-2}\left(a_{n-1,1}\right)+r_nL \sum_{j=1}^ps_{j}^{n}a_{n-1,1}^{(j)}+f(a_{n-1,1},z_{n-1}) \Biggl|  r_{n-1}=1\right]\\
&\geq(1-\epsilon)\mathbb{E}_r\left[\sup _{ \|a\|_2\leq C} u_{n-2}(a)+r_nL \sum_{j=1}^ps_{j}^{n}a^{(j)}+f\left(a,z_{n-1}\right)\Biggl|  r_{n-1}=1\right] \\
\text { and } & \mathbb{E}_r\left[u_{n-2}\left(a_{n-1,1}\right)+r_nL \sum_{j=1}^ps_{j}^{n}a_{n-1,1}^{(j)}-f(a_{n-1,1},z_{n-1}) \Biggl|  r_{n-1}=-1\right]\\
&\geq(1-\epsilon)\mathbb{E}_r\left[\sup _{ \|a\|_2\leq C} u_{n-2}(a)+r_nL \sum_{j=1}^ps_{j}^{n}a^{(j)}-f\left(a,z_{n-1}\right)\Biggl|  r_{n-1}=-1\right].
\end{align*}
(Note: Since we conditioned on $r_{n-1}$ , we have that the choice  $a_{n-2}$ and therefore also $s_{j}^{n-1} :=\operatorname{sgn}\left(a_{n-1,1}^{(j)}-a_{n-1,2}^{(j)}\right)$ is independent of $r_{n-1}$. This will be important for the last step. )
Thus by definition of $\mathbb{E}_{r}$ and the Lipschitz assumption we further obtain for \eqref{radecondition},
\begin{align*}
& (1-\epsilon) \mathbb{E}_r\left[\sup _{ \|a\|_2\leq C} u_{n-1}(a)+r_nL \sum_{j=1}^ps_{j}^{n}a^{(j)}\right] \\
&= (1-\epsilon) \mathbb{E}_r\left[\sup _{ \|a\|_2\leq C} u_{n-2}(a)+r_nL \sum_{j=1}^ps_{j}^{n}a^{(j)}+r_{n-1}f\left(a,z_{n-1}\right)\right] \\
&\leq \frac{1}{2}\mathbb{E}_r\left[u_{n-2}\left(a_{n-1,1}\right)+r_nL \sum_{j=1}^ps_{j}^{n}a_{n-1,1}^{(j)}+f(a_{n-1,1},z_{n-1}) \Biggl|  r_{n-1}=1\right]+\\
&\quad \,\,\frac{1}{2}\mathbb{E}_r\left[u_{n-2}\left(a_{n-1,1}\right)+r_nL \sum_{j=1}^ps_{j}^{n}a_{n-1,1}^{(j)}-f(a_{n-1,1},z_{n-1}) \Biggl|  r_{n-1}=-1\right]\\
&\leq \frac{1}{2}\mathbb{E}_r\left[u_{n-2}\left(a_{n-1,1}\right)+r_nL \sum_{j=1}^ps_{j}^{n}a_{n-1,1}^{(j)}+L \sum_{j=1}^ps_{j}^{n-1}a_{n-1,1}^{(j)} \Biggl|  r_{n-1}=1\right]+\\
&\quad \,\,\frac{1}{2}\mathbb{E}_r\left[u_{n-2}\left(a_{n-1,1}\right)+r_nL \sum_{j=1}^ps_{j}^{n}a_{n-1,1}^{(j)}-L \sum_{j=1}^ps_{j}^{n-1}a_{n-1,1}^{(j)} \Biggl|  r_{n-1}=-1\right].
\end{align*}

Taking the supremum over $a$ we further obtain
\begin{align*}
&\leq \frac{1}{2}\mathbb{E}_r\left[\sup _{ \|a\|_2\leq C}u_{n-2}\left(a\right)+r_nL \sum_{j=1}^ps_{j}^{n}a^{(j)}+L \sum_{j=1}^ps_{j}^{n-1}a^{(j)} \Biggl|  r_{n-1}=1\right]+\\
&\quad \,\,\frac{1}{2}\mathbb{E}_r\left[\sup _{ \|a\|_2\leq C}u_{n-2}\left(a\right)+r_nL \sum_{j=1}^ps_{j}^{n}a^{(j)}-L \sum_{j=1}^ps_{j}^{n-1}a^{(j)} \Biggl|  r_{n-1}=-1\right]\\
&=\mathbb{E}_r\left[\sup _{ \|a\|_2\leq C}u_{n-2}\left(a\right)+r_nL \sum_{j=1}^ps_{j}^{n}a^{(j)}+r_{n-1}L \sum_{j=1}^ps_{j}^{n-1}a^{(j)} \right].
\end{align*}

Proceeding in the same way for all other $u_i$ with  $i\leq n-2$ leads to 
\begin{align*}
(1-\epsilon)^n\frac{1}{n} \mathbb{E}_r\left[\sup _{ \|a\|_2\leq C} \sum_{i=1}^n f\left(a,z_i\right)r_i\right]&\leq \frac{1}{n} \mathbb{E}_r\left[\sup _{ \|a\|_2\leq C} \sum_{i=1}^n r_iL\sum_{j=1}^p s_{j}^{i} a^{(j)}\right]\\
&=\frac{L}{n} \mathbb{E}_r\left[\sup _{ \|a\|_2\leq C} \sum_{j=1}^pa^{(j)}\sum_{i=1}^n r_is_{j}^{i}\right]\\
&\leq \frac{LC}{n} \mathbb{E}_r\left[\sqrt{ \sum_{j=1}^p\left(\sum_{i=1}^n r_i s_{j}^{i} \right)^2}\right]\\
&\leq \frac{LC}{n}\sqrt{ \sum_{j=1}^p \mathbb{E}_r\left[\left(\sum_{i=1}^n r_i s_{j}^{i}\right)^2\right]}.
\end{align*}
Because the sum over $j$ is now outside of the expectation we can get rid of $s_{j}^{i}$ by definition of $r_i$ and by the fact that $s_{j}^{i}$ is independent of $r_i$. Therefore we obtain for $\epsilon \rightarrow 0$
$$\frac{1}{n} \mathbb{E}_r\left[\sup _{ \|a\|_2\leq C} \sum_{i=1}^n f\left(a,z_i\right)r_i\right]\leq \frac{LC}{n} \sqrt{ p\mathbb{E}_r\left[\left(\sum_{i=1}^n r_i\right)^2\right]} = LC\sqrt{\frac{p}{n}}.$$
\end{proof}

\vspace{0.3cm}

\begin{corollary}
\label{cor:Rademacher-Gclass}
Let the Assumptions of Proposition 
\ref{prop:Rademacher-Decomposition}  and Proposition \ref{prop:Rademacher-Lipschitz} be satisfied with $R_1=R_2=R$. 
Suppose that $||h||_\infty < \infty$. Then 
\[  \widehat{\Re}_n(\cG ) \leq  16||h||_\infty c_\sigma \;   \frac{ R^2(d+1)^2}{\sqrt{M}}\;  \sqrt{\frac{d+1}{n}}  \;. \]
\end{corollary}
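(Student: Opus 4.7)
The plan is to first apply Proposition \ref{prop:Rademacher-Decomposition} in order to reduce the problem to bounding the empirical Rademacher complexities of the scalar classes $\cG_1^{(j)}$ and $\cG_2^{(j,k)}$. Each of these families is parametrized by $b$ ranging over a Euclidean ball of radius $R$, so they are amenable to Proposition \ref{prop:Rademacher-Lipschitz}, provided the Lipschitz constant of $b \mapsto g(b,z)$ can be controlled uniformly in $z$.

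The next step is to extract those Lipschitz constants from Assumption \ref{ass:neurons}. For a typical element of $\cG_1^{(j)}$, writing
\[
\bigl| h(z)\,x^{(j)}\bigl(\sigma'(\langle b_1,x\rangle)-\sigma'(\langle b_2,x\rangle)\bigr)\bigr|
\;\le\; \|h\|_\infty\,|x^{(j)}|\,C_\sigma\,\|x\|_2\,\|b_1-b_2\|_2,
\]
and likewise for $\cG_2^{(j,k)}$ with an extra factor $|x^{(k)}|$ and $\|\sigma''\|_\infty\le C_\sigma$, one sees that both Lipschitz constants are of order $C_\sigma\|h\|_\infty$ up to absolute numerical factors, using the ambient bound $\|\tilde x\|_2\le 2$ that is already invoked in the proof of Proposition \ref{prop:LipschitzGradient}. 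Proposition \ref{prop:Rademacher-Lipschitz} with $C=R$ and ambient dimension $p=d+1$ then yields
\[
\widehat{\Re}_n(\cG_1^{(j)}) \;\lesssim\; C_\sigma\,\|h\|_\infty\,R\,\sqrt{(d+1)/n},
\qquad
\widehat{\Re}_n(\cG_2^{(j,k)}) \;\lesssim\; C_\sigma\,\|h\|_\infty\,R\,\sqrt{(d+1)/n}.
\]

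Summing these across the $d+1$ coordinate directions and the $(d+1)^2$ pairs of directions, respectively, and inserting into the decomposition of Proposition \ref{prop:Rademacher-Decomposition} gives
\[
\widehat{\Re}_n(\cG) \;\lesssim\; \frac{R}{\sqrt M}\bigl((d+1) + \sqrt{2}(d+1)^2\bigr)\,C_\sigma\|h\|_\infty R\sqrt{(d+1)/n},
\]
which is controlled by the quadratic-in-$(d+1)$ term and, after absorbing numerical constants into the prefactor $16$, produces exactly the claimed bound.

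The main (mild) obstacle I anticipate is notational: the scalar classes in Proposition \ref{prop:Rademacher-Decomposition} are stated with $b$ living in a ball of $\mathbb{R}^d$, whereas in the full network the weight vectors $b_m$ sit in $\mathbb{R}^{d+1}$ because of the bias coordinate $\gamma c_m$. I would silently reindex so that $j,k$ run over $\{1,\ldots,d+1\}$, which is precisely what justifies the $(d+1)^{5/2}$ factor appearing in the target inequality. Beyond this, the computation is a straightforward Lipschitz-plus-union-bound exercise.
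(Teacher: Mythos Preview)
Your proposal is correct and follows essentially the same approach as the paper's proof: apply Proposition \ref{prop:Rademacher-Decomposition}, verify the Lipschitz property of the scalar classes $\cG_1^{(j)}$ and $\cG_2^{(j,k)}$ in the parameter $b$, invoke Proposition \ref{prop:Rademacher-Lipschitz} with $p=d+1$ and $C=R$, and sum over the coordinate indices. One small imprecision: for $\cG_2^{(j,k)}$ you cite $\|\sigma''\|_\infty\le C_\sigma$, but what you actually need there is the Lipschitz continuity of $\sigma''$ (guaranteed by Assumption \ref{ass:neurons}), not merely its boundedness; the paper makes this explicit by writing that $\sigma'$ and $\sigma''$ are both $c_\sigma$-Lipschitz.
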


\begin{proof}[Proof of Corollary \ref{cor:Rademacher-Gclass}]
the results follows immediately by combining Proposition 
\ref{prop:Rademacher-Decomposition}  and Proposition \ref{prop:Rademacher-Lipschitz}. 
Indeed, since $\sigma^{'}$ and $\sigma^{''}$ are $c_\sigma$-Lipschitz, we obtain  
for any $g \in \cG_1^{(j)}$, 
$j \in [d+1]$
\[ |g(b_1, z) - g(b_2, z)| \leq 4||h||_\infty c_\sigma\; ||b_1 - b_2||_2 \;,  \]
for all $z \in \cZ$ with $||x|| \leq 2$,  $b_1, b_2 \in \mbr^{d+1}$ with $||b_j||_2 \leq R$. 
Similarly, for all  $g \in \cG_2^{(j,k)}$, $j,k \in [d+1]$, 
\[ |g(b_1, z) - g(b_2, z)| \leq 8||h||_\infty c_\sigma\; ||b_1 - b_2||_2 \;.  \]
Hence, by Proposition \ref{prop:Rademacher-Lipschitz} 
\[ \widehat{\Re}_n(\cG_1^{(j)} ) \leq 4||h||_\infty c_\sigma\;R\; \sqrt{\frac{d+1}{n}}, 
   \quad \widehat{\Re}_n(\cG_2^{(j, k)} ) \leq  8||h||_\infty c_\sigma\;R\; \sqrt{\frac{d+1}{n}} \;.\]
As a result, by Proposition \ref{prop:Rademacher-Decomposition}
\begin{align*}
  \widehat{\Re}_n(\cG ) &\leq  4||h||_\infty c_\sigma\; \frac{ R^2(d+1)}{\sqrt{M}}\;  \sqrt{\frac{d+1}{n}} +  
        8||h||_\infty c_\sigma\;\frac{ R^2(d+1)^2}{\sqrt{M}}\;  \sqrt{\frac{d+1}{n}} \\
&\leq 16||h||_\infty c_\sigma  \frac{ R^2(d+1)^2}{\sqrt{M}}\;  \sqrt{\frac{d+1}{n}}  \;. 
\end{align*}
\end{proof}

%%%%%%%%%%%%%%%%%%%%%%%%%%%%%%%%%%%%%%%%%%%%%%%%%%%%%%%%%%
%%%%%%%%%%%%%%%%%%%%%%%%% Elementary 
%%%%%%%%%%%%%%%%%%%%%%%%%%%%%%%%%%%%%%%%%%%%%%%%%%%%%%%%%%

\subsection{Elementary Inequalities}

\begin{lemma}
\label{prop0}
The function $f(x)=(1-x)^jx^r$ for $j,r >0$ has a global maximum 
on $[0,1]$ at $\frac{r}{r+j}$. 
Therefore we have $\sup_{x\in[0,1]}f(x)\leq \left(\frac{r}{r+j}\right)^r$.
\end{lemma}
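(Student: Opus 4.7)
The plan is to use elementary calculus: find the critical points of $f$ in the interior of $[0,1]$, verify that the unique interior critical point is a maximum, and then estimate the resulting value.

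First, I would observe that $f(0) = f(1) = 0$ and $f(x) > 0$ for $x \in (0,1)$, so the global maximum on $[0,1]$ is attained in the interior. I would then compute
\[
f'(x) = -j(1-x)^{j-1}x^r + r(1-x)^j x^{r-1} = (1-x)^{j-1} x^{r-1}\bigl[r - (r+j)x\bigr],
\]
so that on $(0,1)$ the unique critical point is $x^* = r/(r+j)$. A sign analysis of the bracket shows $f' > 0$ on $(0, x^*)$ and $f' < 0$ on $(x^*, 1)$, so $x^*$ is the global maximizer.

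Next, I would substitute to get
\[
f(x^*) = \left(\frac{j}{r+j}\right)^j \left(\frac{r}{r+j}\right)^r.
\]
Since $j/(r+j) \in (0,1]$ and $j > 0$, we have $\bigl(j/(r+j)\bigr)^j \leq 1$, which yields the claimed bound
\[
\sup_{x \in [0,1]} f(x) \leq \left(\frac{r}{r+j}\right)^r.
\]

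There is no real obstacle here; the statement is elementary and amounts to a single-variable optimization. The only minor point of care is the convention $(0/0)^0 := 1$ alluded to in the main text, which comes into play when $r = 0$ (degenerate case not covered by the hypothesis $r > 0$), and the verification that the boundary values do not exceed the interior maximum, which is immediate since $f$ vanishes at both endpoints.
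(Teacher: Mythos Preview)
Your proof is correct and follows essentially the same approach as the paper: locate the unique interior critical point via a first-derivative calculation and verify it is a maximum. The paper differentiates $\log f$ and uses the negative second derivative of $\log f$ to certify the maximum, whereas you differentiate $f$ directly and use a sign analysis of $f'$; these are minor variants of the same elementary argument, and your version has the small advantage of explicitly deriving the final inequality by bounding $(j/(r+j))^j \leq 1$.
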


\begin{proof}[Proof of Lemma \ref{prop0}]
If $f(x)$ is continuous and has an maximum, then $\log f(x)$ will at the same point because $\log x$ is a monotonically increasing function.
\begin{align*} \ln f(x) & = r \log x + j\log(1 - x) \\
\Rightarrow \frac{\operatorname{d} \ln f(x)}{\operatorname{d} x} & = \frac{r}{x} - \frac{j}{1-x} \\
\Rightarrow \frac{\operatorname{d}^2 \ln f(x)}{\operatorname{d} x^2} & = -\frac{r}{x^2} - \frac{j}{(1-x)^2}.
\end{align*}
The second derivative is obviously less than zero everywhere, and the first derivative is equal zero at $\frac{r}{r+j}$. Therefore it is a global maximum.
\end{proof}

\vspace{0.3cm}

\begin{lemma}
\label{prop1}
Let $v\geq0, t \in \mathbb{N}$. Then
\[
\sum_{i=1}^t i^{-v} \leq \eta_v(t)\,\,\,\,\,\, \text{ with } \,\,\eta_v(t)\coloneqq
\begin{cases}
\frac{v}{v-1} &v>1\\
1+\log(t) & v=1\\
\frac{t^{1-v}}{1-v}& v\in[0,1[ 
\end{cases}
\]
\end{lemma}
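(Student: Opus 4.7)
The plan is to prove this standard estimate for partial sums of $p$-series by integral comparison, treating the three cases $v \in [0,1)$, $v=1$, and $v>1$ separately. The key observation in all three cases is that the map $x \mapsto x^{-v}$ is nonincreasing on $(0,\infty)$, so for any integer $i$ with $i-1 \geq 0$ we have $i^{-v} \leq \int_{i-1}^{i} x^{-v}\, dx$ whenever the integrand is locally integrable on the relevant interval.

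For the case $v \in [0,1)$, the function $x \mapsto x^{-v}$ is integrable near $0$, so I would apply the monotonicity bound on each interval $[i-1, i]$ for $i \in [t]$ and sum to obtain
\begin{equation*}
\sum_{i=1}^t i^{-v} \leq \int_0^t x^{-v}\, dx = \frac{t^{1-v}}{1-v},
\end{equation*}
which is exactly $\eta_v(t)$ in this range.

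For $v=1$ and $v>1$ the integral from $0$ diverges (or gives a weaker bound), so I would instead isolate the first term and compare from $1$:
\begin{equation*}
\sum_{i=1}^t i^{-v} \leq 1 + \sum_{i=2}^t \int_{i-1}^{i} x^{-v}\, dx = 1 + \int_1^t x^{-v}\, dx.
\end{equation*}
For $v=1$ this gives $1 + \log(t) = \eta_1(t)$ directly. For $v > 1$ I would evaluate the integral and bound it uniformly in $t$ by extending the upper limit to infinity, yielding
\begin{equation*}
\sum_{i=1}^t i^{-v} \leq 1 + \frac{1 - t^{1-v}}{v-1} \leq 1 + \frac{1}{v-1} = \frac{v}{v-1} = \eta_v(t).
\end{equation*}

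No real obstacle here; the only mild subtlety is making sure the $v=0$ endpoint of the first case is handled correctly (it gives $\sum_{i=1}^t 1 = t = t^{1-0}/(1-0)$, so the stated bound holds with equality), and remembering to use the integral from $0$ rather than from $1$ when $v < 1$ so that the extra $+1$ term is absorbed into the bound $t^{1-v}/(1-v)$.
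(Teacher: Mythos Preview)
Your proof is correct and follows essentially the same integral-comparison argument as the paper; the paper only writes out the case $v>1$ (using $i^{-v} \le \int_{i-1}^i x^{-v}\,dx$ and then $1+\int_1^t x^{-v}\,dx \le v/(v-1)$) and leaves the other cases to the reader, whereas you spell out all three.
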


\begin{proof}[Proof of Lemma \ref{prop1}]
 We will only prove the first case $v>1$. The other cases can be bounded with the same arguments.
 Since
$$
 i^{-v} \leq \int_{i-1}^{i} x^{-v} d x
$$
we have for $v>1$,
$$
\sum_{i=1}^{t} i^{-v} \leq 1+\int_{1}^{t} x^{-v} d x \leq \frac{v}{v-1} .
$$
\end{proof}

\vspace{0.3cm}

\begin{lemma}
\label{lem:calcs}
Let $a\geq b\geq 0$. Then we have
\begin{align*}
\sum_{s=1}^{T-1} \frac{1}{s^a}\frac{1}{(T-s)^b} \leq \frac{2^a}{T^a}\eta_b(T-1) + \frac{2^b}{T^b}\eta_a(T-1).\\
\end{align*}

\end{lemma}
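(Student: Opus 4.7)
The plan is to prove this by the classical trick of splitting the sum at the midpoint $s = \lfloor T/2 \rfloor$ so that on each half one of the two factors is easy to uniformly bound. Since $a \geq b \geq 0$ does not play an essential role in the splitting itself, I will only use nonnegativity of the exponents; the condition $a \geq b$ is really about how the bound is applied later.

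First, for $1 \leq s \leq \lfloor T/2 \rfloor$, observe that $T - s \geq T/2$, hence $(T-s)^b \geq (T/2)^b$ and thus $(T-s)^{-b} \leq 2^b T^{-b}$. Pulling this uniform estimate out of the first half of the sum and bounding the remaining factor by $\sum_{s=1}^{T-1} s^{-a} \leq \eta_a(T-1)$ (using Lemma \ref{prop1}) gives
\begin{equation*}
\sum_{s=1}^{\lfloor T/2 \rfloor} \frac{1}{s^a (T-s)^b} \;\leq\; \frac{2^b}{T^b}\,\eta_a(T-1).
\end{equation*}

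Symmetrically, for $\lfloor T/2 \rfloor < s \leq T-1$ we have $s \geq T/2$, so $s^{-a} \leq 2^a T^{-a}$. After the change of variable $s' = T - s$, the remaining sum $\sum_{s=\lfloor T/2\rfloor+1}^{T-1} (T-s)^{-b}$ becomes a sum of the form $\sum_{s'=1}^{T-\lfloor T/2\rfloor-1} (s')^{-b} \leq \eta_b(T-1)$, yielding
\begin{equation*}
\sum_{s=\lfloor T/2 \rfloor + 1}^{T-1} \frac{1}{s^a (T-s)^b} \;\leq\; \frac{2^a}{T^a}\,\eta_b(T-1).
\end{equation*}

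Adding the two halves gives the claimed inequality. There is no real obstacle here; the only minor point worth a line of care is the index-shift/change-of-variable in the second half so that Lemma \ref{prop1} can be invoked against an increasing range of positive integers rather than a decreasing range in $T-s$.
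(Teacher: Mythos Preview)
Your proof is correct and follows essentially the same approach as the paper: split the sum at the midpoint, bound the factor that is far from its singularity uniformly by $2^a/T^a$ or $2^b/T^b$, and control the remaining sum via Lemma~\ref{prop1}. The only cosmetic difference is that you keep track of $\lfloor T/2\rfloor$ explicitly, whereas the paper writes the split at a generic index $t$ and then sets $t=T/2$ without dwelling on integer parts.
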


\begin{proof}[Proof of Lemme \ref{lem:calcs}]
For any $t\in [T]$ we have
\begin{align*}
\sum_{s=1}^{T-1}  \frac{1}{s^a}\frac{1}{(T-s)^b} &= \sum_{s=t+1}^{T-1} \frac{1}{s^a}\frac{1}{(T-s)^b} +\sum_{s=1}^t \frac{1}{s^a}\frac{1}{(T-s)^b} \\
&\leq \frac{1}{t^a}\sum_{s=t+1}^{T-1} \frac{1}{(T-s)^b} + \frac{1}{(T-t)^b}\sum_{s=1}^t \frac{1}{s^a}\\
&\leq \frac{1}{t^a}\sum_{s=1}^{T-1} \frac{1}{s^b} + \frac{1}{(T-t)^b}\sum_{s=1}^{T-1} \frac{1}{s^a}\\
&\leq \frac{1}{t^a}\eta_b(T-1) + \frac{1}{(T-t)^b}\eta_a(T-1).\\
\end{align*}
Setting $t=T/2$ the result follows. 
\end{proof}

\vspace{0.3cm}

\begin{proposition}
\label{conditioning}
Let  $E_i$ be events with probability at least $1-\delta_i$ and set
$$
E:=\bigcap^k_{i=1}E_i
$$
If we can show for some event $A$ that $\mathbb{P}(A|E)\geq 1-\delta$  then we also have
\begin{align*}
\mathbb{P}(A)&\geq\int_{E}\mathbb{P}(A|\omega)d\mathbb{P}(\omega)
\geq (1-\delta)\mathbb{P}(E)\\
&=(1-\delta)\left(1-\mathbb{P}\left(\bigcup_{i=1}^k (\Omega/E_i)\right)\right)\geq(1-\delta)\left(1-\sum_{i=1}^k\delta_i\right).
\end{align*}
\end{proposition}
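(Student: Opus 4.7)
The plan is to prove this by combining two elementary facts: (i) monotonicity of probability together with the definition of conditional probability gives $\mathbb{P}(A)\geq\mathbb{P}(A\mid E)\mathbb{P}(E)$, and (ii) the union bound applied to the complementary events $\Omega\setminus E_i$ controls $\mathbb{P}(E)$ from below.

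First I would use $A\supseteq A\cap E$ together with the definition of conditional expectation, writing
\[ \mathbb{P}(A)\;\geq\;\mathbb{P}(A\cap E)\;=\;\int_E \mathbb{P}(A\mid\omega)\,d\mathbb{P}(\omega). \]
Since the hypothesis supplies the pointwise lower bound $\mathbb{P}(A\mid\omega)\geq 1-\delta$ on $E$, integrating the constant lower bound over $E$ yields $\mathbb{P}(A)\geq(1-\delta)\mathbb{P}(E)$.

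Second, I would bound $\mathbb{P}(E)$ from below. By De Morgan's law, $\Omega\setminus E=\bigcup_{i=1}^k(\Omega\setminus E_i)$, and a single application of Boole's inequality gives
\[ \mathbb{P}(\Omega\setminus E)\;\leq\;\sum_{i=1}^k\mathbb{P}(\Omega\setminus E_i)\;\leq\;\sum_{i=1}^k\delta_i, \]
so that $\mathbb{P}(E)\geq 1-\sum_{i=1}^k\delta_i$. Multiplying the two displayed bounds produces exactly the stated chain of inequalities.

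There is essentially no obstacle here; the argument is a textbook use of conditional probability and the union bound, with no structural assumptions on the events $E_i$ or on $A$. Its role in the paper is purely administrative: it lets one combine the many high-probability events arising in the proofs of Theorem \ref{prop:second-term} and Theorem \ref{theo:weights-not-moving} (concentration of $\widehat{\Sigma}_M$ and $\widehat{\cC}_M$, noise bounds, Taylor-remainder control, random-feature approximation, etc.) into a single event of probability at least $1-\delta$ while only paying a linear-in-$k$ cost in the failure probability, rather than having to redo the conditioning argument by hand at each invocation.
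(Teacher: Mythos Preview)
Your proposal is correct and essentially identical to the paper's argument; in fact the paper does not provide a separate proof environment for this proposition, since the entire chain of inequalities constituting the proof is already displayed in the statement itself. Your two-step argument (conditional probability followed by the union bound on the complements) is exactly what those displayed inequalities encode.
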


%\begin{proposition}
%\label{conditioning}
%Let  $E_i$ be events with probability at least $1-\delta$ and set
%$$
%E:=\bigcap^k_{i=1}E_i
%$$
%If we can show for some event $A$ that $\mathbb{P}(A|E)\geq 1-\delta$  then we also have
%\begin{align*}
%\mathbb{P}(A)&\geq\int_{E}\mathbb{P}(A|\omega)d\mathbb{P}(\omega)
%\geq (1-\delta)\mathbb{P}(E)\\
%&=(1-\delta)\left(1-\mathbb{P}\left(\bigcup_{i=1}^k (\Omega/E_i)\right)\right)\geq(1-\delta)(1-k\delta)>1-(k+1)\delta.
%\end{align*}
%\end{proposition}

%%%%%%%%%%%%%%%%%%%%%%%%%%%%%%%%%%%%%%%%%%%%%%%%%%%%%%%%%%%%%%%%%%%%%%%%%%%%%%%%%%%%%%%%%%%%%%%%%%%%%%%%%%%%%%%%%%
%%%%%%%%%%%%%      Concentration  UNGLEICHUNGEN
%%%%%%%%%%%%%%%%%%%%%%%%%%%%%%%%%%%%%%%%%%%%%%%%%%%%%%%%%%%%%%%%%%%%%%%%%%%%%%%%%%%%%%%%%%%%%%%%%%%%%%%%%%%%%%%%%%

\subsection{Concentration Inequalities}

The following concentration result for Hilbert space valued random variables can be found in \cite{Caponetto}.

\vspace{0.2cm}

\begin{proposition}[Bernstein Inequality]
\label{concentrationineq0}
Let $W_{1}, \cdots, W_{n}$ be i.i.d random variables in a separable Hilbert space $\cH$ with norm $\|\cdot\|_{\cH}$. 
Suppose that there are two positive constants $B$ and $V$ such that
\begin{align}
\label{cons}
\mathbb{E}\left[\left\|W_{1}-\mathbb{E}\left[W_{1}\right]\right\|_{\cH}^{l}\right] \leq \frac{1}{2} l ! B^{l-2} V^{2}, 
\quad \forall l \geq 2 \;. 
\end{align}
Then for any $\delta \in (0,1]$, the following holds with probability at least $1-\delta$:
$$
\left\|\frac{1}{n} \sum_{k=1}^{n} W_{k}-\mathbb{E}\left[W_{1}\right]\right\|_{\cH} 
\leq 2\left(\frac{B}{n}+\frac{V}{\sqrt{n}}\right) \log \left(\frac{4}{\delta} \right)\;  .
$$
In particular, \eqref{cons} holds if
$$
\left\|W_{1}\right\|_{\cH} \leq B / 2 \quad \text { a.s., } \quad \text { and } 
\quad \mathbb{E}\left[\left\|W_{1}\right\|_{\cH}^{2}\right] \leq V^{2} \;.
$$
\end{proposition}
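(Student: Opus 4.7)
The plan is to establish this as a standard Bernstein-type concentration bound for sums of independent Hilbert-space-valued random variables, following the Pinelis--Yurinskii strategy. First I would center the summands by setting $\xi_k := W_k - \mathbb{E}[W_k]$, so that the $\xi_k$ are i.i.d., mean zero, and inherit the moment bound $\mathbb{E}[\|\xi_k\|_\cH^l] \leq \tfrac{1}{2} l!\, B^{l-2} V^2$ for all $l \geq 2$ (up to harmless constants, via the triangle inequality and Jensen). Let $S_n := \tfrac{1}{n}\sum_{k=1}^n \xi_k$.

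The key step is a Chernoff-type exponential moment estimate for $\|S_n\|_\cH$. Expanding the exponential series, the moment hypothesis gives, for any $\lambda \in (0, 1/B)$,
\[
\mathbb{E}\left[e^{\lambda \|\xi_k\|_\cH} - 1 - \lambda \|\xi_k\|_\cH\right] \;\leq\; \sum_{l\ge 2}\frac{\lambda^l}{l!}\cdot\tfrac{1}{2}l!\,B^{l-2}V^2 \;=\; \frac{\lambda^2 V^2}{2(1-\lambda B)}.
\]
I would then exploit the 2-smoothness of the Hilbert norm: for independent mean-zero summands one has $\mathbb{E}[\|S+\xi\|_\cH^2] \leq \mathbb{E}[\|S\|_\cH^2] + \mathbb{E}[\|\xi\|_\cH^2]$, which allows one to iterate the single-variable exponential bound through the sum (this is the content of Pinelis' inequality for 2-smooth Banach spaces, of which Hilbert spaces are the canonical example). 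This yields
\[
\mathbb{P}\left(\|S_n\|_\cH \geq t\right) \;\leq\; 2\exp\left(-\frac{n t^2}{2(V^2 + Bt)}\right)
\]
for all $t > 0$.

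Finally, setting the right-hand side equal to $\delta$ and solving the resulting quadratic in $t$ gives $t \leq \tfrac{2B}{n}\log(4/\delta) + V\sqrt{\tfrac{2\log(4/\delta)}{n}}$, which is stronger than the stated bound $2\bigl(B/n + V/\sqrt n\bigr)\log(4/\delta)$ after absorbing constants. The supplementary claim follows from noting that if $\|W_1\|_\cH \leq B/2$ a.s.\ and $\mathbb{E}[\|W_1\|_\cH^2]\leq V^2$, then $\|\xi_k\|_\cH \leq B$ a.s., and hence
\[
\mathbb{E}\left[\|\xi_k\|_\cH^l\right] \;\leq\; B^{l-2}\,\mathbb{E}[\|\xi_k\|_\cH^2] \;\leq\; B^{l-2} V^2 \;\leq\; \tfrac{1}{2}l!\, B^{l-2} V^2,
\]
verifying the moment hypothesis.

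The main obstacle is the exponential moment / 2-smoothness step: unlike the scalar case, one cannot blindly exponentiate the norm and exploit independence, because $e^{\lambda \|\sum_k \xi_k\|_\cH}$ does not factorize. The workaround is precisely the Pinelis martingale inequality, which replaces the naive factorization with the Hilbert-space smoothness estimate applied recursively; the rest of the argument is then a mechanical optimization over $\lambda$ and a quadratic inversion, so I would cite Pinelis (1994) rather than rederive it.
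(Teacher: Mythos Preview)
Your sketch is correct in outline and in the details that matter: the exponential-moment computation, the invocation of Pinelis' inequality for $2$-smooth spaces to get the sub-exponential tail $\mathbb{P}(\|S_n\|_\cH \geq t) \leq 2\exp\bigl(-nt^2/(2(V^2+Bt))\bigr)$, the quadratic inversion, and the verification of the moment hypothesis under the a.s.\ bound are all fine. One minor slip: you say the centered variables $\xi_k$ ``inherit the moment bound \ldots\ via the triangle inequality and Jensen,'' but the hypothesis \eqref{cons} is already stated for $W_1-\mathbb{E}[W_1]=\xi_1$, so there is nothing to transfer.

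The paper, however, does not give a proof at all: it simply records the proposition and attributes it to \cite{Caponetto} (Caponnetto--De Vito), where the result appears in exactly this form. So your approach is not ``different'' from the paper's --- you are supplying an argument where the paper supplies only a citation. What you have written is essentially the standard derivation behind the cited result (indeed, Caponnetto--De Vito themselves trace it back to Pinelis--Sakhanenko / Yurinskii), and your final remark that one should cite Pinelis rather than rederive the martingale step is well taken; that is effectively what the paper does one level up.
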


%%%%%%%%%%%%%%%%%%%%%%%%%%%%%%%%%%%%%%%%%%%%%%%%%%%%%%%%%%%%%%%%%%%%%%%%%%%%
%%%%%%%%%%%%%%%%%%%%%%%%% Operator Inequalities 
%%%%%%%%%%%%%%%%%%%%%%%%%%%%%%%%%%%%%%%%%%%%%%%%%%%%%%%%%%%%%%%%%%%%%%%%%%%

\subsection{Operator Inequalities}

%%%%%%%%%%%%%%%%%%%%%%%%%%%%%%%%%%%%%%%%%%%%%%%%%%%%%%%%%%%%%%%%%%%%%

\begin{proposition}
\label{Opbound1}
\begin{itemize}
\item[]
\item[a)]For any $\lambda>0$ we have
$$
\left\|\widehat{\mathcal{C}}^{-\frac{1}{2}}_{M,\lambda}\widehat{\mathcal{Z}}^*_M\right\|\leq1.
$$
\item[b)]  For any $\lambda>0$ we have
$$
\|\mathcal{C}_{M,\lambda}^{-1}\mathcal{Z}^*_M\mathcal{L}_{M,\lambda}^{\frac{1}{2}}\|\leq 2.
$$

\end{itemize}

\end{proposition}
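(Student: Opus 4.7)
\textbf{Proof plan for Proposition \ref{Opbound1}.}

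For part (a), the plan is to reduce the operator norm to a scalar spectral quantity for $\widehat{\mathcal{C}}_M$ alone. Using $\widehat{\mathcal{C}}_M = \widehat{\mathcal{Z}}_M^*\widehat{\mathcal{Z}}_M$ and the general identity $\|A\|^2 = \|AA^*\|$, I would write
\[
\|\widehat{\mathcal{C}}_{M,\lambda}^{-1/2}\widehat{\mathcal{Z}}_M^*\|^2
\;=\; \|\widehat{\mathcal{C}}_{M,\lambda}^{-1/2}\,\widehat{\mathcal{Z}}_M^*\widehat{\mathcal{Z}}_M\,\widehat{\mathcal{C}}_{M,\lambda}^{-1/2}\|
\;=\; \|\widehat{\mathcal{C}}_{M,\lambda}^{-1/2}\,\widehat{\mathcal{C}}_M\,\widehat{\mathcal{C}}_{M,\lambda}^{-1/2}\|.
\]
Since $\widehat{\mathcal{C}}_M$ and $\widehat{\mathcal{C}}_{M,\lambda} = \widehat{\mathcal{C}}_M + \lambda I$ are simultaneously diagonalisable, the continuous functional calculus on the self-adjoint $\widehat{\mathcal{C}}_M$ identifies the right-hand side with $\sup_{\mu \in \sigma(\widehat{\mathcal{C}}_M)} \mu/(\mu+\lambda)$, which is bounded by $1$.

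For part (b), the key algebraic ingredient is the intertwining relation
\[
\mathcal{Z}_M^*\, f(\mathcal{L}_M) \;=\; f(\mathcal{C}_M)\,\mathcal{Z}_M^*,
\]
valid for every continuous $f$ on $[0,\|\mathcal{L}_M\|]$. For polynomials this follows from the telescoping identity $\mathcal{Z}_M^*(\mathcal{Z}_M\mathcal{Z}_M^*)^n = (\mathcal{Z}_M^*\mathcal{Z}_M)^n\mathcal{Z}_M^*$, combined with linearity, and the extension to continuous $f$ proceeds by uniform approximation (equivalently, one may verify it directly from the singular value decomposition of $\mathcal{Z}_M$). Applying the identity to $f(x) = (x+\lambda)^{1/2}$ gives $\mathcal{Z}_M^*\mathcal{L}_{M,\lambda}^{1/2} = \mathcal{C}_{M,\lambda}^{1/2}\mathcal{Z}_M^*$, hence
\[
\mathcal{C}_{M,\lambda}^{-1}\,\mathcal{Z}_M^*\,\mathcal{L}_{M,\lambda}^{1/2} \;=\; \mathcal{C}_{M,\lambda}^{-1/2}\,\mathcal{Z}_M^*.
\]
Repeating the computation of part (a) verbatim with the population operators $\mathcal{C}_M = \mathcal{Z}_M^*\mathcal{Z}_M$ in place of the empirical ones yields $\|\mathcal{C}_{M,\lambda}^{-1/2}\mathcal{Z}_M^*\|^2 = \|\mathcal{C}_{M,\lambda}^{-1/2}\mathcal{C}_M\mathcal{C}_{M,\lambda}^{-1/2}\| \leq 1$, which is in fact stronger than the claimed bound of $2$.

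The only genuinely delicate step is the justification of the intertwining relation for the non-polynomial choice $f(x)=\sqrt{x+\lambda}$, because one has to track what happens on $\ker(\mathcal{Z}_M)$ and $\ker(\mathcal{Z}_M^*)$, where both $\mathcal{L}_M$ and $\mathcal{C}_M$ have eigenvalue $0$ and $f(0) = \sqrt{\lambda} \neq 0$. However, $\mathcal{Z}_M^*$ vanishes on $\ker(\mathcal{Z}_M^*)$ (on the left) and maps everything into $\overline{\mathrm{ran}(\mathcal{Z}_M^*)} \subseteq \ker(\mathcal{Z}_M)^\perp$ (on the right), so the spurious $\sqrt{\lambda}$ contributions cancel identically on both sides and the identity extends without change. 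This is the only obstacle; everything else is a clean application of functional calculus.
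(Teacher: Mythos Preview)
Your proof of part (a) is essentially identical to the paper's: both compute $\|A\|^2=\|AA^*\|$, substitute $\widehat{\mathcal{C}}_M=\widehat{\mathcal{Z}}_M^*\widehat{\mathcal{Z}}_M$, and reduce to the scalar bound $\mu/(\mu+\lambda)\le 1$ via functional calculus.

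For part (b) the paper gives no argument of its own and simply cites \cite{rudi2017generalization} (proof of Lemma~3). Your route is different and in fact cleaner: you use the intertwining relation $\mathcal{Z}_M^* f(\mathcal{L}_M)=f(\mathcal{C}_M)\mathcal{Z}_M^*$ (verified via SVD / polynomial approximation, with the kernel contributions handled correctly) to rewrite $\mathcal{C}_{M,\lambda}^{-1}\mathcal{Z}_M^*\mathcal{L}_{M,\lambda}^{1/2}=\mathcal{C}_{M,\lambda}^{-1/2}\mathcal{Z}_M^*$, and then re-run part (a) for the population operators. This is self-contained, avoids the external citation, and yields the sharper bound $1$ rather than $2$. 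The paper's approach has the advantage of brevity by delegation; yours has the advantage of transparency and a better constant.
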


\begin{proof}
The bound of $b)$ can be found in \cite{rudi2017generalization} (see proof of lemma 3).
For $a)$ we have 
\begin{align*}
\left\|\widehat{\mathcal{C}}^{-\frac{1}{2}}_{M,\lambda}\widehat{\mathcal{Z}}^*_M\right\|^2&=
\left\|(\widehat{\mathcal{Z}}_M^*\widehat{\mathcal{Z}}_M+\lambda)^{-1/2}\widehat{\mathcal{Z}}_M^*\right\|^2\\
&=\left\|(\widehat{\mathcal{Z}}_M^*\widehat{\mathcal{Z}}_M+\lambda)^{-1/2}\widehat{\mathcal{Z}}_M^*\widehat{\mathcal{Z}}_M(\widehat{\mathcal{Z}}_M^*\widehat{\mathcal{Z}}_M+\lambda)^{-1/2}\right\|\\
&=\left\|\widehat{\mathcal{Z}}_M^*\widehat{\mathcal{Z}}_M(\widehat{\mathcal{Z}}_M^*\widehat{\mathcal{Z}}_M+\lambda)^{-1}\right\|\leq1.
\end{align*}
\end{proof}

%%%%%%%%%%%%%%%%%%%%%%%%%%%%%%%%%%%%%%%%%%%%%%%%%%%%%%%%%%%%%%%%%%%%%

\vspace{0.3cm}

\begin{proposition}
\label{Opbound2}
We have for  any $\lambda>0$ such that 
$n\geq \frac{8\kappa^2 \tilde\beta}{\lambda}$ 
with 
$\tilde{\beta}:= \log \frac{4 \kappa^2(\left(1+2\log\frac{2}{\delta}\right)4\mathcal{N}_{\mathcal{L}_{\infty}}(\lambda)+1)}{\delta\|\mathcal{L}_\infty\|}$ 
and 
$M\geq \frac{8 (d+2)\kappa^2 \beta_\infty}{\lambda}\vee 8\kappa^4\|\mathcal{L}_\infty\|^{-1}\log^2 \frac{2}{\delta}$  
with 
$\beta_\infty=\log \frac{4 \kappa^2(\mathcal{N}_{\mathcal{L}_\infty}(\lambda)+1)}{\delta\|\mathcal{L}_\infty\|} $, 
that with probability at least $1-\delta$,
$$ 
\left\|\widehat{C}_{M,\lambda}^{-\frac{1}{2}}C_{M,\lambda}^{\frac{1}{2}}\right\| \leq 2, \quad  \left\|\widehat{C}_{M,\lambda}^{\frac{1}{2}}C_{M,\lambda}^{-\frac{1}{2}}\right\| \leq 2 .
$$
\end{proposition}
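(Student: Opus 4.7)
The plan is to follow the standard operator-concentration recipe (in the spirit of Rudi--Rosasco and Caponnetto--De Vito), with an extra layer to handle the random feature map. The key quantity to control is
\[
A := C_{M,\lambda}^{-1/2}\bigl(C_M - \widehat{C}_M\bigr)C_{M,\lambda}^{-1/2}.
\]
A direct calculation gives $\widehat{C}_{M,\lambda} = C_{M,\lambda}^{1/2}(I-A)C_{M,\lambda}^{1/2}$, hence
\[
\bigl\|\widehat{C}_{M,\lambda}^{-1/2}C_{M,\lambda}^{1/2}\bigr\|^{2} = \bigl\|(I-A)^{-1}\bigr\|, \qquad \bigl\|\widehat{C}_{M,\lambda}^{1/2}C_{M,\lambda}^{-1/2}\bigr\|^{2} = \|I-A\|.
\]
Thus both of the stated bounds follow at once from the single estimate $\|A\| \leq 1/2$ (with high probability).

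First I would apply a Hilbert--space Bernstein inequality (e.g.\ Proposition \ref{concentrationineq0}, in its operator form) to the i.i.d.\ self-adjoint random variables
\[
T_i := C_{M,\lambda}^{-1/2}\,\nabla g_{\theta_{0}}(x_i)\nabla g_{\theta_{0}}(x_i)^{\top}\,C_{M,\lambda}^{-1/2},
\]
whose average equals $I - A$ minus its mean (up to sign). Using $\|\nabla g_{\theta_{0}}(x)\|_{\Theta}^{2}\leq \kappa^{2}$ one obtains the deterministic bound $\|T_i\|\leq \kappa^{2}/\lambda$ and the variance estimate $\mathbb{E}\|T_i\|_{\mathrm{HS}}^{2}\leq (\kappa^{2}/\lambda)\,\tr(T_i)= (\kappa^{2}/\lambda)\,\mathcal{N}_{C_M}(\lambda)$. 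Bernstein then gives, with probability at least $1-\delta/2$,
\[
\|A\| \;\lesssim\; \frac{\kappa^{2}\,L}{\lambda n} + \sqrt{\frac{\kappa^{2}\,\mathcal{N}_{C_M}(\lambda)\,L}{\lambda n}},
\]
where $L$ is the intrinsic-dimension logarithmic factor, of the form $\log\bigl((1+2\log(2/\delta))\mathcal{N}_{C_M}(\lambda)/(\delta\|\mathcal L_\infty\|)\bigr)$.

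Next I would replace the finite-width quantity $\mathcal{N}_{C_M}(\lambda)$ by its infinite-width counterpart. Since $C_M = \mathcal{Z}_M^{*}\mathcal{Z}_M$ and $\mathcal{L}_M = \mathcal{Z}_M\mathcal{Z}_M^{*}$ share the same non-zero spectrum, $\mathcal{N}_{C_M}(\lambda) = \mathcal{N}_{\mathcal{L}_M}(\lambda)$. The hypothesis on $M$ is precisely the one under which \cite[Prop.\ A.14]{nguyen2023random} yields $\|\mathcal L_{M,\lambda}^{-1/2}\mathcal L_{\infty,\lambda}^{1/2}\|\leq 2$ on an event of probability at least $1-\delta/2$, which in turn gives $\mathcal{N}_{\mathcal L_M}(\lambda)\leq 4\,\mathcal N_{\mathcal L_\infty}(\lambda)$. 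Plugging this back into the Bernstein bound collapses the intrinsic-dimension factor $L$ exactly to the quantity $\tilde\beta$ appearing in the statement, and a union bound over the two events of probability $\delta/2$ completes the high-probability control.

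Finally, the requirement $n\geq 8\kappa^{2}\tilde\beta/\lambda$ is precisely what forces $\|A\|\leq 1/2$, whence $\|(I-A)^{-1}\|\leq 2$ and $\|I-A\|\leq 3/2$, and both stated bounds follow with the numerical constant $2$. The main obstacle is arithmetic rather than conceptual: one must track the exact constants through the intrinsic-dimension version of Bernstein so that the $(1+2\log(2/\delta))\cdot 4\mathcal{N}_{\mathcal L_\infty}(\lambda)$ combination inside $\tilde\beta$ comes out correctly, i.e.\ the factor $4$ absorbs the $\mathcal{N}_{\mathcal L_M}\to 4\mathcal{N}_{\mathcal L_\infty}$ step and the factor $(1+2\log(2/\delta))$ absorbs the intrinsic-dimension correction of the operator Bernstein inequality. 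Once this bookkeeping is done, no further machinery is needed.
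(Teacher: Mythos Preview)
Your overall strategy is exactly the standard one that the paper invokes by reference (Proposition~A.15 of \cite{nguyen2023random}): reduce both bounds to $\|A\|\leq 1/2$ with $A = C_{M,\lambda}^{-1/2}(C_M-\widehat C_M)C_{M,\lambda}^{-1/2}$, apply an operator Bernstein inequality conditional on the random features, and then transfer the random-feature effective dimension $\mathcal N_{\mathcal L_M}(\lambda)$ to the limiting $\mathcal N_{\mathcal L_\infty}(\lambda)$.

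There is one small bookkeeping slip. You write that the operator-norm comparison $\|\mathcal L_{M,\lambda}^{-1/2}\mathcal L_{\infty,\lambda}^{1/2}\|\leq 2$ ``in turn gives $\mathcal N_{\mathcal L_M}(\lambda)\leq 4\,\mathcal N_{\mathcal L_\infty}(\lambda)$'': it does not, because an operator-norm bound between regularized square roots does not control a trace quantity like the effective dimension. The effective-dimension transfer requires its own concentration argument (Proposition~\ref{Opbound3} in the paper, equivalently Proposition~A.18 of \cite{nguyen2023random}), and it produces the larger factor
\[
\mathcal N_{\mathcal L_M}(\lambda)\;\leq\;4\bigl(1+2\log(2/\delta)\bigr)\,\mathcal N_{\mathcal L_\infty}(\lambda)\,.
\]
This is precisely where the $(1+2\log(2/\delta))$ inside $\tilde\beta$ originates---not from an intrinsic-dimension correction in the Bernstein step, as you suggest in the final paragraph. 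Once you swap in the correct source of that factor, the rest of your argument and the final arithmetic go through unchanged.
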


\begin{proof}[Proof of Proposition \ref{Opbound2}]
The proof follows exactly the same steps as in \cite{nguyen2023random}  (Proposition A.15.).
\end{proof}

%%%%%%%%%%%%%%%%%%%%%%%%%%%%%%%%%%%%%%%%%%%%%%%%%%%%%%%%%

\vspace{0.3cm}

\begin{proposition}[\cite{nguyen2023random} Proposition A.18.]
\label{Opbound3}
For any $\lam>0$ with $M\geq \frac{8 (d+2)\kappa^2 \beta_\infty}{\lam}$ 
we have with probability at least $1-\delta$
$$
\cN_{\cL_{M}}(\lam)
\leq  4\left(1+2\log\frac{2}{\delta}\right)\mathcal{N}_{\cL_{\infty}}(\lam).
$$
\end{proposition}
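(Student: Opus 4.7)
My plan is to prove this via two independent ingredients — an operator comparison step and a scalar concentration step — then combine them by a union bound.

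First, I would rewrite the effective dimension in a form amenable to comparison. Using cyclicity of the trace and inserting $(\cL_\infty+\lam)^{1/2}(\cL_\infty+\lam)^{-1/2}$, I can write
\[
\cN_{\cL_M}(\lam) = \tr\!\left((\cL_M+\lam)^{-1/2}\cL_M(\cL_M+\lam)^{-1/2}\right)
= \tr\!\left(A^* (\cL_\infty+\lam)^{-1/2}\cL_M(\cL_\infty+\lam)^{-1/2} A\right),
\]
where $A := (\cL_\infty+\lam)^{1/2}(\cL_M+\lam)^{-1/2}$. Since for any bounded $A$ and trace-class PSD $T$ one has $\tr(A^*TA) \leq \|A\|^2\tr(T)$, this gives the clean bound
\[
\cN_{\cL_M}(\lam) \leq \|(\cL_\infty+\lam)^{1/2}(\cL_M+\lam)^{-1/2}\|^2 \cdot \tr\!\left((\cL_\infty+\lam)^{-1}\cL_M\right).
\]
The first factor is exactly the kind of quantity Proposition \ref{Opbound2} (and the closely related Proposition A.14 cited from \cite{nguyen2023random}) controls. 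Under $M\geq \frac{8(d+2)\kappa^2\beta_\infty}{\lam}$ this factor is at most $2$ with high probability.

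Next I would handle the trace term by a scalar Bernstein argument. Writing $\cL_M = \frac{1}{M}\sum_{r=1}^M \cL_r$, where each $\cL_r$ is the integral operator associated to the per-neuron kernel contribution $K_r(x,x')$, the random variables
\[
\xi_r := \tr\!\left((\cL_\infty+\lam)^{-1}\cL_r\right)
\]
are i.i.d.\ with
\[
\mbe[\xi_r] = \tr\!\left((\cL_\infty+\lam)^{-1}\cL_\infty\right) = \cN_{\cL_\infty}(\lam).
\]
The boundedness $\xi_r \leq \kappa^2/\lam$ and a second-moment bound of the form $\mbe[\xi_r^2] \lesssim (\kappa^2/\lam)\cN_{\cL_\infty}(\lam)$ follow from $K_r(x,x)\leq \kappa^2$ and the spectral decomposition of $(\cL_\infty+\lam)^{-1}$. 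Applying the scalar Bernstein inequality to $\frac{1}{M}\sum_r \xi_r$ then yields, with probability at least $1-\delta/2$,
\[
\tr\!\left((\cL_\infty+\lam)^{-1}\cL_M\right) \leq 2\bigl(1+2\log(2/\delta)\bigr)\cN_{\cL_\infty}(\lam),
\]
provided $M \lam \gtrsim \kappa^2$, which is absorbed into the assumption on $M$.

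Combining the two pieces by a union bound produces the stated bound with factor $2\cdot 2 = 4$ and the logarithmic dependence $\bigl(1+2\log(2/\delta)\bigr)$. The main obstacle is the clean whitening bound $\|(\cL_\infty+\lam)^{1/2}(\cL_M+\lam)^{-1/2}\|\leq \sqrt{2}$: it requires transferring concentration of $\cL_M-\cL_\infty$ in the $(\cL_\infty+\lam)^{-1/2}$-weighted operator norm, which is exactly where the condition $M \geq \frac{8(d+2)\kappa^2\beta_\infty}{\lam}$ originates. The other steps (cyclicity manipulations, scalar Bernstein for $\xi_r$) are straightforward once the i.i.d.\ decomposition $\cL_M = \frac{1}{M}\sum_r \cL_r$ is written out explicitly for the specific NTK feature map.
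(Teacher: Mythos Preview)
The paper does not give its own proof of this proposition; it is quoted verbatim as Proposition A.18 of \cite{nguyen2023random}. Your argument is the standard one underlying that result and is correct in outline: the trace identity
\[
\cN_{\cL_M}(\lam)=\tr\!\bigl(A^*(\cL_\infty+\lam)^{-1/2}\cL_M(\cL_\infty+\lam)^{-1/2}A\bigr)
\leq \|A\|^2\,\tr\!\bigl((\cL_\infty+\lam)^{-1}\cL_M\bigr),
\]
with $A=(\cL_\infty+\lam)^{1/2}(\cL_M+\lam)^{-1/2}$, followed by a scalar Bernstein bound on $\tr((\cL_\infty+\lam)^{-1}\cL_M)=\frac{1}{M}\sum_r\xi_r$, is exactly the mechanism used in the random-features literature (e.g.\ \cite{rudi2017generalization}) and in \cite{nguyen2023random}.

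Two small remarks. First, the whitening factor you need is $\|A\|^2\leq 2$, not merely $\|A\|\leq 2$; this is indeed what the usual argument gives (write $(\cL_\infty+\lam)^{1/2}(\cL_M+\lam)^{-1}(\cL_\infty+\lam)^{1/2}=(I+B)^{-1}$ with $B=(\cL_\infty+\lam)^{-1/2}(\cL_M-\cL_\infty)(\cL_\infty+\lam)^{-1/2}$ and $\|B\|\leq 1/2$ under the stated condition on $M$), so your constant $4=2\cdot 2$ is legitimate. Be careful not to invoke Proposition~\ref{Opbound2} here, which concerns $\widehat{\cC}_M$ versus $\cC_M$; the relevant ingredient is the $\cL_M$ versus $\cL_\infty$ comparison (Proposition A.14 in \cite{nguyen2023random}), whose proof uses only $\cN_{\cL_\infty}(\lam)$ and hence is not circular. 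Second, because of the symmetric initialization, $\cL_M$ is an average over $M/2$ i.i.d.\ neuron contributions rather than $M$; this is harmless for the argument but should be reflected when you write out the i.i.d.\ decomposition explicitly.
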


%%%%%%%%%%%%%%%%%%%%%%%%%%%%%%%%%%%%%%%%%%%%%%%%%%%%%%%%%%%%%%%%%%%%%

\vspace{0.3cm}

\begin{proposition}
\label{Opbound4}
For any $\lambda>0$ the following event hold true with probability at least $1-\delta$,
\begin{align}
\left\|C_{M, \lambda}^{-1 / 2}\left(C_M-\widehat{C}_M\right)\right\| \leq \left(\frac{2\kappa}{\sqrt{\lambda} n}+\sqrt{\frac{ 4\kappa^2 \mathcal{N}_{\mathcal{L}_M}(\lambda) }{ n}}\right)\log \frac{2}{\delta}.\label{cond1}
\end{align}

\end{proposition}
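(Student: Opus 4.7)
The plan is to express the random object as a sample average of i.i.d.\ mean-zero operator-valued summands, bound these summands in Hilbert--Schmidt norm, and then invoke the Bernstein inequality of Proposition \ref{concentrationineq0} in the separable Hilbert space of Hilbert--Schmidt operators on $\Theta$. The operator norm on the left-hand side will then be dominated by the Hilbert--Schmidt norm for free.

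Concretely, I would set $W_i := \cC_{M,\lambda}^{-1/2}\, \nabla g_{\theta_{0}}(x_i) \otimes \nabla g_{\theta_{0}}(x_i)$ for $i \in [n]$. These are i.i.d., and
\[
\mbe[W_1] \;=\; \cC_{M,\lambda}^{-1/2}\cC_M, \qquad \frac{1}{n}\sum_{i=1}^n \big(W_i - \mbe[W_i]\big) \;=\; \cC_{M,\lambda}^{-1/2}\big(\widehat{\cC}_M-\cC_M\big).
\]
To apply Proposition \ref{concentrationineq0} I only need a uniform bound on $\|W_i\|_{HS}$ and control on $\mbe[\|W_i\|_{HS}^2]$. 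Since each $W_i$ has rank one and $\|\nabla g_{\theta_{0}}(x)\|_\Theta^2 = K_M(x,x) \le \kappa^2$, the almost-sure bound will read
\[
\|W_i\|_{HS} \;=\; \|\cC_{M,\lambda}^{-1/2}\nabla g_{\theta_{0}}(x_i)\|_\Theta\, \|\nabla g_{\theta_{0}}(x_i)\|_\Theta \;\le\; \kappa^{2}/\sqrt{\lambda}.
\]

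For the second moment I will use the rank-one identity $\|W_1\|_{HS}^2 = \|\nabla g_{\theta_{0}}(x)\|_\Theta^2\, \langle \cC_{M,\lambda}^{-1}\nabla g_{\theta_{0}}(x), \nabla g_{\theta_{0}}(x)\rangle_\Theta$ combined with the trace identity
\[
\mbe\!\left[\langle \cC_{M,\lambda}^{-1}\nabla g_{\theta_{0}}(x), \nabla g_{\theta_{0}}(x)\rangle_\Theta\right] \;=\; \tr\!\big(\cC_{M,\lambda}^{-1}\cC_M\big) \;=\; \cN_{\cL_M}(\lambda),
\]
where the last equality uses that $\cC_M = \cZ_M^*\cZ_M$ and $\cL_M = \cZ_M\cZ_M^*$ share their non-zero spectrum. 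This yields $\mbe[\|W_1\|_{HS}^2] \le \kappa^2 \cN_{\cL_M}(\lambda)$.

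Feeding $B = 2\kappa^{2}/\sqrt{\lambda}$ (the factor of two absorbs the centering via Jensen) and $V^2 = \kappa^2 \cN_{\cL_M}(\lambda)$ into Proposition \ref{concentrationineq0}, and then dominating $\|\cdot\|_{op}$ by $\|\cdot\|_{HS}$, will deliver the announced bound up to an explicit rescaling of the confidence level. The only step with genuine content is the identification of the variance bound with the effective dimension via the trace identity above; everything else is arithmetic, and I do not foresee any serious obstacle beyond tracking constants.
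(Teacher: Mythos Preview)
Your approach is correct and is exactly the standard argument the paper defers to (via \cite{nguyen2023random}, Proposition~A.21): apply the vector-valued Bernstein inequality of Proposition~\ref{concentrationineq0} in the Hilbert space of Hilbert--Schmidt operators on $\Theta$, with the almost-sure bound $\|W_i\|_{HS}\le \kappa^2/\sqrt{\lambda}$ and the variance bound $\mbe\|W_1\|_{HS}^2\le \kappa^2\,\cN_{\cL_M}(\lambda)$ obtained from the trace identity $\tr(\cC_{M,\lambda}^{-1}\cC_M)=\cN_{\cL_M}(\lambda)$. The only cosmetic point is that your constants produce a first term $4\kappa^2/(\sqrt{\lambda}\,n)$ rather than the stated $2\kappa/(\sqrt{\lambda}\,n)$, but this is a harmless discrepancy in bookkeeping, not in the argument.
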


\begin{proof}
The proof follows exactly the same steps as in  \cite{nguyen2023random} (see Proposition A.21. (E3)).
\end{proof}

%%%%%%%%%%%%%%%%%%%%%%%%%%%%%%%%%%%%%%%%%%%%%%%%%%%%%%%%%%%%%%%%%%%%%

\vspace{0.3cm}

\begin{proposition}
\label{Opbound5}
We have for  any $\lambda>0$ such that 
$n\geq \frac{8\kappa^2 (\log\frac{6}{\delta}+\tilde\beta)}{\lambda}$ with 
\[ \tilde{\beta}:= \log \frac{4 \kappa^2(\left(1+2\log\frac{2}{\delta}\right)4\mathcal{N}_{\mathcal{L}_{\infty}}(\lambda)+1)}{\delta\|\mathcal{L}_\infty\|} \] 
and 
$M\geq \frac{8 (d+2)\kappa^2 \beta_\infty}{\lambda}\vee 8\kappa^4\|\mathcal{L}_\infty\|^{-1}\log^2 \frac{2}{\delta}$  
with 
$\beta_\infty:=\log \frac{4 \kappa^2(\mathcal{N}_{\mathcal{L}_\infty}(\lambda)+1)}{\delta\|\mathcal{L}_\infty\|} $, 
that with probability at least $1-\delta$,
\begin{align*}
\left\|\widehat C_{M, \lambda}^{-1}C_{M,\lambda}\right\| \leq 
\frac{3}{2} + 
\sqrt{\frac{ 192 \kappa^2 \mathcal{N}_{\mathcal{L}_{\infty}}(\lambda)\log\frac{6}{\delta} }{ \lambda n}}
\log\left( \frac{6}{\delta}\right) \; .
\end{align*}
If we additionally assume that  $n\geq \frac{768 \kappa^2 \mathcal{N}_{\mathcal{L}_{\infty}}(\lambda)\log^3 \frac{6}{\delta}}{\lambda}$ we obtain with probability at least $1-\delta$,
\begin{align*}
\left\|\widehat C_{M, \lambda}^{-1}C_{M,\lambda}\right\| \leq 2.
\end{align*}
\end{proposition}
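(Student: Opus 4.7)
The plan is to reduce the bound on $\|\widehat C_{M,\lambda}^{-1}C_{M,\lambda}\|$ to quantities that have already been controlled by Propositions \ref{Opbound2}, \ref{Opbound3} and \ref{Opbound4}. The starting identity is
\[
\widehat C_{M,\lambda}^{-1}C_{M,\lambda}
=I+\widehat C_{M,\lambda}^{-1}\bigl(C_M-\widehat C_M\bigr),
\]
so it suffices to bound $\|\widehat C_{M,\lambda}^{-1}(C_M-\widehat C_M)\|$. I would factorise this as
\[
\widehat C_{M,\lambda}^{-1}\bigl(C_M-\widehat C_M\bigr)
=\widehat C_{M,\lambda}^{-1/2}\cdot\bigl(\widehat C_{M,\lambda}^{-1/2}C_{M,\lambda}^{1/2}\bigr)\cdot C_{M,\lambda}^{-1/2}\bigl(C_M-\widehat C_M\bigr),
\]
use $\|\widehat C_{M,\lambda}^{-1/2}\|\le\lambda^{-1/2}$ deterministically, invoke Proposition \ref{Opbound2} for the middle factor (giving $\le 2$ on an event of probability $\ge 1-\delta/3$ under the stated $n,M$ conditions), and Proposition \ref{Opbound4} for the last factor, yielding on a further event of probability $\ge 1-\delta/3$
\[
\bigl\|\widehat C_{M,\lambda}^{-1}(C_M-\widehat C_M)\bigr\|
\le\frac{4\kappa\log(6/\delta)}{\lambda n}
+\sqrt{\frac{16\kappa^{2}\mathcal N_{\mathcal L_M}(\lambda)}{\lambda n}}\log(6/\delta).
\]

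Next I would replace $\mathcal N_{\mathcal L_M}(\lambda)$ by $\mathcal N_{\mathcal L_\infty}(\lambda)$ via Proposition \ref{Opbound3}, which costs a factor $4(1+2\log(6/\delta))$ on an event of probability $\ge 1-\delta/3$; together with the elementary estimates $1+2\log(2/\delta)\le 3\log(6/\delta)$ and $\log(2/\delta)\le\log(6/\delta)$ this turns the square-root term into
\[
\sqrt{\frac{192\kappa^{2}\mathcal N_{\mathcal L_\infty}(\lambda)\log(6/\delta)}{\lambda n}}\log(6/\delta).
\]
The ``constant'' term $\frac{4\kappa\log(6/\delta)}{\lambda n}$ is then absorbed using the standing assumption $n\ge 8\kappa^{2}(\log(6/\delta)+\tilde\beta)/\lambda$: since $\kappa\ge 1$ and $\tilde\beta\ge 0$, this forces $\frac{4\kappa\log(6/\delta)}{\lambda n}\le\tfrac12$. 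Combining with the identity at the top and a union bound over the three events gives the first inequality with probability at least $1-\delta$.

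For the second inequality I would simply plug the strengthened sample-size condition $n\ge 768\kappa^{2}\mathcal N_{\mathcal L_\infty}(\lambda)\log^{3}(6/\delta)/\lambda$ into the square-root term: it becomes $\sqrt{192/768}\,\log(6/\delta)/\log(6/\delta)=\tfrac12$, so the total bound is $\tfrac32+\tfrac12=2$. The only mildly delicate point is bookkeeping: making sure the numerical constants ($4$, $16$, $192$, $768$) propagate correctly through the three factorisation steps and that the $\delta$-splittings in Propositions \ref{Opbound2}--\ref{Opbound4} are rescaled so the final probability is $1-\delta$ under exactly the stated conditions on $n$ and $M$. No new probabilistic ingredient is needed beyond what is already established earlier in the appendix.
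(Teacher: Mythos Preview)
Your proposal is correct and follows essentially the same route as the paper: the identity $\widehat C_{M,\lambda}^{-1}C_{M,\lambda}=I+\widehat C_{M,\lambda}^{-1}(C_M-\widehat C_M)$, the same three-factor splitting through $\widehat C_{M,\lambda}^{-1/2}$, $\widehat C_{M,\lambda}^{-1/2}C_{M,\lambda}^{1/2}$ and $C_{M,\lambda}^{-1/2}(C_M-\widehat C_M)$, and then Propositions \ref{Opbound2}, \ref{Opbound3}, \ref{Opbound4} combined by a union bound, with the $n$-conditions used exactly as you describe to absorb the $4\kappa\log(6/\delta)/(\lambda n)$ term into $1/2$ and to reduce the square-root term to $1/2$ in the second part. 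The only cosmetic difference is that the paper first writes the bounds with $\log(2/\delta)$ and then rescales $\delta\mapsto\delta/3$ at the end, whereas you build in the $\delta/3$ split from the start.
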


\begin{proof}
Using Proposition \ref{Opbound4} we obtain with probability at least $1-\delta$,
\begin{align*}
&\left\|\widehat{C}_{M,\lambda}^{-1} C_{M,\lambda}\right\|\\
&\leq\left\|\widehat{C}_{M,\lambda}^{-1}\left(\widehat{C}_{M}-C_{M}\right)\right\|_{HS}+1\\
&\leq\frac{1}{\sqrt{\lambda}} \left\|\widehat{C}_{M,\lambda}^{-\frac{1}{2}} C_{M,\lambda}^{\frac{1}{2}}\right\|\left\|C_{M,\lambda}^{-\frac{1}{2}}\left(\widehat{C}_{M}-C_{M}\right)\right\|_{HS}+1\\
&\leq \frac{1}{\sqrt{\lambda}} \left\|\widehat{C}_{M,\lambda}^{-\frac{1}{2}} C_{M,\lambda}^{\frac{1}{2}}\right\| \left(\frac{2\kappa}{\sqrt{\lambda} n}+\sqrt{\frac{ 4\kappa^2 \mathcal{N}_{\mathcal{L}_M}(\lambda) }{ n}}\right)\log \frac{2}{\delta}+1.
\end{align*}
From Proposition \ref{Opbound2} we have with probability at least $1-\delta$,
\begin{align}
\left\|\widehat{C}_{M,\lambda}^{-\frac{1}{2}}C_{M,\lambda}^{\frac{1}{2}}\right\| \leq 2\label{cond2}
\end{align}

and therefore
\begin{align}
\left\|\widehat{C}_{M,\lambda}^{-1} C_{M,\lambda}^{1}\right\|\leq \frac{2}{\sqrt{\lambda}} \left(\frac{2\kappa}{\sqrt{\lambda} n}+\sqrt{\frac{ 4\kappa^2 \mathcal{N}_{\mathcal{L}_M}(\lambda) }{ n}}\right)\log \frac{2}{\delta}.\label{ineqT2ii}
\end{align}

From Proposition \ref{Opbound3} we have
\begin{align}
\mathcal{N}_{\mathcal{L}_{M}}(\lambda)\leq  \left(1+2\log\frac{2}{\delta}\right)4\mathcal{N}_{\mathcal{L}_{\infty}}(\lambda).\label{cond3}
\end{align}

Plugging this bound into \eqref{ineqT2ii} leads to
\begin{align}
\left\|\widehat{C}_{M,\lambda}^{-1} C_{M,\lambda}^{1}\right\|\leq \frac{2}{\sqrt{\lambda}} \left(\frac{2\kappa}{\sqrt{\lambda} n}+\sqrt{\frac{ 16\kappa^2 \left(1+2\log\frac{2}{\delta}\right)\mathcal{N}_{\mathcal{L}_{\infty}}(\lambda) }{ n}}\right)\log \frac{2}{\delta}+1.\label{resultopbound}
\end{align}

The above inequality therefore holds if we condition on the events (\eqref{cond1},  \eqref{cond2}, \eqref{cond3}). 
Using Proposition \ref{conditioning} therefore shows that the above inequality \eqref{resultopbound} holds with probability at least $1-3\delta$ . Redefining $\delta=3\delta$ and using $\left(1+2\log\frac{6}{\delta}\right)\leq 3\log\frac{6}{\delta}$ proves that wit probability at least $1-\delta$,
\begin{align*}
\left\|\widehat C_{M, \lambda}^{-1}C_{M,\lambda}\right\|\leq 1+ \left(\frac{4\kappa}{\lambda n}+\sqrt{\frac{ 192 \kappa^2 \mathcal{N}_{\mathcal{L}_{\infty}}(\lambda)\log\frac{6}{\delta} }{ \lambda n}}\right)\log \frac{6}{\delta}.
\end{align*}
In the last step we use that $n\geq \frac{8\kappa^2 \log\frac{6}{\delta}}{\lambda}$ to obtain 
\begin{align*}
\left\|\widehat C_{M, \lambda}^{-1}C_{M,\lambda}\right\| \leq \frac{3}{2} + \sqrt{\frac{ 192 \kappa^2 \mathcal{N}_{\mathcal{L}_{\infty}}(\lambda)\log\frac{6}{\delta} }{ \lambda n}}\log \frac{6}{\delta}.
\end{align*}

If we additionally assume that $n\geq \frac{768 \kappa^2 \mathcal{N}_{\mathcal{L}_{\infty}}(\lambda)\log^3 \frac{6}{\delta}}{\lambda}$ we obtain that the last term is bounded by $2$. 
\end{proof}

%%%%%%%%%%%%%%%%%%%%%%%%%%%%%%%%%%%%%%%%%%%%%%%%%%%%%%%%%%%%%%%%%%%%%

\vspace{0.3cm}

\begin{proposition}
\label{prop:intermediate}
Let $M\geq \frac{8 (d+2)\kappa^2 \beta_\infty(\lambda)}{\lambda}$. 
We have with probability at least $1-\delta$ that the following event holds true:
\[
\left\|\mathcal{C}_{M, \lambda}^{-1 / 2}\left(\widehat{\mathcal{Z}}_M^* y-\mathcal{Z}_M^* g_\rho\right)\right\|
 \leq 
2\left(\frac{\kappa}{\sqrt{\lam} n}+ C\cdot \sqrt{\frac{\log(4/\delta) \cN_{\cL_{\infty}}(\lam)}{n}}\right) \log \left(\frac{4}{\delta}\right) \;.
\]
\end{proposition}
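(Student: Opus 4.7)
The plan is to apply the Hilbert space Bernstein inequality (Proposition \ref{concentrationineq0}) to the i.i.d. sum representation of the target quantity, and then replace the effective dimension of $\cL_M$ by that of $\cL_\infty$ via Proposition \ref{Opbound3}. Observe that, writing $z_i = (x_i, y_i)$,
\[
\cC_{M,\lambda}^{-1/2}\bigl(\widehat{\cZ}_M^* \mathbf{y} - \cZ_M^* g_\rho\bigr)
= \frac{1}{n}\sum_{i=1}^n W_i - \mathbb{E}[W_1],
\]
where $W_i \bydef \cC_{M,\lambda}^{-1/2}\, y_i \,\nabla g_{\theta_0}(x_i) \in \mathbb{R}^{(d+1)M}$. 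Note $\mathbb{E}[W_1] = \cC_{M,\lambda}^{-1/2}\mathbb{E}[Y \nabla g_{\theta_0}(X)] = \cC_{M,\lambda}^{-1/2}\cZ_M^* g_\rho$ by the tower property and the fact that $g_\rho(x) = \mathbb{E}[Y \mid X=x]$.

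First I would bound the a.s.\ quantity $B$ in \eqref{cons}. Using Assumption \ref{ass:input} and the pointwise bound $\|\nabla g_{\theta_0}(x)\|_\Theta^2 = K_M(x,x) \leq \kappa^2$, I get
\[
\|W_1\|_\Theta \leq |y_1|\,\bigl\|\cC_{M,\lambda}^{-1/2}\nabla g_{\theta_0}(x_1)\bigr\|_\Theta \leq \frac{C_Y \kappa}{\sqrt{\lambda}},
\]
so one may take $B = 2 C_Y \kappa/\sqrt{\lambda}$. For the variance proxy $V^2$, compute
\[
\mathbb{E}\|W_1\|_\Theta^2 = \mathbb{E}\Bigl[y_1^2\, \nabla g_{\theta_0}(x_1)^\top \cC_{M,\lambda}^{-1} \nabla g_{\theta_0}(x_1)\Bigr] \leq C_Y^2 \cdot \operatorname{tr}\bigl(\cC_{M,\lambda}^{-1}\cC_M\bigr) = C_Y^2 \cdot \cN_{\cL_M}(\lambda),
\]
using that $\cC_M = \mathbb{E}[\nabla g_{\theta_0}(X)\nabla g_{\theta_0}(X)^\top]$ and the cyclicity of the trace, together with the known identity $\operatorname{tr}(\cC_{M,\lambda}^{-1}\cC_M) = \operatorname{tr}(\cL_{M,\lambda}^{-1}\cL_M) = \cN_{\cL_M}(\lambda)$.

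Next I would remove the dependence on $\cN_{\cL_M}(\lambda)$. By Proposition \ref{Opbound3}, the hypothesis $M \geq 8(d+2)\kappa^2\beta_\infty(\lambda)/\lambda$ ensures that with probability at least $1-\delta/2$,
\[
\cN_{\cL_M}(\lambda) \leq 4\bigl(1 + 2\log(4/\delta)\bigr)\cN_{\cL_\infty}(\lambda) \lesssim \log(4/\delta)\,\cN_{\cL_\infty}(\lambda),
\]
so one may take $V^2 = C\, C_Y^2 \log(4/\delta)\,\cN_{\cL_\infty}(\lambda)$ on this event.

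Finally, Proposition \ref{concentrationineq0} applied conditionally on the event above (the $x_i$'s determine $\cC_M$ but Bernstein is applied to $(W_i - \mathbb{E}W_1)$, which requires only the moment bounds just established) yields, with probability at least $1 - \delta/2$,
\[
\Bigl\|\cC_{M,\lambda}^{-1/2}(\widehat{\cZ}_M^*\mathbf{y} - \cZ_M^* g_\rho)\Bigr\|_\Theta
\leq 2\left(\frac{B}{n} + \frac{V}{\sqrt n}\right)\log(4/\delta).
\]
A union bound (Proposition \ref{conditioning}) combines the two events at probability $1-\delta$, and absorbing constants into $C$ delivers the stated inequality. The only mildly delicate point is the conditioning: one must be careful that the Bernstein application can be carried out after fixing the event on which $\cN_{\cL_M}(\lambda)$ is controlled; however, since the a.s.\ bound $B$ and the second-moment bound $V^2$ hold deterministically (the latter via the high-probability bound on $\cN_{\cL_M}(\lambda)$), the standard two-step conditioning argument goes through without issue. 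I expect the sharp tracking of constants in the second moment—particularly obtaining the correct $\cN_{\cL_\infty}(\lambda)$ scaling rather than a crude dimension-dependent one—to be the main technical point.
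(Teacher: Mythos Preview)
Your approach is essentially identical to the paper's: the paper simply cites \cite[Lemma~6]{rudi2017generalization} for the Bernstein step (which is exactly your explicit application of Proposition~\ref{concentrationineq0} with $B \asymp \kappa/\sqrt{\lambda}$ and $V^2 \asymp \cN_{\cL_M}(\lambda)$) and then invokes Proposition~\ref{Opbound3} to replace $\cN_{\cL_M}(\lambda)$ by $\cN_{\cL_\infty}(\lambda)$. One small slip in your parenthetical: it is the random initialization $\theta_0$ (the $b_m^{(0)}$'s), not the $x_i$'s, that determines $\cC_M$; the conditioning works precisely because the event controlling $\cN_{\cL_M}(\lambda)$ depends only on $\theta_0$, so conditionally on $\theta_0$ the $W_i$ remain i.i.d.\ in the data and Bernstein applies cleanly.
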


\begin{proof}[Proof of Proposition \ref{prop:intermediate}]
This follows from \cite[Lemma 6]{rudi2017generalization} and Proposition \ref{Opbound3} with 
\[
\mathcal{N}_{\mathcal{L}_{M}}(\lambda)\leq 
 \left(1+2\log\frac{4}{\delta}\right)4\mathcal{N}_{\mathcal{L}_{\infty}}(\lambda) 
 \leq 12 \log(4/\delta)\;\cN_{\cL_{\infty}}(\lam) \;. 
\]
\end{proof}

\end{document}